\theoremstyle{plain}
\newtheorem{theorem}{Theorem}[section]
\newtheorem{lemma}{Lemma}[section]
\crefname{corollary}{corollary}{corollaries}
\theoremstyle{definition}
\newtheorem{definition}{Definition}[section]
\crefname{definition}{definition}{definitions}
\newtheorem{assumption}{Assumption}[section]
\crefname{assumption}{assumption}{assumptions}
\newtheorem{example}{Example}[section]
\crefname{example}{example}{examples}
\theoremstyle{remark}
\newtheorem{remark}{Remark}[section]
\crefname{remark}{remark}{remarks}
\newcommand{\tauk}{\tau{(k)}}
\newcommand{\taui}{\tau_i}
\newcommand{\tauki}{\tau_i{(k)}}
\newcommand{\tauhz}{\hat{\tau}{(0)}}
\newcommand{\tauhk}{\hat{\tau}{(k)}}
\newcommand{\tauhzi}{\hat{\tau}_i{(0)}}
\newcommand{\tauhki}{\hat{\tau}_i{(k)}}
\newcommand{\sigmahksi}{\hat{\sigma}_{i}^{2}(k)}
\newcommand{\sigmahki}{\hat{\sigma}_{i}(k)}
\newcommand{\sigmaks}{\sigma^{2}(k)}
\newcommand{\sigmaksi}{\sigma_{i}^{2}(k)}
\newcommand{\sigmahzsi}{\hat{\sigma}_{i}^{2}(0)}
\newcommand{\sigmazsi}{\sigma_{i}^{2}(0)}
\newcommand{\muki}{\mu_i(k)}
\newcommand{\Lki}{\hat{L}_i{(k)}}
\newcommand{\Uki}{\hat{U}_i{(k)}}
\newcommand{\Thnki}{\hat{T}_{N}(k, i)}
\title{Falsification before Extrapolation \\ in Causal Effect Estimation}
\author{%
  Zeshan Hussain\thanks{Equal contribution. Order determined alphabetically. For code and instructions on data access, please visit: \url{https://github.com/clinicalml/rct-obs-extrapolation}} \\
  MIT CSAIL \& IMES\\
  Cambridge, MA \\
  \texttt{zeshanmh@mit.edu} \\
   \And
   Michael Oberst${^*}$    \\
   MIT CSAIL \& IMES \\
   Cambridge, MA \\
   \texttt{moberst@mit.edu} \\
   \AND
   Ming-Chieh Shih${^*}$ \\
   National Dong Hwa University \\
   Hualien, Taiwan \\
   \texttt{mcshih@gms.ndhu.edu.tw} \\
   \And
   David Sontag \\
   MIT CSAIL \& IMES \\
   Cambridge, MA \\
   \texttt{dsontag@csail.mit.edu} \\
}
\begin{document}

\maketitle

\begin{abstract}
\textit{Randomized Controlled Trials} (RCTs) represent a gold standard when developing policy guidelines. 
However, RCTs are often narrow, and lack data on broader populations of interest.  Causal effects in these populations are often estimated using observational datasets, which may suffer from unobserved confounding and selection bias.  Given a set of observational estimates (e.g. from multiple studies), we propose a meta-algorithm that attempts to reject observational estimates that are biased. We do so using \textit{validation effects}, causal effects that can be inferred from both RCT and observational data. After rejecting estimators that do not pass this test, we generate conservative confidence intervals on the \textit{extrapolated} causal effects for subgroups not observed in the RCT\@. Under the assumption that at least one observational estimator is asymptotically normal and consistent for both the validation and extrapolated effects, we provide guarantees on the coverage probability of the intervals output by our algorithm. 
To facilitate hypothesis testing in settings where causal effect transportation across datasets is necessary, we give conditions under which a doubly-robust estimator of group average treatment effects is asymptotically normal, even when flexible machine learning methods are used for estimation of nuisance parameters.
We illustrate the properties of our approach on semi-synthetic and real world datasets, and show that it compares favorably to standard meta-analysis techniques.
\end{abstract}

\section{Introduction}%
\label{sec:intro}
Policy guidelines often rely on conclusions from Randomized Controlled Trials (RCTs), whether considering treatment decisions in healthcare, classroom interventions in education, or social programs in economics \citep{keum2019vitamin,cloyd2020neoadjuvant, prete2018robotic}. In healthcare, when a target population has reasonable overlap with the inclusion criteria of RCTs, current clinical treatment guidelines rely primarily on RCTs \citep{guyatt2008grade,guyatt2008quality}. For target populations not well-represented in RCTs, observational studies are often used to infer treatment effects. However, different observational estimates can give conflicting conclusions. We give an example of this tension when looking at a new chemotherapy for multiple myeloma.

\begin{example}[\textit{Carfilzomib-based Combination Therapy for Newly Diagnosed Multiple Myeloma (NDMM)}]\label{ex:multiple_myeloma}
Until 2020, the effect of Carfilzomib-based combination therapy in the NDMM subpopulation had not been studied via an RCT. However, a trial (ASPIRE) in 2015 measured the effect of Carfilzomib-based therapy on survival in Relapsed \& Refractory Multiple Myeloma (RRMM) patients \citep{stewart2015carfilzomib}. The CoMMpass trial, an observational dataset, was also available in which the Carfilzomib regimen was given to both NDMM and RRMM patients \citep{usrelating}. Several analyses on the CoMMpass dataset to estimate the effect of Carfilzomib-based therapy on NDMM patients %
led to different, sometimes opposing, conclusions on the benefit of the therapy in this subpopulation \citep{li2018comparing,landgren2018carfilzomib}.
\end{example}

A traditional meta-analysis approach would combine observational estimates under the assumption that differences arise only due to random variation, and not e.g., differences in confounding bias \citep[Section 10.10.4.1]{higgins2019cochrane}. This is unlikely to be true in practice. For instance, in Example~\ref{ex:multiple_myeloma}, the two studies in question made different choices in e.g., how to adjust for confounders.
\textit{In this paper, we relax the assumption that all observational estimates are valid.} Instead, we assume that at least one observational estimate is valid across all subpopulations. In the context of Example~\ref{ex:multiple_myeloma}, we might assume that at least one of the candidate observational studies yields consistent and asymptotically normal estimates of the effects in both the NDMM and RRMM populations. 
While we cannot \textit{verify} that any given estimator is valid for all subpopulations, we can \textit{falsify} this claim of validity if an estimator is inconsistent for the causal effects identified by the RCT (e.g., RRMM).
Hence, we use the term \textit{validation effects} to refer to causal effects in subpopulations that overlap between the observational and randomized datasets (e.g., RRMM), and use the term \textit{extrapolated effects} to refer to those only covered by observational datasets (e.g., NDMM).

We propose a meta-algorithm that combines two key ideas: falsification of estimators, and pessimistic combination of confidence intervals.
We first aim to falsify candidate estimators using hypothesis testing, rejecting those that fail to replicate the RCT estimates of validation effects.
In Section~\ref{sub:motivating_examples}, we motivate this approach with examples of observational estimates based on different causal assumptions, showing that hypothesis tests based on asymptotic normality can be applied even when causal assumptions fail to hold. %
Then, we combine accepted estimators to get confidence intervals on the extrapolated effects.  Since failure to reject does not imply validity,\footnote{For instance, we could fail to reject due to low power, or because falsification is impossible, due to differences in causal structure across subpopulations, as discussed in Appendix~\ref{app:intro-examples}.} we return an interval that contains every confidence interval of the accepted estimators.
We demonstrate theoretically that if \textit{at least one} candidate estimator is consistent for both the validation and extrapolated effects, then the intervals returned by our algorithm provide valid asymptotic coverage of the true effects.

In scenarios where the covariate distribution differs across datasets, estimators that \enquote{transport} the causal effect should be used~\cite{Pearl2014-lm,Dahabreh2019-xg, Dahabreh2020-ua}. Furthermore, in the case of high-dimensional covariates, flexible machine learning methods are required to estimate nuisance functions, which can affect the hypothesis tests due to their slower convergence rates.   In light of this, we adapt estimators of the average treatment effect in this setting to provide estimates of group-wise treatment effects, and show (via the framework of double machine learning \citep{Chernozhukov2018-nk,Semenova2021-zf}) that this estimator enjoys asymptotic normality under mild conditions on convergence rates of the nuisance function estimators.  Our conclusions are supported by semi-synthetic experiments, based on the IHDP dataset, as well as real-world experiments, based on clinical trial and observational data from the Women's Health Initiative (WHI), that demonstrate various characteristics of our meta-algorithm.

\section{Setup and Motivating Examples}%
\label{sec:background}

\subsection{Notation and Assumptions}%
\label{sub:notation_and_setup}

Let $Y \in \cY$ denote an outcome of interest, and $A \in \{0, 1\}$ denote a binary treatment.  We use $Y_a$ to denote the potential outcome of an individual under treatment $A = a$.  We use $X \in \cX$ to denote all other covariates.  To distinguish between different sampling distributions (i.e., datasets), we use the random variable $D \in \{0, \ldots J\}$, where $J \geq 1$ is the number of observational datasets, and $D = 0$ is reserved for the sampling distribution of the randomized trial.  %
We let $\P(Y_1, Y_0, Y, A, X, D)$ denote the joint distribution over all variables, including unobserved potential outcomes.  For instance, $\P(Y_1, Y_0, X \mid D = 0)$ denotes the distribution of potential outcomes and covariates in the RCT\@.  

We seek to estimate conditional average treatment effects for a finite set of $I$ subgroups $\{\cG_i\}_{i=1}^{I}$. We assume subgroups are defined a-priori by a function $G: \cX \mapsto \{1, \ldots, I\}$, such that $G = i$ indicates that $X \in \cG_i$.  We use observational data precisely because not all groups are supported on the RCT dataset. To this end, we use $\cI_{R} = \{i: \P(G = i \mid D = 0) > 0\}$ to denote the set of subgroups supported on the RCT dataset, and we let $\cI_{O}$ denote the complement $\{1, \ldots, I\} \setminus \cI_{R}$.  We use $\abs{\cI_{R}}$ to denote the cardinality of a set, and assume that every observational dataset has support for all groups.

\begin{restatable}[Support]{assumption}{Support}\label{asmp:support}
We assume that $\P(G = i, D = j) > 0$ for all $i \in \{1, \ldots, I\}$ and $j \in \{1, \ldots, J\}$, i.e., all observational datasets ($D \geq 1$) have support for all groups.
\end{restatable}

\begin{definition}[Validation and Extrapolated Effects]\label{def:validation_extrapolation_effects}
We define the group average treatment effect (GATE)\footnote{We use this term in line with the literature \citep{Chernozhukov2017-gc,Jacob2019-bt,Park2019-hr,Semenova2021-zf} and to distinguish it from the CATE function.} as
\begin{equation}\label{eq:definition_gate}
\tau_i \coloneqq \begin{cases}
  \E[Y_1 - Y_0 \mid G = i, D = 0], &\ \text{if } i \in \cI_{R} \\
  \E[Y_1 - Y_0 \mid G = i, D = 1], &\ \text{if } i \in \cI_{O} \\
\end{cases}
\end{equation}
and refer to $\tau_i$ for $i \in \cI_{R}$ as a validation effect, and $\tau_i$ for $i \in \cI_{O}$ as an extrapolated effect.
\end{definition}

Here, we focus on discrete subgroups, in part to reflect the practical reality of comparing RCTs to observational studies, where we may have large observational datasets with rich covariates but only have access to the published results of the RCT, which often provides estimates (with confidence intervals) for subgroup effects but not the raw data itself \citep[Figure 4, for example]{SPRINT_Research_Group2015-gb}. In Def.~\ref{def:validation_extrapolation_effects}, we allow for the fact that different datasets may have different distributions of effect modifiers.  To have a well-defined effect of interest, we have chosen the reference dataset $D = 1$ arbitrarily, but in principle we could choose any of the observational datasets. We discuss further nuances of this definition under Assumption~\ref{asmp:one_good_estimator}.
By Def.~\ref{def:validation_extrapolation_effects}, we often write these effects as a vector $\tau \in \R^{I}$. We use $\tauhk \in \R^{I}$ to denote an estimator, where $k \in \{0, \ldots, K\}$, with $\tauhz$ reserved to denote the estimator derived from the RCT data. The remainder are observational estimators.\footnote{We define $\tauhz$ as a vector in $\R^{I}$ for simplicity of notation, allowing the entries $\tauhzi, i \in \cI_{O}$ to be arbitrary.} In general, we use \enquote{hat} notation to refer to estimators, and refer to their population quantities without a hat. We use $N_k$ to denote the number of samples used by each estimator. %
Throughout, we will assume that the RCT estimator is consistent. %
\begin{assumption}\label{asmp:rct_is_good}
  The RCT estimator $\tauhz$ is a consistent estimator of the (supported) dimensions of $\tau$, such that for each $i \in \cI_{R}$, $\tauhzi$ is consistent for $\taui$.
\end{assumption}
Below, our central assumption states that at least one observational estimator also enjoys consistency.  We discuss examples of specific observational estimators in Section~\ref{sub:motivating_examples}.
\begin{assumption}\label{asmp:one_good_estimator}
  There exists at least one observational estimator $\tauhk \in \R^{I}$, $k \geq 1$ that is a consistent estimator of $\tau \in \R^{I}$, such that for each $i \in \{1, \ldots, I\}$, $\tauhki$ is consistent for $\tau_i$.
\end{assumption}
\begin{remark}
  Assumption~\ref{asmp:one_good_estimator} is our primary non-trivial assumption, and in Appendix~\ref{sec:conditions_for_valid_observational_randomized_comparisons}, we give one example of causal assumptions (for a given observational study) under which the entire GATE vector $\tau$ is \textbf{identifiable} from observational data, and give an \textbf{estimator} of the resulting observational quantity which is asymptotically normal \citep{Pearl2011-rt,Pearl2014-lm,Pearl2015-ta,Dahabreh2020-ua, Degtiar2021-bb}. In order to compare observational estimates with experimental ones, Assumption~\ref{asmp:one_good_estimator} requires not only that the observational data is free of confounding, but also that the causal effect can be transported to the RCT population.  This can be done so long as relevant effect modifiers are observed in both the RCT and observational study, but the latter requirement is satisfied automatically (without requiring RCT data) if e.g., treatment effects are constant within each subgroup $G$, or if the distribution of effect modifiers is the same between the RCT and observational study, in which case $\E[Y_1 - Y_0 \mid D, G] = \E[Y_1 - Y_0 \mid G]$.  This represents one (conservative) failure mode of our approach, in which we may reject an observational estimator due to failures in transportability, even if it yields unbiased estimates of the extrapolated effects. %
\end{remark}
Assumptions~\ref{asmp:rct_is_good} and \ref{asmp:one_good_estimator} imply that there exists an observational estimator $\tauhk$ such that both $\tauhki$ and the RCT estimate $\tauhzi$ are both consistent for the validation effects $\tau_i$, $\forall i \in \cI_R$. To validate this implication in finite samples, we will construct a statistical test to compare $\tauhki$ and $\tauhzi$. %
Our general approach could be modified to use any valid test, %
but to facilitate further analysis, as well as explicit construction of confidence intervals, we additionally assume the following:
\begin{assumption}\label{asmp:all_asymptotic_normal}
  All GATE estimators are pointwise\footnote{Here, \enquote{pointwise} refers to the fact that each subgroup effect estimate is asymptotically normal.} asymptotically normally distributed.
  \begin{equation}
    \sqrt{N_k} (\tauhki - \tauki)/\sigmahki \cid \cN(0, 1)
  \end{equation}
  for all $k \in \{0,,...,K\}$, and for all $i$ in $\cI_R$ if $k = 0$ (the RCT estimator), and otherwise for all $i \in \cI_R \cup \cI_O$.
  Here, $\cid$ denotes convergence in distribution, and $\hat{\sigma}^2_i(k)$ is an estimate of the variance that converges in probability to $\sigma^2_i(k)$, the asymptotic variance of $\sqrt{N_k}(\tauhki - \tauki)$.
\end{assumption}
Assumption~\ref{asmp:all_asymptotic_normal} requires each estimator $\tauhk$ to be consistent and asymptotically normal for some $\tauk$, which may \textbf{not} be equal to $\tau$.  This is not a particularly strong assumption, as we discuss below.

\subsection{Asymptotic Normality of Biased Estimators}%
\label{sub:motivating_examples}

In this section, we give two simple examples to illustrate the principle that multiple estimators $\tauhk$ may be asymptotically normal, even if they are asymptotically biased (i.e., $\tauk \neq \tau$).  In both cases, there is a distinction between the \textit{statistical} assumptions required to obtain asymptotic normality, and the \textit{causal} assumptions required for $\tauk$ to identify the causal effect $\tau$.  For simplicity in both examples, we restrict to the setting of comparing one-dimensional estimates $\tauk \in \R$, which estimate the GATE, $\tau$, in a single group $G = 1$ covered by all datasets.  The statistical claims here also extend to GATE estimation with multiple groups \citep{Semenova2021-zf}.

\begin{example}[Variation in confounding across datasets]\label{ex:double_ml_ate}
  Suppose that there is one estimator of the GATE per observational dataset, and each estimator seeks to estimate the population quantity, $\tauk = \E[g_k(1, X_k) - g_k(0, X_k) \mid G = 1, D = k]$,
  where $X_k$ denotes the controls used in each study, and $g_k(A, X_k) \coloneqq \E[Y \mid A, X_k, D = k]$ and $m_k(X_k) \coloneqq \P(A = 1 \mid X_k, D = k)$. We assume that $\eta < m_k(x) < 1 - \eta$ for some $\eta > 0$ for all $x, k$. Note that $\tauk$ is only a \textit{statistical} quantity: identifying this with the \textit{causal} quantity (the GATE) requires additional assumptions like unconfoundedness, that $Y_a \indep A \mid X_k$ for the given dataset $D$. This assumption may hold for some datasets, but not others, particularly if the set of observed confounders $X_k$ differs across datasets. 
  
  Regardless of the interpretation of $\tauk$, one can construct estimators of it that are consistent and asymptotically normal using flexible machine learning estimators.\footnote{A rich literature focuses on establishing such results, beyond the approach in this example \citep{Athey2016-xa,Farrell2018-ca,Wager2018-ah,Oprescu2019-ui,Athey2019-dl}.} One approach, given in \citet{Chernozhukov2018-nk}, is to use double machine learning (DML), which employs cross-fitting to produce estimates $\tauhk$ based on the doubly-robust score \citep{Robins1995-cn}, while using plug-in estimates $\hat{g}_k, \hat{m}_k$ based on machine learning models.  This approach achieves asymptotic normality, $\sqrt{N_k} (\tauhk - \tauk) / \hat{\sigma}^{2}(k) \cid \cN(0, 1)$,
  under regularity conditions that allow for flexible machine learning estimators that converge at slower than parametric rates, and where $\hat{\sigma}^{2}(k)$ converges in probability to the variance of the doubly robust score \citep[See Theorem 5.1 of ][for additional details]{Chernozhukov2018-nk}. These results hold whether or not $\tauk = \tau$, as discussed in Footnote 9 of \citet{Chernozhukov2018-nk}.  
  For simplicity, we have focused on the case where $\E[Y_1 - Y_0 \mid G = 1]$ is constant across datasets. When this does not hold, certain conditions enable valid transportation of treatment effects across datasets \citep{Degtiar2021-bb} with the use of transported estimators \citep{Dahabreh2020-ua} (see Appendix~\ref{sec:conditions_for_valid_observational_randomized_comparisons} for details).  
\end{example}  

\begin{example}[Selection of Adjustment Strategy]\label{ex:selection_estimators}
  Consider the two causal graphs given in Figure~\ref{fig:two_graphs}, and assume that all variables are binary. Each graph suggests a different identification strategy for the causal effect, $\E[Y \mid do(A = a), G = 1]$.  In Figure~\ref{subfig:my}, this is identified by the (observational) quantity $\E[Y \mid A = a, G = 1]$, and in Figure~\ref{subfig:ay}, by front-door adjustment \citep{Pearl1995-mj} as $\sum_{M} P(M \mid a, G=1) \sum_{A'} \P(Y \mid M, A', G=1) \P(A' \mid G=1)$.
\end{example}

\begin{wrapfigure}{R}{0.55\textwidth}
  \centering
    \begin{subfigure}[t]{0.275\textwidth}
    \centering
    \begin{tikzpicture}[
      obs/.style={circle, draw=gray!90, fill=gray!30, very thick, minimum size=5mm}, 
      uobs/.style={circle, draw=gray!90, fill=gray!10, dotted, minimum size=5mm}, 
      bend angle=30]
      \node[obs] (A) {$A$};
      \node[obs] (M) [right=10pt of A] {$M$};
      \node[obs] (X) [below=8pt of M] {$G$};
      \node[obs] (Y) [right=10pt of M] {$Y$};
      \draw[-latex, thick] (A) -- (M);
      \draw[-latex, thick] (X) -- (A);
      \draw[-latex, thick] (X) -- (M);
      \draw[-latex, thick] (X) -- (Y);
      \draw[-latex, thick] (M) -- (Y);
      \draw[latex-latex, dotted] (M) to[bend left] (Y);
    \end{tikzpicture}
    \caption{}%
    \label{subfig:my}
    \end{subfigure}%
    \begin{subfigure}[t]{0.275\textwidth}
    \centering
      \begin{tikzpicture}[
        obs/.style={circle, draw=gray!90, fill=gray!30, very thick, minimum size=5mm}, 
        uobs/.style={circle, draw=gray!90, fill=gray!10, dotted, minimum size=5mm}, 
        bend angle=30]
      \node[obs] (A) {$A$};
      \node[obs] (M) [right=10pt of A] {$M$};
      \node[obs] (X) [below=8pt of M] {$G$};
      \node[obs] (Y) [right=10pt of M] {$Y$};
      \draw[-latex, thick] (A) -- (M);
      \draw[-latex, thick] (X) -- (A);
      \draw[-latex, thick] (X) -- (M);
      \draw[-latex, thick] (X) -- (Y);
      \draw[-latex, thick] (M) -- (Y);
        \draw[latex-latex, dotted] (A) to[bend left] (Y);
      \end{tikzpicture}
    \caption{}%
    \label{subfig:ay}
    \end{subfigure}
  \caption{(Ex.~\ref{ex:selection_estimators}) In~(\subref{subfig:my}), $M$ and $Y$ are confounded by unobservables (bi-directional dotted arrow). In~(\subref{subfig:ay}), $A$ and $Y$ are confounded, but the causal effect is identified via front-door adjustment.}%
  \label{fig:two_graphs}
\end{wrapfigure} 
  These observational quantities will typically differ: the one that represents the true interventional effect depends on which graph reflects the true causal structure. However, in the case where all variables are discrete and low-dimensional, we can still construct asymptotically normal estimators for both observational quantities.\footnote{This follows from the use of maximum likelihood (i.e., empirical counts) for estimating each conditional distribution, and applying the delta method to the front-door estimator.} For more complex settings (e.g., requiring regularized ML models for estimating conditional distributions) asymptotic normality has been established under certain conditions for general graphs \citep{Bhattacharya2020-ig,Jung2021-uu}

\begin{remark}
In each example, there are multiple estimators available, each asymptotically normal under basic statistical assumptions, but potentially biased in the sense that $\tauk \neq \tau$.  In the first example, this bias occurs if $X$ is not sufficient to control for confounding in all observational datasets.  In the second, this bias arises in a given estimator if the causal graph is incorrectly specified. Assumption~\ref{asmp:one_good_estimator} corresponds to assuming that both the statistical assumptions and causal assumptions hold for one of the candidate estimators, e.g., $X$ is sufficient to control for confounding in at least one study (Example~\ref{ex:double_ml_ate}), or that one of the causal graphs is correct (Example~\ref{ex:selection_estimators}).
\end{remark}

\subsection{Asymptotic Normality of GATE Estimators with Transportation}

Example~\ref{ex:double_ml_ate} assumes that $\E[Y_1 - Y_0 \mid G = i]$ is constant across datasets. In practice, it may be necessary to correct for differences (not captured by group indicators) between the observational and RCT populations. There exist estimators for the ATE in this setting under mild additional assumptions \cite{Dahabreh2020-ua, Dahabreh2019-xg}.  These extend in a straightforward way to estimators of the GATE, but proving asymptotic normality is nuanced in high-dimensional settings when using flexible machine learning methods to estimate nuisance functions.  %
For completeness, inspired by \citet{Semenova2021-zf}, we demonstrate that a doubly-robust GATE estimator for this setting is asymptotically normal under reasonable conditions (Assumption~\ref{asmp:bounded_p} to \ref{asmp:nuisance}). Details on the estimator, and the corresponding proof of normality, are given in Appendix~\ref{sec:experimentdetails}, and may be of independent interest.

\subsection{Testing for Bias under Asymptotic Normality}%
\label{sub:testing_for_bias}

Under Assumption~\ref{asmp:all_asymptotic_normal}, each observational estimate $\tauhki$ can be compared to the estimate from the randomized trial $\tauhzi$ for $i \in \cI_{R}$, the groups with common support.  Since the observational and randomized datasets are distinct, we can conclude that each $\tauhki$ is independent of $\tauhzi$, and use this to test for the hypothesis that $\tauki = \tau_i$.
\begin{restatable}{proposition}{ValidityOfNormalTest}\label{prop:validity_of_normal_test}
  For an observational estimator $\tauhk$, assume Assumptions~\ref{asmp:rct_is_good} and \ref{asmp:all_asymptotic_normal} hold.  Furthermore, let $N = N_k + N_0$ with fixed proportions, where $N_k = \rho N, N_0 = (1 - \rho) N$ for $\rho \in (0, 1)$. Define the test statistic 
  \begin{equation}
    \Thnki \coloneqq \frac{\tauhki - \tauhzi - \muki}{\hat{s}} \label{eq:definition_Tn}
  \end{equation}
  where $\hat{s}^2 \coloneqq \frac{\sigmahksi}{N_k} + \frac{\sigmahzsi}{N_0}$ is the estimated variance, and $\muki \coloneqq \tauki - \tau_i$.  This test statistic converges in distribution to a normal distribution as $N \rightarrow \infty$, $\Thnki \cid \cN(0, 1)$.
\end{restatable}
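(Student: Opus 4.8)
The plan is to reduce the claim to the two one-sample central limit theorems supplied by Assumption~\ref{asmp:all_asymptotic_normal}, combine them using the independence of the randomized and observational samples, and finish with Slutsky's theorem. First I would put the numerator in mean-zero form. Since $\muki = \tauki - \taui$, and since Assumptions~\ref{asmp:rct_is_good} and~\ref{asmp:all_asymptotic_normal} together force the asymptotic centering $\tauzi$ of the RCT estimator to coincide with its probability limit $\taui$ (for $i \in \cI_R$), we get
\begin{equation}
  \tauhki - \tauhzi - \muki = (\tauhki - \tauki) - (\tauhzi - \tauzi).
\end{equation}
Because $N_k = \rho N$ and $N_0 = (1-\rho) N$ with $\rho \in (0,1)$ fixed, rescaling the two statements of Assumption~\ref{asmp:all_asymptotic_normal} from rate $\sqrt{N_k}$ (resp.\ $\sqrt{N_0}$) to the common rate $\sqrt{N}$ yields
\begin{equation}
  \sqrt{N}\,(\tauhki - \tauki) \cid \cN\!\left(0,\ \tfrac{\sigmaksi}{\rho}\right)
  \quad\text{and}\quad
  \sqrt{N}\,(\tauhzi - \tauzi) \cid \cN\!\left(0,\ \tfrac{\sigmazsi}{1-\rho}\right).
\end{equation}

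Next I would use that the randomized dataset ($D = 0$) and observational dataset ($D = k$) are sampled independently, so $\tauhki$ and $\tauhzi$ are independent; hence the pair $\big(\sqrt{N}(\tauhki - \tauki),\, \sqrt{N}(\tauhzi - \tauzi)\big)$ converges jointly in distribution to the product of the two marginal Gaussian limits, i.e.\ to a bivariate normal with independent coordinates. Applying the continuous mapping theorem to the difference map then gives
\begin{equation}
  \sqrt{N}\,\big((\tauhki - \tauki) - (\tauhzi - \tauzi)\big) \cid \cN\!\left(0,\ s^2\right), \qquad s^2 \coloneqq \tfrac{\sigmaksi}{\rho} + \tfrac{\sigmazsi}{1-\rho}.
\end{equation}
For the denominator, observe that $N\hat{s}^2 = \tfrac{N}{N_k}\sigmahksi + \tfrac{N}{N_0}\sigmahzsi = \tfrac{1}{\rho}\sigmahksi + \tfrac{1}{1-\rho}\sigmahzsi$, which converges in probability to $s^2$ by the consistency of $\sigmahksi$ and $\sigmahzsi$ assumed in Assumption~\ref{asmp:all_asymptotic_normal} together with the continuous mapping theorem. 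Since $\Thnki = \sqrt{N}\big((\tauhki - \tauki) - (\tauhzi - \tauzi)\big) / \sqrt{N\hat{s}^2}$, Slutsky's theorem (using $s^2 > 0$) delivers $\Thnki \cid \cN(0,1)$.

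The only non-mechanical step is promoting independence of the two samples to \emph{joint} convergence in distribution of the rescaled errors: marginal convergence of each coordinate plus independence does imply joint convergence to the product law (seen cleanly by factoring characteristic functions over the product probability space), but convergence in distribution is not in general preserved under arbitrary couplings, so this point should be spelled out rather than asserted. Two minor bookkeeping remarks also warrant a sentence each: $\rho N$ need not be an integer, so one reads $N_k = \lfloor \rho N \rfloor$ (irrelevant to the rates), and one needs $\sigmaksi, \sigmazsi > 0$ — equivalently $s^2 > 0$ — for the normalizations in Assumption~\ref{asmp:all_asymptotic_normal} and the concluding Slutsky argument to be well defined.
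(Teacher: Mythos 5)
Your proposal is correct and follows essentially the same route as the paper's proof: rescale both asymptotic normality statements to the common rate $\sqrt{N}$, use independence of the two samples to obtain joint (bivariate normal) convergence, map through the difference, and finish with Slutsky's theorem to substitute the estimated variances. The paper phrases the difference step as an application of the Delta method with $f(x) = x_1 - x_2$, which for a linear map is exactly your continuous-mapping argument; your extra care about promoting marginal convergence plus independence to joint convergence, and about the identification of the RCT centering with $\tau_i$, makes explicit two points the paper's proof leaves implicit.
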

We present the proof for \Cref{prop:validity_of_normal_test} in~\cref{sec:proofs}. This asymptotic normality allows for the construction of simple hypothesis tests.  For instance, one can construct a Wald test for $H_0: \tauki = \tau_i$, with asymptotic level $\alpha$ by setting $\muki = 0$ in Equation~\eqref{eq:definition_Tn} and rejecting $H_0$ whenever, $|\Thnki| > z_{\alpha/2}$,
where $z_{\alpha/2}$ is the $1 - \alpha/2$ quantile of the normal CDF\@. Moreover, the asymptotic power of this test (the probability of correctly rejecting $H_0$) is given by 
\begin{equation}
  1 - \Phi\left(\frac{\muki}{\sigma_{k,0}} + z_{\alpha/2}\right) + \Phi\left(\frac{\muki}{\sigma_{k,0}} - z_{\alpha/2}\right) \label{eq:definition_beta}
\end{equation}
where $\sigma^{2}_{k,0} \coloneqq \frac{\sigmaks_i}{N_k} + \frac{\sigmazsi}{N_0}$ \citep[see Theorems 10.4, 10.6 of ][]{Wasserman2004-qm}.  Likewise, Assumption~\ref{asmp:all_asymptotic_normal} implies an asymptotic $1 - \alpha$ confidence interval for $\tauki$ as
\begin{align}
  [\Lki(\alpha), \Uki(\alpha)] \coloneqq \bigg[&\tauhki - \frac{z_{\alpha/2} \cdot \sigmahki}{\sqrt{N_k}}, \tauhki + \frac{z_{\alpha/2} \cdot \sigmahki}{\sqrt{N_k}}\bigg]\label{eq:definition_lu}
\end{align}

\section{Meta-Algorithm for Conservative Extrapolation}%
\label{sec:algorithm}

\begin{algorithm}[t]
   \caption{Extrapolated Pessimistic Confidence Sets}\label{alg:high_level}
\begin{algorithmic}
   \STATE {\bfseries Input:} Desired coverage $1 - \alpha$. For each $i \in \cI_{R}$, RCT estimate $\tauhzi$ and variance $\sigmahzsi$.  For each $i \in \cI_{R} \cup \cI_{O}$, $K$ candidate estimators $\tauhki$ and variances $\sigmahksi$. Sample sizes $N_0, \ldots, N_K$. 
   \STATE {\bfseries Initialize:} Empty candidate set $\hat{\cC} \gets \varnothing$
   \FOR{$k=1$ {\bfseries to} $K$}
   \STATE{Compute $\Thnki, \forall i \in \cI_{R}$, with $\muki = 0$ (Eq.~\ref{eq:definition_Tn})}
   \STATE \textbf{if} {$\forall i \in \cI_{R}, \abs{\Thnki} \leq z_{\alpha/4\abs{\cI_{R}}}$}, \textbf{then} $\hat{\cC} \gets \hat{\cC} \cup \{k\}$
   \ENDFOR
   \FOR{$i \in \cI_{O}$}
   \STATE $\hat{L}_i \gets \min_{k \in \hat{\cC}} \Lki(\alpha/2)$ and  
   $\hat{U}_i \gets \max_{k \in \hat{\cC}} \Uki(\alpha/2)$ (Eq.~\ref{eq:definition_lu})
   \ENDFOR
   \STATE {\bfseries Return:} $\hat{L}_i, \hat{U}_i$ for each $i \in \cI_{O}$.
\end{algorithmic}
\end{algorithm}

In this section, we more formally introduce our algorithm (\Cref{alg:high_level}).  There are two primary steps: falsification of estimators, and combination of confidence intervals.  First, we attempt to falsify candidate estimators via hypothesis testing, rejecting estimator $k$ whenever we are able to reject the null hypothesis $H_0: \tauki = \tau_i, \forall i \in \cI_{R}$.  We use Bonferroni correction to control the false positive rate of the test.  For the combination of confidence intervals, while we are unlikely to reject the \enquote{correct} estimator if one exists (Assumption~\ref{asmp:one_good_estimator}), we may be unable to reject all \enquote{incorrect} (i.e., biased) estimators. This motivates the combination of confidence intervals (for the extrapolated effects) of the accepted estimators by taking the maximum and minimum bounds over all such intervals.
Our main result characterizes the properties of our procedure, with proof in Appendix~\ref{sec:proofs}.

\begin{restatable}[Properties of Algorithm~\ref{alg:high_level}]{theorem}{Properties}\label{thm:properties_of_procedure}
  Under Assumptions~\ref{asmp:support} and~\ref{asmp:rct_is_good}, the output of Algorithm~\ref{alg:high_level} has the following asymptotic properties as $N \rightarrow \infty$, where $N$ denotes the total sample size, and the samples used for all estimators are of the same order $N_k = \rho_k N_0, \forall k \geq 1$, for some $\rho_k > 0$.
  \begin{enumerate}
    \item Under Assumptions~\ref{asmp:one_good_estimator} and~\ref{asmp:all_asymptotic_normal}, for each $i \in \cI_{O}$,
      \begin{equation}
        \lim_{N \rightarrow \infty} \P(\tau_i \in [\hat{L}_i, \hat{U}_i]) \geq 1 - \alpha
      \end{equation}
    \vspace{-1.5em}
    \item Under Assumption~\ref{asmp:all_asymptotic_normal}, for each estimator where $\tauki \neq \tau_i$ for some $i \in \cI_{R}$, 
      \begin{equation}
        \lim_{N \rightarrow \infty} \P(k \in \hat{\cC}) = 0
      \end{equation}
    \vspace{-2em}
  \end{enumerate}
\end{restatable}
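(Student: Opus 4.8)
The plan is to prove the two statements separately, since they rely on different parts of the setup. For Part~2 (the power / rejection statement), fix an estimator $k$ with $\tauki \neq \tau_i$ for some $i \in \cI_{R}$; call this bad index $i^\star$. By \Cref{prop:validity_of_normal_test}, the centered statistic $(\tauhki[i^\star] - \tauhzi[i^\star] - \mu_{i^\star}(k))/\hat{s} \cid \cN(0,1)$, where $\mu_{i^\star}(k) = \tauki[i^\star] - \tau_{i^\star} \neq 0$. The algorithm, however, computes $\Thnki[i^\star]$ with $\muki = 0$, i.e. it uses $(\tauhki[i^\star] - \tauhzi[i^\star])/\hat{s}$. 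First I would write $\hat s^2 = \sigmahksi[i^\star]/N_k + \sigmahzsi[i^\star]/N_0$ and note that under the stated scaling $N_k = \rho_k N_0$ with $N = \sum N_k$, we have $\hat s = \Theta_p(N^{-1/2})$ while the numerator shift $\mu_{i^\star}(k)$ is a fixed nonzero constant; hence the (non-centered) statistic $\Thnki[i^\star] = \mathcal N(0,1)\text{-term} + \mu_{i^\star}(k)/\hat s$ diverges in probability in absolute value. Since the threshold $z_{\alpha/(4\abs{\cI_R})}$ is a fixed constant, $\P(\abs{\Thnki[i^\star]} \le z_{\alpha/(4\abs{\cI_R})}) \to 0$, and because acceptance into $\hat\cC$ requires the threshold to be met for \emph{all} $i \in \cI_R$, $\P(k \in \hat\cC) \le \P(\abs{\Thnki[i^\star]} \le z_{\alpha/(4\abs{\cI_R})}) \to 0$. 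This is the easy half; the only care needed is to make the "numerator is asymptotically tight, denominator shrinks at rate $N^{-1/2}$" argument precise via Slutsky, using the consistency of $\sigmahksi, \sigmahzsi$ from \Cref{asmp:all_asymptotic_normal}.

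For Part~1 (coverage), the key observation is that under \Cref{asmp:one_good_estimator} there is some $k^\star \ge 1$ with $\tau(k^\star) = \tau$ coordinate-wise, so in particular $\mu_i(k^\star) = 0$ for all $i \in \cI_R$. The strategy is: (i) show $k^\star \in \hat\cC$ asymptotically with probability at least $1 - \alpha/2$; and (ii) show that conditional on any fixed accepted set containing $k^\star$, the pessimistic interval $[\hat L_i, \hat U_i]$ for $i \in \cI_O$ covers $\tau_i$ with asymptotic probability at least $1 - \alpha/2$; then combine by a union bound. For (i): for each $i \in \cI_R$, $\Thnki[i]$ with $\muki = 0$ equals exactly the correctly centered statistic (since $\mu_i(k^\star)=0$), so it is asymptotically $\cN(0,1)$; the Bonferroni threshold $z_{\alpha/(4\abs{\cI_R})}$ gives $\P(\abs{\Thnki[i]} > z_{\alpha/(4\abs{\cI_R})}) \to \alpha/(2\abs{\cI_R})$, and a union bound over $i \in \cI_R$ yields $\limsup_N \P(k^\star \notin \hat\cC) \le \alpha/2$. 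For (ii): by \Cref{asmp:all_asymptotic_normal}, for $i \in \cI_O$ the interval $[\Lki[i](\alpha/2), \Uki[i](\alpha/2)]$ from Eq.~\eqref{eq:definition_lu} covers $\tau(k^\star)_i = \tau_i$ with asymptotic probability $1 - \alpha/2$; and since $k^\star \in \hat\cC$ implies $\hat L_i = \min_{k\in\hat\cC}\Lki[i] \le \Lki[i](\alpha/2)$ and $\hat U_i = \max_{k\in\hat\cC}\Uki[i] \ge \Uki[i](\alpha/2)$, the pessimistic interval contains the $k^\star$ interval, so it a fortiori covers $\tau_i$ with asymptotic probability at least $1-\alpha/2$. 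Finally, $\P(\tau_i \notin [\hat L_i, \hat U_i]) \le \P(k^\star \notin \hat\cC) + \P(\tau_i \notin [\Lki[i](\alpha/2), \Uki[i](\alpha/2)])$, and taking $\limsup$ bounds this by $\alpha/2 + \alpha/2 = \alpha$.

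The main obstacle is handling the interaction between the random acceptance set $\hat\cC$ and the confidence intervals cleanly — one must avoid pretending $\hat\cC$ is deterministic. The device above sidesteps this: we never need to control the coverage behavior of the \emph{other} (possibly biased) estimators in $\hat\cC$, because the pessimistic min/max interval only gets \emph{wider} when more estimators are included, so the event $\{\tau_i \in [\Lki[i^\star](\alpha/2), \Uki[i^\star](\alpha/2)]\} \cap \{k^\star \in \hat\cC\}$ already implies $\{\tau_i \in [\hat L_i, \hat U_i]\}$ regardless of what else is accepted. I would also be careful that the asymptotic-normality convergences are invoked jointly/with Slutsky so that the union bounds over the finite index set $\cI_R$ (and over the two events) are legitimate in the limit — this is routine since all sets involved are finite and all limits exist by assumption. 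One minor point to state explicitly: the proof only uses \Cref{asmp:support} implicitly (to ensure the observational estimators and the quantities $\tau_i$, $i\in\cI_O$, are well defined) and \Cref{asmp:rct_is_good} is subsumed by the stronger \Cref{asmp:one_good_estimator}/\Cref{asmp:all_asymptotic_normal} actually used in each part.
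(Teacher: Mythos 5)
Your proposal is correct and follows essentially the same route as the paper's proof: part (1) via the two union bounds (Bonferroni acceptance of the consistent estimator $k^\star$ at level $\alpha/2$, plus $1-\alpha/2$ coverage of its own interval, using that the pessimistic interval can only widen when more estimators are accepted), and part (2) via the divergence of the miscentered test statistic, which is just the power calculation in Equation~\eqref{eq:definition_beta} tending to one. Your treatment of the random set $\hat{\cC}$ through the monotonicity of the min/max combination is exactly the step the paper's terser argument relies on implicitly.
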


The first point says that for each extrapolated effect $\tau_i$, the coverage of the final confidence interval $[\hat{L}_i, \hat{U}_i]$ is at least $1 - \alpha$ in the limit. It follows from Assumption~\ref{asmp:one_good_estimator} and~\ref{asmp:all_asymptotic_normal} that at least one estimator provides intervals $[\Lki(\alpha/2), \Uki(\alpha/2)]$ that achieve asymptotic coverage of $1 - \alpha/2$.  The result follows from our choice of threshold for the significance test as well as application of union bounds.  
The second point says that we will reject estimators that are not consistent for the validation effects, in the limit. Assumption~\ref{asmp:all_asymptotic_normal} ensures that Proposition~\ref{prop:validity_of_normal_test} holds for all estimators, so that this rejection is a consequence of the asymptotic power in Equation~\eqref{eq:definition_beta}, going to $1$ for a fixed bias as $N \rightarrow \infty$.

\begin{remark}
Equations~\eqref{eq:definition_beta} and~\eqref{eq:definition_lu} are useful for building further intuition.  All of the candidate confidence intervals shrink at a rate of $O(1 / \sqrt{N})$ as the overall sample size increases. For sufficiently large $N$, the width of our generated intervals will depend largely on our power to reject biased estimators, which will be higher for observational estimates with larger biases for validation effects.
\end{remark}

\section{Semi-Synthetic Experiments}%
\label{sec:experiments}

\subsection{Setup of Simulation}\label{sec:ParamsConfig}

We generate semi-synthetic RCTs and observational datasets with covariates from the Infant Health and Development Program (IHDP), a randomized experiment on premature infants assessing the effect of home visits from a trained provider on the future cognitive performance \citep{brooks1992effects}. The outcomes are simulated. 
Our data generation is based on the partial IHDP dataset used in \citep{hill2011bayesian}, which includes $n_0 = 985$ observation, 28 covariates, and a binary treatment variable. 
We construct a scenario where there are four subgroups, defined by the infant's birth weight and maternal marital status: (high [$\ge$ 2000g], married), (low [$<$ 2000g], married), (high, single) and (low, single), which we shorthand as HM, LM, HS and LS. We include all subgroups in the observational studies, but exclude the latter two subgroups for the simulated RCT (i.e. only infants with married mothers are in the RCT).

For each simulated dataset, we generate 1 RCT and $K$ observational studies. For the observational studies, we resample the rows of the IHDP dataset to the desired sample size $n = r\cdot n_0$. We performed weighted sampling to induce a different covariate distribution for observational studies, such that male infants, infants whose mothers smoked, and infants whose mothers worked during pregnancy are less prevalent. Then, we introduce confounding in the observational data, generating $m_c$ continuous confounders and $m_b$ binary confounders. Finally, we simulate outcomes in each dataset, modifying the response surface given in \citet{hill2011bayesian}. In our experiments, we may choose to conceal some confounders in each observational study to mimic unobserved confounding, denoting the number of concealed variables across the $K$ studies as $\mathbf{c_z} = (c_{z1}, c_{z2},..., c_{zK})$. For further details on confounder generation, outcome simulation, and confounder concealment, see \Cref{sec:exp_details}. 
Data generation parameters include $K$, $r$, $m_c$, $m_b$, $\mathbf{c_z}$, and the significance level $\alpha$. By default, we set $K = 5$, $r = 10$, $m_c = 4$, $m_b = 3$, $\mathbf{c_z} = (0, 0, 2, 4, 6)$, and $\alpha$ = 0.05. The full hyperparameter search is provided in Appendix~\ref{sec:exp_details}, and details of hyperparameter tuning can be found in Appendix~\ref{sec:experimentdetails}.

\subsection{Implementation and Evaluation of Meta-Algorithm}
\label{sub:implementation_evaluation}
To implement \Cref{alg:high_level}, we first obtain GATE estimates for the four subgroups and their estimated variances in each observational study, combining techniques from the DML and trasportability literature \citep{Semenova2021-zf, Dahabreh2020-ua}. Estimation details are shown in Appendices~\ref{sec:conditions_for_valid_observational_randomized_comparisons}~and~\ref{sec:experimentdetails}.  For the RCT, we stratify the data into the subgroups HM, LM and estimate the GATEs as the difference of mean outcomes between the treated and untreated. The $z$ tests in \Cref{alg:high_level} are applied to both GATE estimates in the HM and LM subgroups ($|\cI_R|=2$), and the significance level of the tests is set at $\alpha/4$. 

We evaluate performance using two main metrics: (1) the coverage probability of the output confidence intervals (ideally at least $1-\alpha$), and (2) the width of the confidence intervals (narrower is better). In addition to assessing the intervals produced by \cref{alg:high_level}, which we call \textit{Extrapolated Pessimistic Confidence Sets (ExPCS)}, we will evaluate intervals produced by a variant of our algorithm, called \textit{Extrapolated Optimistic Confidence Sets (ExOCS)}. In \textit{ExOCS}, after falsifying estimators, we combine confidence intervals using a random-effects meta-analysis on the non-falsified observational studies. We compare \textit{ExPCS} and \textit{ExOCS} against two baselines. \textit{Meta-Analysis} is a random-effects meta-analysis on all observational studies, as described in \Cref{sec:related_work}, with heterogeneity variance estimated via the DerSimonian-Laird moment method \citep{dersimonian1986meta}. This baseline is the current standard for aggregating observational study results. The second baseline, \textit{Simple Union}, uses the maximum upper bound and minimum lower bound of the $1 - \alpha$ confidence intervals across all observational studies, with no falsification procedure.\footnote{Note that \textit{Simple Union} combines $1 - \alpha$ confidence intervals, while our approach combines $1 - \alpha/2$ confidence intervals to account for the probability of rejecting the \enquote{correct} estimator, if one exists.  As a result, \textit{Simple Union} intervals do not always strictly cover the intervals produced by \textit{ExPCS}.}

\subsection{Results}
\label{sub:semisynthetic}

We perform three semi-synthetic experiments to assess the performance of our proposed meta-algorithm under different scenarios. The first experiment applies our algorithm under the default settings given in \Cref{sec:ParamsConfig}. In the second experiment, we vary the sample size ratio between the observational studies and the original RCT, $r$, from $1$ to $10$. In the third experiment, we vary the proportion of biased observational studies by setting $\mathbf{c_z}$ to be $(0,0,0,0,0)$, $(0,0,0,0,3)$, $(0,0,0,3,3)$ or $(0,3,3,3,3)$, corresponding to $0, 1, 2, 4$ studies being biased out of a total of 5 observational studies. Results for the latter two experiments are shown over 100 simulations of the datasets. Results for all experiments are shown in Figures \ref{fig:Demo}, \ref{fig:Upsample}, and Figure \ref{fig:Biased} in Appendix \ref{sec:add_experiments}, respectively. We observe the following:

\textit{Meta-algorithm produces confidence intervals that cover the true GATE with nominal probability}: 
We demonstrate in \Cref{fig:Demo} the application of our meta-algorithm (\textit{ExPCS}), a variant of it (\textit{ExOCS}), and two other baselines on one dataset. Our goal is to produce narrow confidence intervals that still cover the true GATEs in the extrapolated subgroups.
The confidence intervals of \textit{ExPCS} cover the true GATEs in the extrapolated subgroups with reasonable widths. In contrast, intervals produced by \textit{Meta-Analysis} fail to cover the true GATE in both extrapolated subgroups due to the false assumption of unbiasedness across all studies. The \textit{ExOCS} approach produces narrow intervals for the extrapolated effects, though it barely covers the true effect in the HS subgroup. This hints at the need for a conservative combination of non-falsified studies. However, an overly conservative approach (e.g. \textit{Simple Union}) produces wide intervals that may be of little use for meaningful inference.

Although \textit{Meta-Analysis} produces confidence intervals with inadequate coverage, its intervals for the married subgroups still have considerable overlap with the intervals produced by the RCT.  This suggests that testing the meta-analyzed GATE estimates against the RCT GATE estimates may not be enough to demonstrate their validity. Compared to our \textit{ExPCS} intervals, the lower bounds of the \textit{Simple Union} intervals are higher in several subgroups, since we use a higher confidence level for the candidate intervals corresponding to each study to account for probable error in study falsification.

\begin{figure}[t]
    \centering
    \includegraphics[width = 0.5\linewidth]{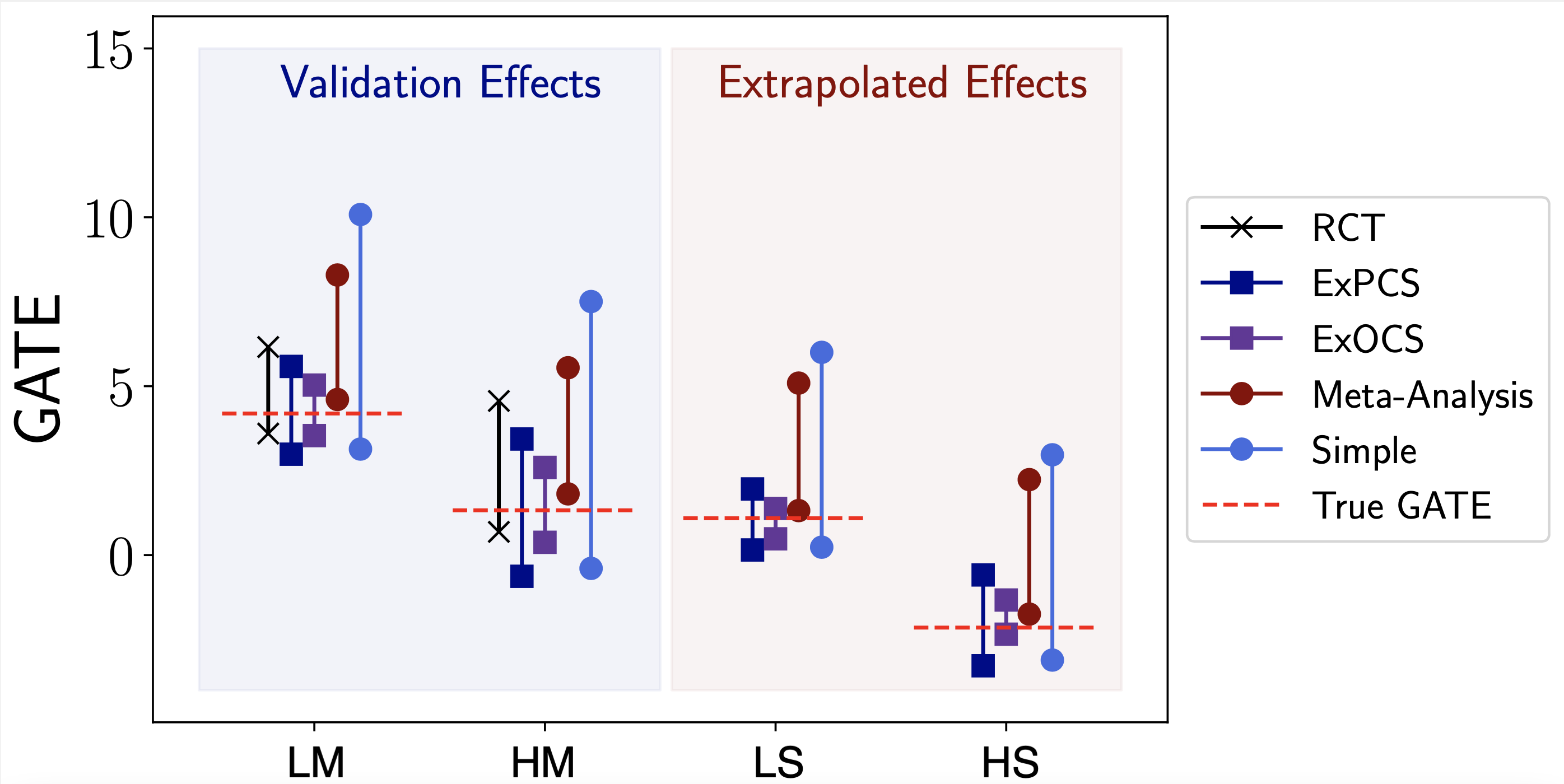}
    \caption{The confidence intervals for group average treatment effects (GATE) within the four subgroups output by our algorithm (\textit{ExPCS}), our algorithm variant (\textit{ExOCS}), random-effects meta-analysis on all observational studies (\textit{Meta-Analysis}), simple union bound on all observational studies (\textit{Simple}), and RCT, for one dataset generated using the default parameter settings laid out in Section~\ref{sec:ParamsConfig}. \textit{LM, HM, LS, HS} represent four subgroups %
    defined in Section~\ref{sec:ParamsConfig}}
    \label{fig:Demo}
\end{figure}

\begin{figure}[t]
    \begin{subfigure}[t]{0.55\textwidth}
        \adjustbox{valign = c}{
            \includegraphics[width = \linewidth]{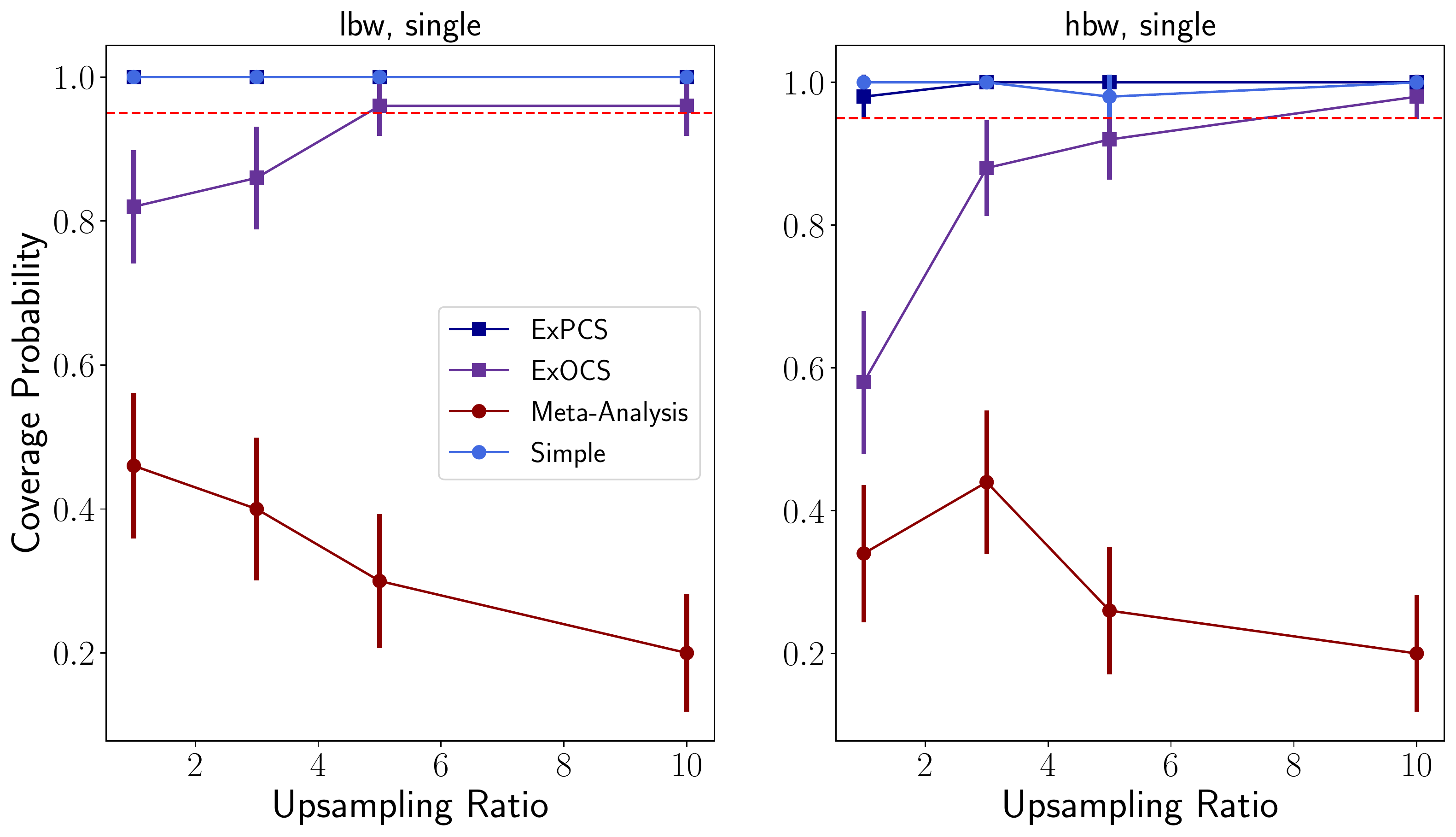}
        }
    \end{subfigure}
    \begin{subfigure}[t]{0.4\textwidth}
        \adjustbox{valign = c}{
            \centering
            \renewcommand\tabularxcolumn[1]{m{#1}}%
            \renewcommand\arraystretch{1.3}
            \setlength\tabcolsep{2pt}%
            \small 
            \begin{tabular}{ccccccc}
                \toprule
                \multicolumn{7}{c}{Mean width of 95\% confidence intervals}\\
                \midrule
                &&& \multicolumn{4}{c}{Upsampling ratio}\\
                \cmidrule{4-7}
                Group&Estimator&& 1 & 3 & 5 & 10\\
                \cmidrule{1-2} \cmidrule{4-7}
                \multirow{2}{*}{\textbf{LS}}&\textit{ExPCS} && 21.00 & 9.51 & 5.53 & 3.58 \\
                &\textit{Simple} && 21.00 & 10.20 & 6.95 & 5.69 \\
                \midrule
                \cmidrule{1-2} \cmidrule{4-7}
                \multirow{2}{*}{\textbf{HS}}&\textit{ExPCS} && 25.00 & 8.29 & 5.36 & 4.34 \\
                &\textit{Simple} && 26.00 & 9.15 & 6.90 & 6.33 \\
                \bottomrule
            \end{tabular}
            
        }
    \end{subfigure}
    \caption{Coverage probabilities of confidence intervals shown as a function of the size of the observational studies relative to the RCT. Dotted red lines stand for 95\% coverage. Vertical bars are the 95\% confidence intervals of the coverage probabilities.  LS / HS stand for groups with low / high birth weight and single mother.  Between \textit{ExPCS} and \textit{Simple} which have adequate coverage, \textit{ExPCS} generally has narrower intervals.}
    \label{fig:Upsample}
\end{figure}

\textit{An analysis of increasing observational study size}: 
In \cref{fig:Upsample}, we find that the coverage of the \textit{Meta-Analysis} intervals is quite low across all sample sizes and particularly decreases at higher sample sizes. This result is intuitive, as three out of five studies are biased, meaning that meta-analysis will converge to a biased estimate as the amount of data increases. One could attempt to fix this issue through \textit{ExOCS}, which does meta-analysis after falsification. However, \textit{ExOCS} has poor coverage when the sample size of the observational studies is small, since the falsification tests are underpowered (evidenced by the high probability of selecting biased studies in \cref{sec:add_experiments}, \Cref{tab:select_bias}). 
Both \textit{ExPCS} and the \textit{Simple Union} intervals have adequate coverage across all sample sizes. %
However, the widths of the intervals reported at the bottom of \Cref{fig:Upsample} show that \textit{ExPCS} intervals are narrower when there is adequate power, i.e. at higher sample sizes. Ultimately, \textit{ExPCS} will tend to provide intervals that cover the true effect regardless of sample size, and in the case we have sufficient power, these intervals will both have good coverage and narrower width, allowing for more meaningful inference. 

\section{Women's Health Initiative (WHI) Experiments}%
\label{sec:whi-experiments}

In order to assess our approach in a real-world setting, we use clinical trial and observational data available from the WHI. Each subgroup is supported in both RCT and observational data, which proves useful for evaluation. At a high level, we ``hide'' some number of subgroups from the RCT, estimate a confidence interval of the effect estimate using our algorithm on the remaining data, and compare the result to the hidden RCT estimate. We do this over a large set of possible ``held-out'' subgroups, yielding >2000 different scenarios on which to test our approach. Because the original observational datasets replicate the RCT results fairly well using standard methods, we create additional ``biased'' datasets by sub-selecting the original observational dataset in a way that induces selection bias. We evaluate each method, for each held-out subgroup, according to the length of the intervals as well as coverage of the RCT point estimates. Below, we describe the specifics of the data, the experimental setup, and the main results of the analysis. For additional details on data preprocessing, setup, and evaluation, see Appendix \ref{sec:whi_exp_details}. 

\subsection{Setup} 
The Postmenopausal Hormone Therapy (PHT) trial, i.e. the RCT used in this analysis, was run on postmenopausal women aged 50-79 years who had an intact uterus. It studied the effect of hormone combination therapy on several types of cancers, cardiovascular events, and fractures. The observational study (OS) was run in parallel, had a similar follow-up time to the RCT, and tracked similar outcomes.
In our analysis, we use a composite outcome, where $Y = 1$ if any of the following events are \textbf{observed} to occur in the first 7 years of follow-up, and $Y = 0$ otherwise: coronary heart disease, stroke, pulmonary embolism, endometrial cancer, colorectal cancer, hip fracture, and death due to other causes.  This represents a binarization of the ``global index'' time-to-event outcome from the original study, where $Y = 0$ could also occur due to censoring.
We establish treatment and control groups in the OS based on explicit confirmation or denial of usage of both estrogen and progesterone in the first three years. We use only covariates measured in both the RCT and OS to simplify analysis. 

\subsection{Evaluation}

Our empirical evaluation consists of several steps. In the first step, we replicate the principal results from the PHT trial, given in Table 2 of \citep{rossouw2002risks}, by fitting a doubly robust estimator (of the style given in Appendix \ref{sec:experimentdetails}) on the WHI OS data. Then, while treating the WHI OS dataset as the ``unbiased'' observational dataset, we simulate additional ``biased'' observational datasets by inducing selection bias into the WHI OS. The exact mechanism of selection bias and its clinical intuition is given in Appendix \ref{sec:whi_exp_details}. Importantly, this is the only part of the evaluation that involves any simulation. %

The second step is to construct a large suite of tasks on which to evaluate our method, by considering different sets of validation-extrapolation subgroups. To construct the subgroups, we consider all pairs of a selected set of binary covariates (see Appendix~\ref{sec:cov-task}), where each pair defines four subgroups. For example, one covariate pair is (\enquote{current smoker}, \enquote{currently drinks alcohol}). We treat two of the subgroups as validation subgroups and two as extrapolated subgroups. For the latter groups, we apply our algorithm without access to the RCT data, and only use the RCT data for final evaluation. The total number of covariate pairs is 592, leading to 1184 distinct ``tasks'' (i.e., extrapolated groups). For each task, we evaluate ExPCS (our method), ExOCS, Simple, and Meta-Analysis (described in~\Cref{sub:implementation_evaluation}). Additionally, we evaluate an ``oracle'' method, which is identical to ExPCS, except that it always selects only the original observational study (i.e. the base WHI OS to which we have not added any selection bias).  For each method, we compute the following metrics, averaged across all tasks – \textbf{Length}: length of the confidence interval, \textbf{Coverage}: percentage of tasks where the interval covers the RCT point estimate. In addition, we report the \textbf{Unbiased OS Percentage}: the percentage of tasks where the ExPCS approach retains the unbiased study after the falsification step.

\begin{wraptable}{r}{7.5 cm}
\centering
\vspace{-6mm}
{\footnotesize
\begin{tabular}{lccc}
\toprule
\toprule
               & \textbf{Coverage}     &\textbf{ Length}     & \textbf{OS $\%$}  \\ \midrule
\quad  Simple  & 0.39 & 0.416 & –  \\
\quad  Meta-Analysis  & 0.03 & 0.260 & – \\
\quad  ExOCS  & 0.28 & 0.058 & – \\
\quad  \textbf{ExPCS (ours)}  & 0.45 & 0.081 & 0.99 \\
\midrule
\quad Oracle  & 0.44 & 0.068 & – \\

\bottomrule
\bottomrule
\end{tabular}}
\caption{\footnotesize Coverage, length, and unbiased OS $\%$ of ExPCS and baselines. ExPCS achieves comparable coverage to the oracle method with highly efficient intervals. Additionally, we do not reject the unbiased OS in $99\%$ of the tasks.}
  \vspace{-5mm}
 \label{tab:whi_table}
\end{wraptable}

\subsection{Results}
Table~\ref{tab:whi_table} reports the metrics above, averaged across all extrapolated subgroups. 

\textit{Compared to the “simple” baseline, our approach has better coverage with much shorter confidence intervals.} Our falsification procedure retains the unbiased observational study 99\% of the time, yielding near-oracle coverage rates, but produces substantially shorter intervals than the \enquote{simple} baseline. Recall that the simple baseline takes a union over all $1 - \alpha$ intervals estimated from each observational dataset, while ExPCS takes a union of a smaller number of slightly wider ($1 - \alpha/2$) confidence intervals.  

\textit{Compared to the Meta-Analysis and ExOCS baselines, we achieve comparable (or much better) length with substantially better coverage.} In particular, compared to meta-analysis, we achieve tighter intervals and also cover the RCT estimate with higher frequency. This result is intuitive, since one will get a biased estimate if biased observational studies are included in the meta-analysis. Additionally, conservatively combining the non-falsified estimates (as opposed to \textit{ExOCS}, which does a meta-analysis on the non-falsified estimates) is important to achieve good coverage (0.45 vs 0.28).

\textit{We get comparable coverage and interval lengths to the oracle method.} Our coverage rate is nearly identical (0.45) to that of the oracle method (0.44), with intervals that are marginally wider (0.081 vs. 0.068). Our slightly improved coverage is possible due to the wider intervals. Note that our measure of “coverage” may be pessimistic, because we track coverage of the RCT point estimate, as opposed to the true causal effect (which is unknown), and the confidence intervals are designed to cover the latter. Indeed, we report the oracle method precisely as a means of providing a more suitable comparison.
Overall, our real-world results suggest that our method of falsification followed by a conservative combination of intervals may be useful for biostatisticians and clinicians when doing meta-analyses. 

\section{Related Work}
\label{sec:related_work}
\textbf{Meta-analysis for combining observational estimates} %
Among the quantitative approaches for meta-analysis to account for potential bias, our \textit{Meta-Analysis} baseline is standard for meta-analysis of observational data \citep{higgins2019cochrane} to account for heterogeneity. Allowing for heterogeneity of treatment effects among studies produces wider confidence intervals and thus more conservative inference. If additional study-level covariates are available (e.g. study designs, drop-out rate), several approaches aim to adjust for potential bias, either by modeling the bias magnitude
\citep{eddy1990introduction, wolpert2004adjusted, Anglemyer2014-et, greenland2005multiple}, down-weighting studies with higher risk of bias  \citep{ibrahim2000power, neuenschwander2009note}, or using Bayesian hierarchical regression to account for difference between subgroups of studies \citep{prevost2000hierarchical, welton2009models}. Our work differs from these approaches, in that (1) we use information from outside the population of interest to assess bias, and (2) we do not place any assumptions on the patterns of bias across studies.

\textbf{Partial identification and sensitivity analysis} These methods seek to place bounds on causal effects when they cannot be point-identified.  Our method can be seen as an alternative way of doing so, with a fundamentally different type of assumption.  Methods for partial identification rely on having discrete variables and a known causal graph (typically including unobserved confounders) \citep[Section 9]{Duarte2021-ld}. %
Methods for sensitivity analysis, on the other hand, translate assumptions about the strength and nature of unobserved confounding into bounds on causal effects \citep{Rosenbaum1983-sy,rosenbaum2010design,Yadlowsky2018-zy}.  %
In contrast, we do not make any such assumptions, e.g., we allow for continuous variables, and when some candidate estimators are biased due to unmeasured confounding, we do not place any limit a-priori on the bias. %
An extended related work is given in Appendix~\ref{sec:app_related_work}.

\section{Discussion and Limitations}%
\label{sec:conclusion}
We have presented a meta-algorithm that constructs conservative confidence intervals for group average treatment effects of subgroups that are not represented in RCTs, but are represented in observational studies.  Under the assumption that there exists at least one candidate estimator that is asymptotically normal and consistent for both the validation and extrapolated effects, these intervals will achieve the correct asymptotic coverage of the true effect.
However, our method is not without limitations. Most notably, we may fail to reject the null hypothesis due to low power, e.g., when an observational estimate $\tauhk$ has high variance. In practice, we expect that our approach will be most useful when the observational studies in question have large sample sizes, leading to higher-precision estimates of potential bias, and smaller confidence intervals on the extrapolated effects. %
Our hope is that methods such as ours will lead to higher confidence in observational estimates when RCT data is available to falsify observational studies that do not replicate known causal relationships.
Finally, great care should be taken to appropriately validate and soundly interpret the results of our method in practice, especially with more sensitive subgroups (e.g. with respect to race or gender).

\newpage 
\textbf{Acknowledgments}: We would like to thank Ahmed Alaa, Hunter Lang, Christina Ji, Hussein Mozannar, and other members of the Clincal Machine Learning group for helpful discussions and valuable feedback on the manuscript. ZH was supported by an ASPIRE award from The Mark Foundation for Cancer Research and by the National Cancer Institute of the National Institutes of Health under Award Number F30CA268631. The content is solely the responsibility of the authors and does not necessarily represent the official views of the National Institutes of Health. MO and DS were supported in part by Office of Naval Research Award No. N00014-21- 1-2807. MCS was supported by the LEAP program from the Ministry of Science and Technology in Taiwan. This manuscript was prepared using WHI-CTOS Research Materials obtained from the National Heart, Lung, and Blood Institute (NHLBI) Biologic Specimen and Data Repository Information Coordinating Center and does not necessarily reflect the opinions or views of the WHI-CTOS or the NHLBI.

\bibliography{ref}

\begin{thebibliography}{65}
\providecommand{\natexlab}[1]{#1}
\providecommand{\url}[1]{\texttt{#1}}
\expandafter\ifx\csname urlstyle\endcsname\relax
  \providecommand{\doi}[1]{doi: #1}\else
  \providecommand{\doi}{doi: \begingroup \urlstyle{rm}\Url}\fi

\bibitem[Anglemyer et~al.(2014)Anglemyer, Horvath, and Bero]{Anglemyer2014-et}
Andrew Anglemyer, Hacsi~T Horvath, and Lisa Bero.
\newblock Healthcare outcomes assessed with observational study designs
  compared with those assessed in randomized trials.
\newblock \emph{Cochrane database of systematic reviews}, \penalty0
  (4):\penalty0 MR000034, April 2014.

\bibitem[Athey et~al.(2016)Athey, Imbens, and Wager]{Athey2016-xa}
Susan Athey, Guido~W Imbens, and Stefan Wager.
\newblock Approximate residual balancing: {De-Biased} inference of average
  treatment effects in high dimensions.
\newblock \emph{arXiv preprint (1604.07125)}, April 2016.

\bibitem[Athey et~al.(2019)Athey, Tibshirani, and Wager]{Athey2019-dl}
Susan Athey, Julie Tibshirani, and Stefan Wager.
\newblock Generalized random forests.
\newblock \emph{The Annals of Statistics}, 47\penalty0 (2):\penalty0
  1148--1178, April 2019.

\bibitem[Banack et~al.(2019)Banack, Kaufman, Wactawski-Wende, Troen, and
  Stovitz]{banack2019investigating}
Hailey~R Banack, Jay~S Kaufman, Jean Wactawski-Wende, Bruce~R Troen, and
  Steven~D Stovitz.
\newblock Investigating and remediating selection bias in geriatrics research:
  the selection bias toolkit.
\newblock \emph{Journal of the American Geriatrics Society}, 67\penalty0
  (9):\penalty0 1970--1976, 2019.

\bibitem[Belloni et~al.(2011{\natexlab{a}})Belloni, Chernozhukov, and
  Hansen]{belloni2011inference}
Alexandre Belloni, Victor Chernozhukov, and Christian Hansen.
\newblock Inference for high-dimensional sparse econometric models.
\newblock \emph{arXiv preprint arXiv:1201.0220}, 2011{\natexlab{a}}.

\bibitem[Belloni et~al.(2011{\natexlab{b}})Belloni, Chernozhukov, and
  Wang]{belloni2011square}
Alexandre Belloni, Victor Chernozhukov, and Lie Wang.
\newblock Square-root lasso: pivotal recovery of sparse signals via conic
  programming.
\newblock \emph{Biometrika}, 98\penalty0 (4):\penalty0 791--806,
  2011{\natexlab{b}}.

\bibitem[Bhattacharya et~al.(2020)Bhattacharya, Nabi, and
  Shpitser]{Bhattacharya2020-ig}
Rohit Bhattacharya, Razieh Nabi, and Ilya Shpitser.
\newblock Semiparametric inference for causal effects in graphical models with
  hidden variables.
\newblock \emph{arXiv preprint (2003.12659)}, March 2020.

\bibitem[Brooks-Gunn et~al.(1992)Brooks-Gunn, Liaw, and
  Klebanov]{brooks1992effects}
Jeanne Brooks-Gunn, Fong-ruey Liaw, and Pamela~Kato Klebanov.
\newblock Effects of early intervention on cognitive function of low birth
  weight preterm infants.
\newblock \emph{The Journal of pediatrics}, 120\penalty0 (3):\penalty0
  350--359, 1992.

\bibitem[B{\"u}hlmann and Van De~Geer(2011)]{buhlmann2011statistics}
Peter B{\"u}hlmann and Sara Van De~Geer.
\newblock \emph{Statistics for high-dimensional data: methods, theory and
  applications}.
\newblock Springer Science \& Business Media, 2011.

\bibitem[Chen and White(1999)]{chen1999improved}
Xiaohong Chen and Halbert White.
\newblock Improved rates and asymptotic normality for nonparametric neural
  network estimators.
\newblock \emph{IEEE Transactions on Information Theory}, 45\penalty0
  (2):\penalty0 682--691, 1999.

\bibitem[Chernozhukov et~al.(2017)Chernozhukov, Demirer, Duflo, and
  Fern{\'a}ndez-Val]{Chernozhukov2017-gc}
Victor Chernozhukov, Mert Demirer, Esther Duflo, and Iv{\'a}n
  Fern{\'a}ndez-Val.
\newblock Generic machine learning inference on heterogenous treatment effects
  in randomized experiments.
\newblock \emph{arXiv preprint (1712.04802)}, December 2017.

\bibitem[Chernozhukov et~al.(2018)Chernozhukov, Chetverikov, Demirer, Duflo,
  Hansen, Newey, and Robins]{Chernozhukov2018-nk}
Victor Chernozhukov, Denis Chetverikov, Mert Demirer, Esther Duflo, Christian
  Hansen, Whitney Newey, and James Robins.
\newblock Double/debiased machine learning for treatment and structural
  parameters.
\newblock \emph{The econometrics journal}, 21\penalty0 (1):\penalty0 C1--C68,
  January 2018.

\bibitem[Cloyd et~al.(2020)Cloyd, Heh, Pawlik, Ejaz, Dillhoff, Tsung, Williams,
  Abushahin, Bridges, and Santry]{cloyd2020neoadjuvant}
Jordan~M Cloyd, Victor Heh, Timothy~M Pawlik, Aslam Ejaz, Mary Dillhoff, Allan
  Tsung, Terence Williams, Laith Abushahin, John~FP Bridges, and Heena Santry.
\newblock Neoadjuvant therapy for resectable and borderline resectable
  pancreatic cancer: a meta-analysis of randomized controlled trials.
\newblock \emph{Journal of clinical medicine}, 9\penalty0 (4):\penalty0 1129,
  2020.

\bibitem[Dahabreh et~al.(2019)Dahabreh, Robertson, Petito, Hern{\'a}n, and
  Steingrimsson]{Dahabreh2019-xg}
Issa~J Dahabreh, Sarah~E Robertson, Lucia~C Petito, Miguel~A Hern{\'a}n, and
  Jon~A Steingrimsson.
\newblock Efficient and robust methods for causally interpretable
  meta-analysis: transporting inferences from multiple randomized trials to a
  target population.
\newblock \emph{arXiv preprint (1908.09230)}, August 2019.

\bibitem[Dahabreh et~al.(2020)Dahabreh, Robertson, Steingrimsson, Stuart, and
  Hern{\'a}n]{Dahabreh2020-ua}
Issa~J Dahabreh, Sarah~E Robertson, Jon~A Steingrimsson, Elizabeth~A Stuart,
  and Miguel~A Hern{\'a}n.
\newblock Extending inferences from a randomized trial to a new target
  population.
\newblock \emph{Statistics in medicine}, 39\penalty0 (14):\penalty0 1999--2014,
  June 2020.

\bibitem[Degtiar and Rose(2021)]{Degtiar2021-bb}
Irina Degtiar and Sherri Rose.
\newblock A review of generalizability and transportability.
\newblock \emph{arXiv preprint (2102.11904)}, February 2021.

\bibitem[DerSimonian and Laird(1986)]{dersimonian1986meta}
Rebecca DerSimonian and Nan Laird.
\newblock Meta-analysis in clinical trials.
\newblock \emph{Controlled clinical trials}, 7\penalty0 (3):\penalty0 177--188,
  1986.

\bibitem[Duarte et~al.(2021)Duarte, Finkelstein, Knox, Mummolo, and
  Shpitser]{Duarte2021-ld}
Guilherme Duarte, Noam Finkelstein, Dean Knox, Jonathan Mummolo, and Ilya
  Shpitser.
\newblock An automated approach to causal inference in discrete settings.
\newblock September 2021.

\bibitem[Eddy et~al.(1990)Eddy, Hasselblad, and Shachter]{eddy1990introduction}
David~M Eddy, Vic Hasselblad, and Ross Shachter.
\newblock An introduction to a bayesian method for meta-analysis: the
  confidence profile method, 1990.

\bibitem[Farrell(2018)]{Farrell2018-ca}
Max~H Farrell.
\newblock Robust inference on average treatment effects with possibly more
  covariates than observations.
\newblock \emph{arXiv preprint (1309.4686)}, February 2018.

\bibitem[Greenland(2005)]{greenland2005multiple}
Sander Greenland.
\newblock Multiple-bias modelling for analysis of observational data.
\newblock \emph{Journal of the Royal Statistical Society: Series A (Statistics
  in Society)}, 168\penalty0 (2):\penalty0 267--306, 2005.

\bibitem[Guyatt et~al.(2008{\natexlab{a}})Guyatt, Oxman, Kunz, Vist,
  Falck-Ytter, and Sch{\"u}nemann]{guyatt2008quality}
Gordon~H Guyatt, Andrew~D Oxman, Regina Kunz, Gunn~E Vist, Yngve Falck-Ytter,
  and Holger~J Sch{\"u}nemann.
\newblock What is “quality of evidence” and why is it important to
  clinicians?
\newblock \emph{Bmj}, 336\penalty0 (7651):\penalty0 995--998,
  2008{\natexlab{a}}.

\bibitem[Guyatt et~al.(2008{\natexlab{b}})Guyatt, Oxman, Vist, Kunz,
  Falck-Ytter, Alonso-Coello, and Sch{\"u}nemann]{guyatt2008grade}
Gordon~H Guyatt, Andrew~D Oxman, Gunn~E Vist, Regina Kunz, Yngve Falck-Ytter,
  Pablo Alonso-Coello, and Holger~J Sch{\"u}nemann.
\newblock Grade: an emerging consensus on rating quality of evidence and
  strength of recommendations.
\newblock \emph{Bmj}, 336\penalty0 (7650):\penalty0 924--926,
  2008{\natexlab{b}}.

\bibitem[Higgins et~al.(2019)Higgins, Thomas, Chandler, Cumpston, Li, Page, and
  Welch]{higgins2019cochrane}
Julian~PT Higgins, James Thomas, Jacqueline Chandler, Miranda Cumpston,
  Tianjing Li, Matthew~J Page, and Vivian~A Welch.
\newblock \emph{Cochrane handbook for systematic reviews of interventions}.
\newblock John Wiley \& Sons, 2019.

\bibitem[Hill(2011)]{hill2011bayesian}
Jennifer~L Hill.
\newblock Bayesian nonparametric modeling for causal inference.
\newblock \emph{Journal of Computational and Graphical Statistics}, 20\penalty0
  (1):\penalty0 217--240, 2011.

\bibitem[Ibrahim and Chen(2000)]{ibrahim2000power}
Joseph~G Ibrahim and Ming-Hui Chen.
\newblock Power prior distributions for regression models.
\newblock \emph{Statistical Science}, pages 46--60, 2000.

\bibitem[Imbens and Rubin(2015)]{Imbens2015-il}
Guido~W Imbens and Donald~B Rubin.
\newblock \emph{Causal Inference in Statistics, Social, and Biomedical
  Sciences}.
\newblock Cambridge University Press, April 2015.

\bibitem[Jacob(2019)]{Jacob2019-bt}
Daniel Jacob.
\newblock Group average treatment effects for observational studies.
\newblock \emph{arXiv preprint (1911.02688)}, November 2019.

\bibitem[Jung et~al.(2021)Jung, Tian, and Bareinboim]{Jung2021-uu}
Yonghan Jung, Jin Tian, and Elias Bareinboim.
\newblock Estimating identifiable causal effects through double machine
  learning.
\newblock In \emph{{AAAI}}. aaai.org, 2021.

\bibitem[Kallus et~al.(2018)Kallus, Puli, and Shalit]{Kallus2018-ic}
Nathan Kallus, Aahlad~Manas Puli, and Uri Shalit.
\newblock Removing hidden confounding by experimental grounding.
\newblock In {S. Bengio and H. Wallach and H. Larochelle and K. Grauman and N.
  Cesa-Bianchi and R. Garnett}, editor, \emph{Advances in Neural Information
  Processing Systems}, 31. Curran Associates, Inc., 2018.

\bibitem[Keum et~al.(2019)Keum, Lee, Greenwood, Manson, and
  Giovannucci]{keum2019vitamin}
N~Keum, DH~Lee, DC~Greenwood, JE~Manson, and E~Giovannucci.
\newblock Vitamin d supplementation and total cancer incidence and mortality: a
  meta-analysis of randomized controlled trials.
\newblock \emph{Annals of Oncology}, 30\penalty0 (5):\penalty0 733--743, 2019.

\bibitem[Landgren et~al.(2018)Landgren, Siegel, Auclair, Chari, Boedigheimer,
  Welliver, Mezzi, Iskander, and Jakubowiak]{landgren2018carfilzomib}
Ola Landgren, David~S Siegel, Daniel Auclair, Ajai Chari, Michael Boedigheimer,
  Tim Welliver, Khalid Mezzi, Karim Iskander, and Andrzej Jakubowiak.
\newblock Carfilzomib-lenalidomide-dexamethasone versus
  bortezomib-lenalidomide-dexamethasone in patients with newly diagnosed
  multiple myeloma: results from the prospective, longitudinal, observational
  commpass study.
\newblock \emph{Blood}, 132:\penalty0 799, 2018.

\bibitem[Li et~al.(2018)Li, Ren, Shen, Hou, Su, Di~Bacco, Hong, Galaznik, Dash,
  Crossland, et~al.]{li2018comparing}
Bin Li, Kaili Ren, Lei Shen, Peijie Hou, Zhenqiang Su, Alessandra Di~Bacco,
  Jin-Liern Hong, Aaron Galaznik, Ajeeta~B Dash, Victoria Crossland, et~al.
\newblock Comparing bortezomib-lenalidomide-dexamethasone (vrd) with
  carfilzomib-lenalidomide-dexamethasone (krd) in the patients with newly
  diagnosed multiple myeloma (ndmm) in two observational studies.
\newblock \emph{Blood}, 132:\penalty0 3298, 2018.

\bibitem[Lipsitch et~al.(2010)Lipsitch, Tchetgen~Tchetgen, and
  Cohen]{Lipsitch2010-kg}
Marc Lipsitch, Eric Tchetgen~Tchetgen, and Ted Cohen.
\newblock Negative controls: a tool for detecting confounding and bias in
  observational studies.
\newblock \emph{Epidemiology}, 21\penalty0 (3):\penalty0 383--388, May 2010.

\bibitem[Neuenschwander et~al.(2009)Neuenschwander, Branson, and
  Spiegelhalter]{neuenschwander2009note}
Beat Neuenschwander, Michael Branson, and David~J Spiegelhalter.
\newblock A note on the power prior.
\newblock \emph{Statistics in medicine}, 28\penalty0 (28):\penalty0 3562--3566,
  2009.

\bibitem[{NIH}(2016)]{usrelating}
{NIH}.
\newblock Relating clinical outcomes in multiple myeloma to personal assessment
  of genetic profile (com-mpass).
\newblock \emph{Clinical Trials website. https://clinicaltrials.
  gov/ct2/show/NCT01454297}, 2016.

\bibitem[Oprescu et~al.(2019)Oprescu, Syrgkanis, and Wu]{Oprescu2019-ui}
Miruna Oprescu, Vasilis Syrgkanis, and Zhiwei~Steven Wu.
\newblock Orthogonal random forest for causal inference.
\newblock In Kamalika Chaudhuri and Ruslan Salakhutdinov, editors,
  \emph{Proceedings of the 36th International Conference on Machine Learning},
  volume~97 of \emph{Proceedings of Machine Learning Research}, pages
  4932--4941. PMLR, 2019.

\bibitem[Park and Kang(2019)]{Park2019-hr}
Chan Park and Hyunseung Kang.
\newblock A groupwise approach for inferring heterogeneous treatment effects in
  causal inference.
\newblock \emph{arXiv preprint (1908.04427)}, August 2019.

\bibitem[Pearl(1995)]{Pearl1995-mj}
Judea Pearl.
\newblock Causal diagrams for empirical research.
\newblock \emph{Biometrika}, 82\penalty0 (4):\penalty0 669--688, 1995.

\bibitem[Pearl(2015)]{Pearl2015-ta}
Judea Pearl.
\newblock Generalizing experimental findings.
\newblock \emph{Journal of Causal Inference}, 3\penalty0 (2):\penalty0
  259--266, September 2015.

\bibitem[Pearl and Bareinboim(2011)]{Pearl2011-rt}
Judea Pearl and Elias Bareinboim.
\newblock Transportability of causal and statistical relations: A formal
  approach.
\newblock \emph{Proceedings of the AAAI Conference on Artificial Intelligence},
  25\penalty0 (1):\penalty0 247--254, August 2011.

\bibitem[Pearl and Bareinboim(2014)]{Pearl2014-lm}
Judea Pearl and Elias Bareinboim.
\newblock {External Validity: From {Do-Calculus} to Transportability Across
  Populations}.
\newblock \emph{Statistical Science}, 29\penalty0 (4):\penalty0 579--595,
  November 2014.

\bibitem[Prete et~al.(2018)Prete, Pezzolla, Prete, Testini, Marzaioli, Patriti,
  Jimenez-Rodriguez, Gurrado, and Strippoli]{prete2018robotic}
Francesco~Paolo Prete, Angela Pezzolla, Fernando Prete, Mario Testini, Rinaldo
  Marzaioli, Alberto Patriti, Rosa~Maria Jimenez-Rodriguez, Angela Gurrado, and
  Giovanni~FM Strippoli.
\newblock Robotic versus laparoscopic minimally invasive surgery for rectal
  cancer: a systematic review and meta-analysis of randomized controlled
  trials.
\newblock \emph{Annals of surgery}, 267\penalty0 (6):\penalty0 1034--1046,
  2018.

\bibitem[Prevost et~al.(2000)Prevost, Abrams, and
  Jones]{prevost2000hierarchical}
Teresa~C Prevost, Keith~R Abrams, and David~R Jones.
\newblock Hierarchical models in generalized synthesis of evidence: an example
  based on studies of breast cancer screening.
\newblock \emph{Statistics in medicine}, 19\penalty0 (24):\penalty0 3359--3376,
  2000.

\bibitem[Robins and Rotnitzky(1995)]{Robins1995-cn}
James~M Robins and Andrea Rotnitzky.
\newblock Semiparametric efficiency in multivariate regression models with
  missing data.
\newblock \emph{Journal of the American Statistical Association}, 90\penalty0
  (429):\penalty0 122--129, March 1995.

\bibitem[Robins et~al.(1994)Robins, Rotnitzky, and Zhao]{robins1994estimation}
James~M Robins, Andrea Rotnitzky, and Lue~Ping Zhao.
\newblock Estimation of regression coefficients when some regressors are not
  always observed.
\newblock \emph{Journal of the American statistical Association}, 89\penalty0
  (427):\penalty0 846--866, 1994.

\bibitem[Rosenbaum and Rubin(1983)]{Rosenbaum1983-sy}
P~R Rosenbaum and D~B Rubin.
\newblock Assessing sensitivity to an unobserved binary covariate in an
  observational study with binary outcome.
\newblock \emph{Journal of the Royal Statistical Society. Series B, Statistical
  methodology}, 45\penalty0 (2):\penalty0 212--218, 1983.

\bibitem[Rosenbaum et~al.(2010)Rosenbaum, Rosenbaum, and
  Briskman]{rosenbaum2010design}
Paul~R Rosenbaum, PR~Rosenbaum, and Briskman.
\newblock \emph{Design of observational studies}, volume~10.
\newblock Springer, 2010.

\bibitem[Rosenman et~al.(2020)Rosenman, Basse, Owen, and
  Baiocchi]{rosenman2020combining}
Evan Rosenman, Guillaume Basse, Art Owen, and Michael Baiocchi.
\newblock Combining observational and experimental datasets using shrinkage
  estimators.
\newblock \emph{arXiv preprint arXiv:2002.06708}, 2020.

\bibitem[Rosenman et~al.(2021)Rosenman, Owen, Baiocchi, and
  Banack]{rosenman2021propensity}
Evan~TR Rosenman, Art~B Owen, Mike Baiocchi, and Hailey~R Banack.
\newblock Propensity score methods for merging observational and experimental
  datasets.
\newblock \emph{Statistics in Medicine}, 2021.

\bibitem[Rossouw et~al.(2002)Rossouw, Anderson, Prentice, LaCroix, Kooperberg,
  Stefanick, Jackson, Beresford, Howard, Johnson, et~al.]{rossouw2002risks}
Jacques~E Rossouw, Garnet~L Anderson, Ross~L Prentice, Andrea~Z LaCroix,
  Charles Kooperberg, Marcia~L Stefanick, Rebecca~D Jackson, Shirley~AA
  Beresford, Barbara~V Howard, Karen~C Johnson, et~al.
\newblock Risks and benefits of estrogen plus progestin in healthy
  postmenopausal women: principal results from the women's health initiative
  randomized controlled trial.
\newblock \emph{Jama}, 288\penalty0 (3):\penalty0 321--333, 2002.

\bibitem[Schnatz et~al.(2017)Schnatz, Jiang, Aragaki, Nudy, O'Sullivan,
  Williams, LeBlanc, Martin, Manson, Shikany, et~al.]{schnatz2017effects}
Peter~F Schnatz, Xuezhi Jiang, Aaron~K Aragaki, Matthew Nudy, David~M
  O'Sullivan, Mark Williams, Erin~S LeBlanc, Lisa~W Martin, JoAnn~E Manson,
  James~M Shikany, et~al.
\newblock Effects of calcium, vitamin d, and hormone therapy on cardiovascular
  disease risk factors in the women's health initiative: a randomized
  controlled trial.
\newblock \emph{Obstetrics and gynecology}, 129\penalty0 (1):\penalty0 121,
  2017.

\bibitem[Schuemie et~al.(2014)Schuemie, Ryan, DuMouchel, Suchard, and
  Madigan]{Schuemie2014-je}
Martijn~J Schuemie, Patrick~B Ryan, William DuMouchel, Marc~A Suchard, and
  David Madigan.
\newblock Interpreting observational studies: why empirical calibration is
  needed to correct p-values.
\newblock \emph{Statistics in medicine}, 33\penalty0 (2):\penalty0 209--218,
  January 2014.

\bibitem[Schuemie et~al.(2018)Schuemie, Hripcsak, Ryan, Madigan, and
  Suchard]{Schuemie2018-gi}
Martijn~J Schuemie, George Hripcsak, Patrick~B Ryan, David Madigan, and Marc~A
  Suchard.
\newblock Empirical confidence interval calibration for population-level effect
  estimation studies in observational healthcare data.
\newblock \emph{Proceedings of the National Academy of Sciences of the United
  States of America}, 115\penalty0 (11):\penalty0 2571--2577, March 2018.

\bibitem[Semenova and Chernozhukov(2021)]{Semenova2021-zf}
Vira Semenova and Victor Chernozhukov.
\newblock Debiased machine learning of conditional average treatment effects
  and other causal functions.
\newblock \emph{The econometrics journal}, 24\penalty0 (2):\penalty0 264--289,
  2021.

\bibitem[{SPRINT Research Group} et~al.(2015){SPRINT Research Group}, Wright,
  Williamson, Whelton, Snyder, Sink, Rocco, Reboussin, Rahman, Oparil, Lewis,
  Kimmel, Johnson, Goff, Fine, Cutler, Cushman, Cheung, and
  Ambrosius]{SPRINT_Research_Group2015-gb}
{SPRINT Research Group}, Jackson~T Wright, Jr, Jeff~D Williamson, Paul~K
  Whelton, Joni~K Snyder, Kaycee~M Sink, Michael~V Rocco, David~M Reboussin,
  Mahboob Rahman, Suzanne Oparil, Cora~E Lewis, Paul~L Kimmel, Karen~C Johnson,
  David~C Goff, Jr, Lawrence~J Fine, Jeffrey~A Cutler, William~C Cushman,
  Alfred~K Cheung, and Walter~T Ambrosius.
\newblock A randomized trial of intensive versus standard {Blood-Pressure}
  control.
\newblock \emph{The New England journal of medicine}, 373\penalty0
  (22):\penalty0 2103--2116, November 2015.

\bibitem[Stefanski and Boos(2002)]{Stefanski2002-ci}
Leonard~A Stefanski and Dennis~D Boos.
\newblock The calculus of {M-Estimation}.
\newblock \emph{The American statistician}, 56\penalty0 (1):\penalty0 29--38,
  2002.

\bibitem[Stewart et~al.(2015)Stewart, Rajkumar, Dimopoulos, Masszi,
  {\v{S}}pi{\v{c}}ka, Oriol, H{\'a}jek, Rosi{\~n}ol, Siegel, Mihaylov,
  et~al.]{stewart2015carfilzomib}
A~Keith Stewart, S~Vincent Rajkumar, Meletios~A Dimopoulos, Tam{\'a}s Masszi,
  Ivan {\v{S}}pi{\v{c}}ka, Albert Oriol, Roman H{\'a}jek, Laura Rosi{\~n}ol,
  David~S Siegel, Georgi~G Mihaylov, et~al.
\newblock Carfilzomib, lenalidomide, and dexamethasone for relapsed multiple
  myeloma.
\newblock \emph{New England Journal of Medicine}, 372\penalty0 (2):\penalty0
  142--152, 2015.

\bibitem[Suchard et~al.(2019)Suchard, Schuemie, Krumholz, You, Chen, Pratt,
  Reich, Duke, Madigan, Hripcsak, and Ryan]{Suchard2019-cf}
Marc~A Suchard, Martijn~J Schuemie, Harlan~M Krumholz, Seng~Chan You, Ruijun
  Chen, Nicole Pratt, Christian~G Reich, Jon Duke, David Madigan, George
  Hripcsak, and Patrick~B Ryan.
\newblock Comprehensive comparative effectiveness and safety of first-line
  antihypertensive drug classes: a systematic, multinational, large-scale
  analysis.
\newblock \emph{The Lancet}, 394\penalty0 (10211):\penalty0 1816--1826,
  November 2019.

\bibitem[Wager and Athey(2018)]{Wager2018-ah}
Stefan Wager and Susan Athey.
\newblock Estimation and inference of heterogeneous treatment effects using
  random forests.
\newblock \emph{Journal of the American Statistical Association}, 113\penalty0
  (523):\penalty0 1228--1242, July 2018.

\bibitem[Wager and Walther(2015)]{wager2015adaptive}
Stefan Wager and Guenther Walther.
\newblock Adaptive concentration of regression trees, with application to
  random forests.
\newblock \emph{arXiv preprint arXiv:1503.06388}, 2015.

\bibitem[Wasserman(2004)]{Wasserman2004-qm}
Larry Wasserman.
\newblock \emph{All of Statistics: A Concise Course in Statistical Inference}.
\newblock Springer, New York, NY, 2004.

\bibitem[Welton et~al.(2009)Welton, Ades, Carlin, Altman, and
  Sterne]{welton2009models}
Nicky~J Welton, Anthony~E Ades, JB~Carlin, DG~Altman, and JAC Sterne.
\newblock Models for potentially biased evidence in meta-analysis using
  empirically based priors.
\newblock \emph{Journal of the Royal Statistical Society: Series A (Statistics
  in Society)}, 172\penalty0 (1):\penalty0 119--136, 2009.

\bibitem[Wolpert and Mengersen(2004)]{wolpert2004adjusted}
Robert~L Wolpert and Kerrie~L Mengersen.
\newblock Adjusted likelihoods for synthesizing empirical evidence from studies
  that differ in quality and design: effects of environmental tobacco smoke.
\newblock \emph{Statistical Science}, 19\penalty0 (3):\penalty0 450--471, 2004.

\bibitem[Yadlowsky et~al.(2018)Yadlowsky, Namkoong, Basu, Duchi, and
  Tian]{Yadlowsky2018-zy}
Steve Yadlowsky, Hongseok Namkoong, Sanjay Basu, John Duchi, and Lu~Tian.
\newblock Bounds on the conditional and average treatment effect with
  unobserved confounding factors.
\newblock August 2018.

\end{thebibliography}
\bibliographystyle{plainnat}

\newpage
\appendix
\onecolumn

\section{When can biased estimators be falsified?}%
\label{app:intro-examples}

As discussed in Examples~\ref{ex:double_ml_ate} and~\ref{ex:selection_estimators}, we imagine that observational estimators differ in a few possible ways. They may represent the same identification strategy applied to different datasets, different identification strategies applied to the same dataset (e.g., different choices of confounders), or some combination of the two.

Assumption~\ref{asmp:one_good_estimator} states that there exists a consistent and asymptotically normal observational estimator for $\tau$, as defined in Def.~\ref{eq:definition_gate}. This is a fundamental assumption in our work, and so we build additional intuition for when we might expect this condition to hold, and when we might be able to falsify this assumption. In this section, we give basic intuition regarding patterns of confounding, and in Section~\ref{sec:conditions_for_valid_observational_randomized_comparisons}, we discuss issues of transportability.

In Example~\ref{ex:intro_example1}, we give a simple example where the causal graph is consistent across two subgroups, and where an estimator must control for all confounders to get consistent estimates of the GATE in either subgroup.  In this setting, falsification is possible. On the other hand, in Example~\ref{ex:intro_example2}, we give a counterexample, where there are multiple estimators that can deliver consistent estimates of the GATE on the RCT subpopulation, but only one provides consistent estimates across all subpopulations.

 \begin{figure}[t]
  \centering
    \begin{subfigure}[t]{0.5\textwidth}
    \centering
    \begin{tikzpicture}[
      obs/.style={circle, draw=gray!90, fill=gray!30, very thick, minimum size=5mm}, 
      uobs/.style={circle, draw=gray!90, fill=gray!10, dotted, minimum size=5mm}, 
      bend angle=30]
      \node[obs] (A) {$A$};
      \node[obs] (Y) [right=25pt of A] {$Y$};
      \node[obs] (Z1) [above right=28pt of A] {$Z_1$};
      \node[obs] (Z2) [above left=28    pt of Y] {$Z_2$};
      \node[obs] (X) [below left=10pt of Y] {$X$};
      \draw[-latex, thick] (A) -- (Y);
      \draw[-latex, thick] (Z1) -- (A);
      \draw[-latex, thick] (Z1) -- (Y);
      \draw[-latex, thick] (Z2) -- (A);
      \draw[-latex, thick] (Z2) -- (Y);
      \draw[-latex, thick] (X) -- (Y);
    \end{tikzpicture}
    \caption{}%
    \label{subfig:ex1}
    \end{subfigure}%
    \begin{subfigure}[t]{0.5\textwidth}
    \centering
      \begin{tikzpicture}[
        obs/.style={circle, draw=gray!90, fill=gray!30, very thick, minimum size=5mm}, 
        note/.style={rectangle, draw=gray!90, very thick, minimum size=5mm}, 
        uobs/.style={circle, draw=gray!90, fill=gray!10, dotted, minimum size=5mm}, 
        bend angle=30]
      \node[obs] (A) {$A$};
      \node[note] (An) [left=of A] {$X = 0$};
      \node[obs] (Y) [right=20pt of A] {$Y$};
      \node[obs] (Z) [below right=9pt of A] {$Z$};
      \node[obs] (A2) [below=20pt of A] {$A$};
      \node[note] (An2) [left=of A2] {$X = 1$};
      \node[obs] (Y2) [right=20pt of A2] {$Y$};
      \node[obs] (Z2) [below right=9pt of A2] {$Z$};
      \draw[-latex, thick] (A) -- (Y);
      \draw[-latex, thick] (Z) -- (Y);
      \draw[-latex, thick] (A2) -- (Y2);
      \draw[-latex, thick] (Z2) -- (Y2);
      \draw[-latex, thick] (Z2) -- (A2);
      \end{tikzpicture}
    \caption{}%
    \label{subfig:ex2}
    \end{subfigure}
  \caption{Example~\ref{ex:intro_example1} is depicted in~(\subref{subfig:ex1}), and Example~\ref{ex:intro_example2} in~(\subref{subfig:ex2})}%
  \label{fig:intro_example_graphs}
\end{figure}

\begin{example}[Consistent confounding across subgroups]\label{ex:intro_example1}
In the causal graph shown in \cref{subfig:ex1}, there are two sets of confounders, $Z_1,Z_2$, a binary treatment variable $A$, a binary subgroup variable $X$, and the outcome $Y$. We assume a linear outcome model, whereby $E[Y|X,Z_1,Z_2,A] = \alpha+\beta X + \gamma_1 A X + \gamma_2 A (1-X) + \delta_1 Z_1 + \delta_2 Z_2$. Note that the true group average treatment effect (GATE) for the two subgroups are, $\text{GATE}(X=0) = \gamma_2; \text{GATE}(X=1) = \gamma_1$. It is straightforward to show that not conditioning on the full set of confounders will lead to biased GATE estimates for both subgroups, whereas conditioning on both $Z_1$ and $Z_2$ will lead to consistent estimates for both subgroups. %
\end{example}

\begin{example}[Selective confounding by subgroup]\label{ex:intro_example2}
Let there be two subgroups, $X=0$ and $X=1$, with the former having support in both RCT and observational studies and the latter having support in only observational data. Now, suppose we had the following treatment assignment mechanism, $p(A=1|X,Z) = f(Z)\cdot \mathbf{1}(X = 1) + c\cdot \mathbf{1}(X = 0)$, where $Z$ is a set of confounders, $f$ is a nonlinear function of $Z$, and $c$ is a constant. A candidate estimator that does not condition on $Z$ would be able to get consistent estimates for the validation effect but not the extrapolated effect. On the other hand, conditioning on $Z$ would allow for consistent estimates on both validation and extrapolated effects.
\end{example}

\section{Conditions for valid observational / randomized comparisons}%
\label{sec:conditions_for_valid_observational_randomized_comparisons}

Recall that we had defined the group average treatment effect (GATE) as follows in Equation~\eqref{eq:definition_gate}
\begin{equation}
\tau_i \coloneqq \begin{cases}
  \E[Y_1 - Y_0 \mid G = i, D = 0], &\ \text{if } i \in \cI_{R} \\
  \E[Y_1 - Y_0 \mid G = i, D = 1], &\ \text{if } i \in \cI_{O} \\
\end{cases},
\end{equation}
and refer to $\tau_i$ for $i \in \cI_{R}$ as a validation effect, and $\tau_i$ for $i \in \cI_{O}$ as an extrapolated effect.  In this section, we discuss sufficient conditions under which these causal effects are identifiable from observational data drawn from a distribution $D = k$, and give examples of doubly-robust estimators of these quantities. These assumptions cover both comparisons of the observational studies to the randomized trial (used for validation), as well as the normalization of observational estimates (used for confidence intervals on the extrapolated effects).

Our goal in presenting these results is to build intuition in this setting for when we might expect a consistent observational estimator to exist across all groups.  This is a well-studied topic, often in the context of generalizing effect estimates from randomized trials to other supported populations (e.g., all trial-eligible individuals). We primarily make use of results in that literature to build intuition here, pointing the reader to \citet{Degtiar2021-bb} for a recent review whose presentation we largely mirror, with modifications to account for our notation.

\subsection{Identification} 
First, we state standard assumptions under which the GATE in the observational population for $D = k$, 
\begin{equation}
    \E[Y_1 - Y_0 \mid G = i, D = k],
\end{equation}
is identifiable from data in the dataset $D = k$, with notation adapted to our setting.
\begin{assumption}\label{asmp:internal_validity} The following conditions hold for the distribution $\P(\cdot \mid D = k)$:
  \begin{enumerate}
      \item \textit{Conditional Exchangeability over $A$}: $Y_a \indep A \mid X, D = k$ for all treatments $a$.
      \item \textit{Positivity of Treatment Assignment}: $\P(X = x \mid D = k) > 0 \implies \P(A = a \mid X = x, D = k) > 0$ for all $a$.
      \item \textit{Consistency}: $A = a \implies Y_a = Y$
  \end{enumerate}
\end{assumption}
These causal assumptions ensure that the ATE and CATE can be identified from observational data for the observational population and are standard in the causal inference literature \citep{Imbens2015-il}.  In order to transport these estimates to the RCT population (or from one observational dataset to another), we require additional assumptions.  Next, we give assumptions under which these estimates can be transported to another population $D = k'$, where in our case $k' \in \{0, 1\}$.
\begin{assumption}\label{asmp:external_validity} Let $k$ correspond to a source population, and $k'$ correspond to the target population. Conditioned on the event $D \in \{k, k'\}$, define the random variable $S = 1$ if $D = k$ and $S = 0$ otherwise.  Then let the following hold, on the distribution $\P(\cdot \mid D \in \{k, k'\})$.
  \begin{enumerate}
      \item \textit{Conditional Exchangeability over $S$}: $Y_a \indep S \mid X$ for all treatments $a$.
      \item \textit{Positivity of Selection}: $\P(X = x) > 0 \implies \P(S = 1 \mid X = x) > 0$ almost surely over $X$ for all $a$.
      \item \textit{Consistency}: $S = s$ and $A = a \implies Y_a = Y$
  \end{enumerate}
\end{assumption}
Here, we note that this introduces non-trivial additional assumptions. Most notably, we require that the potential outcomes are independent of the dataset, given $X$.  This would be violated, for instance, if the distribution of unobservable effect modifiers differs between different observational studies.  As a result, we note that it is possible for an observational study to fail to replicate the RCT results due to failures of transportability (failure of Assumption~\ref{asmp:external_validity}) even if it has \enquote{internal validity}, allowing for identification of the causal effect in the population $D = k$.  There also exists a large body of work on identifying transportable causal effects via causal graphs \citep{Pearl2011-rt,Pearl2014-lm,Pearl2015-ta}.

\subsection{Estimation of the ATE in the target population}

Regarding estimation, \citet{Dahabreh2020-ua} consider the problem of transporting average treatment effects from randomized trials to observational studies, under Assumption~\ref{asmp:internal_validity} with $k = 0$ and Assumption~\ref{asmp:external_validity} with $k = 0, k' = 1$.  These assumptions admit identification of the potential outcomes means as follows (see Section 4.2 of \citet{Dahabreh2020-ua})
\begin{equation}
   \E[Y_a \mid S = 0] = \E[\E[Y \mid X, S = 1, A = a] \mid S = 0]\label{eq:reweight_id}
\end{equation}
where the outer expectations are over $\P(X \mid S = 0)$, i.e., the covariate distribution of the target population.  \citet{Dahabreh2019-xg} give a doubly robust estimator for the statistical quantity on the right-hand side as the empirical expectation of the following pseudo-outcome (see Equation A.13 of \citet{Dahabreh2020-ua})
\begin{align}
\hat{\mu}(a) &= \frac{1}{n} \sum_{i=1}^{n} Y^{a}_i(\hat{\eta}, \hat{\pi})\label{eq:reweighting_empirical_average}
\end{align}
where $n$ is the total samples in both the source $S = 1$ and target $S = 0$ samples, and where
\begin{align}
Y^{a}_i(\hat{\eta}, \hat{\pi}) &\coloneqq \frac{1}{\hat{\pi}} \left(\1{S_i = 1, A_i = a} \cdot \frac{1 - \hat{p}(X_i)}{\hat{p}(X_i) \hat{e}_a(X_i)} \cdot  \{ Y_i - \hat{g}_a(X_i)\} + (1 - S_i) \hat{g}_a(X_i)\right)\label{eq:definition_transport_score}.
\end{align}
In \Cref{eq:definition_transport_score}, $\hat{\eta} \coloneqq (\hat{g}_a, \hat{e}_a, \hat{p})$, and $\hat{\pi} \coloneqq n^{-1} \sum_{i = 1}^{n} \1{S_i = 0}$ is an estimate of $\P(S = 0)$, $\hat{g}_a(X)$ is an estimate of the mean conditional outcome $\E[Y \mid A = a, S = 1, X]$, $\hat{p}(X)$ is an estimate of the selection probability $\P(S = 1 \mid X)$, and $\hat{e}_a(X)$ is an estimate of the propensity score $\P(A = a \mid S = 1, X)$.  \citet{Dahabreh2019-xg} derives precise asymptotic properties of this estimator, which is asymptotically normal and consistent for the observational quantity on the right-hand side of Equation~\eqref{eq:reweight_id}.  In particular, this estimator is doubly-robust in the sense that it is consistent if either $\hat{p}(X)$ or $\hat{g}_a(X)$ is consistent, but requires consistency of $\hat{e}_a(X)$.  It also enjoys the rate double-robustness property, retaining consistency and asymptotic normality even if the estimators for $\hat{p}, \hat{g}$ converge at slower than parametric rates, and allows for the same cross-fitting schemes used in the Double ML \citep{Chernozhukov2018-nk} literature for relaxing Donsker conditions.

Note that the average treatment effect in this setting can be estimated by the following contrast, which is similarly an empirical expectation of a pseudo-outcome
\begin{equation}\label{eq:reweighting_empirical_contrast}
\hat{\mu}(1) - \hat{\mu}(0) = \frac{1}{n} \sum_{i=1}^{n} Y^{1}_i(\hat{\eta}, \hat{\pi}) - Y^{0}_i(\hat{\eta}, \hat{\pi}) = \frac{1}{n} \sum_{i=1}^{n} Y_i(\hat{\eta}, \hat{\pi}),
\end{equation}
where $Y_i(\hat{\eta}, \hat{\pi}) \coloneqq Y^{1}_i(\hat{\eta}, \hat{\pi}) - Y^{0}_i(\hat{\eta}, \hat{\pi})$. Furthermore, the variance of these estimates can be estimated using either sandwich estimators from M-estimation theory \citep{Stefanski2002-ci}, or via bootstrap methods. We refer the reader to Sections 5.3, 5.4 and Appendix A.4 of \citep{Dahabreh2020-ua} for more details.  

\section{Estimation and comparison of GATE in semi-synthetic experiments}%
\label{sec:experimentdetails}

In Sections~\ref{sub:motivating_examples} and~\ref{sec:conditions_for_valid_observational_randomized_comparisons}, we discuss several estimators for average treatment effects (ATEs) that are known to be asymptotically normal, such as the double ML estimator discussed in Example~\ref{ex:double_ml_ate} or the doubly-robust estimator in Section~\ref{sec:conditions_for_valid_observational_randomized_comparisons}. 

Given a fixed set of discrete subgroups, one could analyze each subgroup independently and apply such estimators directly, since the ATE in each subgroup is precisely the GATE. This would be a straightforward way to ensure that the same formal guarantees hold regarding asymptotic normality. While this approach would be feasible in our experimental setting, due to the small number of groups, it is less practical in general, especially with a larger number of groups, since information cannot be shared across nuisance models such as $\hat{g}_a, \hat{e}_a, \hat{p}$ discussed in Section~\ref{sec:conditions_for_valid_observational_randomized_comparisons}.

In an effort to emulate a more realistic setting, we take a slightly different approach in the semi-synthetic experiments.  We draw inspiration from the double ML approach given in \citet{Semenova2021-zf} for GATE estimation, while taking into consideration the transportation of causal effects in the sense of Section~\ref{sec:conditions_for_valid_observational_randomized_comparisons}. Note that in \citet{Semenova2021-zf}, the required assumptions and proofs for asymptotic normality of estimators are provided on a case-by-case basis, which does not include our case with transportation.  Therefore, in the following we will briefly describe their approach, then show how we construct our GATE estimators and provide the required assumptions for their asymptotic normality.

\citet{Semenova2021-zf} focuses on the setting where there exists some pseudo-outcome / signal, $Y(\eta)$, and where one is interested in summarizing the function, $\tau(x) = \E[Y(\eta) \mid X = x]$, with a linear regression function (in the simplest case, a set of group indicators).  When $Y(\eta)$ is the doubly-robust score \citep{robins1994estimation, Robins1995-cn} (see Equation~\eqref{eq:doubly_robust_score}), $\tau(x)$ is equal to the CATE function, and the best approximation by group indicators gives the GATE. 

Our general procedure is as follows: for estimation of $\tauhk$ and the respective variances, we construct a score function / pseudo-outcome, $\tilde{Y}$, whose empirical conditional expectation (in each group) provides an estimate of the GATE, and whose empirical variance we use as an estimate of the variance.  We describe this procedure in more detail below.  Throughout, $X$ should be taken to refer to the covariates that are observed in a given observational study.

\paragraph{Comparing Validation Effect Estimates} In our simulation setup, all of the observational datasets are drawn from a common distribution, which differs from the RCT distribution, requiring the use of the techniques and assumptions discussed in Section~\ref{sec:conditions_for_valid_observational_randomized_comparisons} to estimate the GATE, $\tau_i = \E[Y_1 - Y_0 \mid G = i, D = 0]$, using data from the observational distributions.

To generate the observational estimates $\tauhki, \sigmahksi$ in this setting, we cannot simply take empirical conditional expectation / variance of the score function given in Equation~\ref{eq:reweighting_empirical_contrast}.  Rather, the GATE is identified under Assumptions~\ref{asmp:internal_validity} and~\ref{asmp:external_validity} as a conditional expectation of the score times a correction factor, as discussed in the following proposition.
\begin{restatable}[]{proposition}{Correction}\label{prop:correction}
In the setting of Section~\ref{sec:conditions_for_valid_observational_randomized_comparisons}, under Assumptions~\ref{asmp:internal_validity} and~\ref{asmp:external_validity}, the conditional mean potential outcome in the target distribution is identified as 
\begin{equation}
    \E[Y_a \mid S = 0, G = i] = \frac{\P(S = 0)}{\P(S = 0 \mid G = i)}\E[Y^a(\eta_0, \pi_0) \mid G = i],
\end{equation}
where $Y^a(\eta, \pi)$ is defined as in Equation~\ref{eq:definition_transport_score_proof}.
\begin{equation}\label{eq:definition_transport_score_proof}
Y^{a}(\eta, \pi) \coloneqq \frac{1}{\pi} \left(\1{S = 1, A = a} \cdot \frac{1 - p(X)}{p(X) e_a(X)} \cdot  \{ Y - g_a(X)\} + (1 - S) g_a(X)\right)
\end{equation}
where $\eta \coloneqq (g_a, e_a, p)$ with true underlying parameters $\eta_0 = (g_{a0}, e_{a0}, p_0)$, $\pi \coloneqq \P(S = 0)$ with true value $\pi_0$, $g_a(X) \coloneqq \E[Y \mid A = a, S = 1, X]$, $p(X) \coloneqq \P(S = 1 \mid X)$, and $e_a(X) \coloneqq \P(A = a \mid S = 1, X]$. 
\end{restatable}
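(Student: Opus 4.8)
The plan is to prove the identity by a direct conditional-moment calculation that exploits the doubly-robust structure of $Y^a(\eta_0,\pi_0)$ together with Assumptions~\ref{asmp:internal_validity} and~\ref{asmp:external_validity}. All expectations are taken over $\P(\cdot \mid D \in \{k, k'\})$, the mixed population on which $S$, $p_0$, $e_{a0}$, $g_{a0}$, and $\pi_0 = \P(S = 0)$ are defined, and I will use throughout that the subgroup label $G = G(X)$ is a deterministic function of $X$, so that $\{G = i\}$ is $\sigma(X)$-measurable and conditioning on it is coarser than conditioning on $X$.

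First I would compute the conditional mean of the score given $X$. Write the bracket in~\eqref{eq:definition_transport_score_proof} as the sum of an augmentation term $\1{S = 1, A = a}\,\tfrac{1 - p_0(X)}{p_0(X) e_{a0}(X)}\,\{Y - g_{a0}(X)\}$ and an imputation term $(1 - S)\,g_{a0}(X)$. The weight $\tfrac{1 - p_0(X)}{p_0(X) e_{a0}(X)}$ is $X$-measurable and almost surely finite by the positivity clauses of Assumptions~\ref{asmp:internal_validity} and~\ref{asmp:external_validity}; pulling it out, applying the tower property over $\{S = 1, A = a\}$, and using $\P(S = 1, A = a \mid X) = p_0(X) e_{a0}(X)$, the augmentation term has conditional mean $(1 - p_0(X))\big(\E[Y \mid S = 1, A = a, X] - g_{a0}(X)\big) = 0$, since $g_{a0}$ is by definition $\E[Y \mid A = a, S = 1, X]$. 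Hence only the imputation term survives.

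Second I would take the expectation over $\{G = i\}$. Since the augmentation term drops, $\E[Y^a(\eta_0,\pi_0) \mid G = i] = \tfrac{1}{\pi_0}\,\E[(1 - S)\,g_{a0}(X) \mid G = i] = \tfrac{1}{\pi_0}\,\P(S = 0 \mid G = i)\,\E[g_{a0}(X) \mid S = 0, G = i]$. It then remains to identify $\E[g_{a0}(X) \mid S = 0, G = i]$ with $\E[Y_a \mid S = 0, G = i]$. Chaining the assumptions: consistency gives $\E[Y \mid S = 1, A = a, X] = \E[Y_a \mid S = 1, A = a, X]$; conditional exchangeability over $A$ (Assumption~\ref{asmp:internal_validity}) removes the conditioning on $A$; conditional exchangeability over $S$ (Assumption~\ref{asmp:external_validity}) removes the conditioning on $S$; so $g_{a0}(X) = \E[Y_a \mid X]$. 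Because $G$ is $\sigma(X)$-measurable and $Y_a \indep S \mid X$, we have $\E[Y_a \mid X] = \E[Y_a \mid X, S = 0, G = i]$, and a final application of the tower property over $\P(\cdot \mid S = 0, G = i)$ gives $\E[g_{a0}(X) \mid S = 0, G = i] = \E[Y_a \mid S = 0, G = i]$. Substituting back and using $\pi_0 = \P(S = 0)$, solving for $\E[Y_a \mid S = 0, G = i]$ yields the claimed identity; the correction factor $\P(S = 0)/\P(S = 0 \mid G = i)$ is well-defined because the support assumptions (Assumption~\ref{asmp:support}, or $i \in \cI_R$ when the target population is the RCT) guarantee $\P(S = 0 \mid G = i) > 0$.

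I expect the main obstacle to be bookkeeping rather than any deep step: one must track carefully which population each expectation is over, use the $\sigma(X)$-measurability of $G$ so that the iterated-expectation manipulations are valid, and verify that the positivity conditions render both the inverse-probability weight and the correction factor finite. With those in place, the cancellation of the augmentation term and the identification of the imputation term are routine, and the $\pi_0$ in the denominator of $Y^a(\eta,\pi)$ is exactly what produces the normalization $\P(S=0)/\P(S=0\mid G=i)$ in the statement.
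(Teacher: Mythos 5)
Your proof is correct and follows essentially the same route as the paper's: show the augmentation term is conditionally mean-zero given $X$ so only the imputation term $(1-S)g_{a0}(X)$ survives, use the two exchangeability assumptions plus consistency to identify $g_{a0}(X)$ with $\E[Y_a \mid X, S=0]$, and extract the factor $\P(S=0 \mid G=i)/\P(S=0)$ when conditioning on $G=i$. The only cosmetic difference is that you perform the identification pointwise in $X$ before integrating over $\{G=i\}$, whereas the paper applies the assumptions inside the integral; the substance is identical.
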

A proof is provided in Appendix~\ref{sec:proofs}. Note that this is equivalent to replacing the estimate of $1 / \P(S = 0)$ in the score with an estimate of $1 / \P(S = 0 \mid G = i)$, before computing the empirical conditional expectations of the score.

Now, for each observational dataset, we construct estimates $\tauhki, \sigmahksi$ for $i \in \cI_{R}$ as follows:
\begin{enumerate}
\item We collect observational samples from the two validation groups \{lbw, married\} and \{hbw, married\}, which we denote as $G = 0, G = 1$ respectively.  We combine these observational samples with the samples from the RCT, using $S = 0$ to denote RCT samples (the target distribution) and $S = 1$ to denote observational samples.
\item We define our signal for each sample as 
\begin{equation}
    \label{eq:tildeY}
    \tilde{Y}_i(\hat{\eta}, \hat{\pi}_g) \coloneqq \tilde{Y}^1_i(\hat{\eta}, \hat{\pi}_g) - \tilde{Y}^0_i(\hat{\eta}, \hat{\pi}_g)
\end{equation}
where we define the modified score $\tilde{Y}^a_i$, in light of Proposition~\ref{prop:correction}, as 
\begin{equation}
\tilde{Y}^{a}_i(\hat{\eta}, \hat{\pi}_g) \coloneqq \frac{1}{\hat{\pi}_g(G_i)} \left(\1{S_i = 1, A_i = a} \cdot \frac{1 - \hat{p}(X_i)}{\hat{p}(X_i) \hat{e}_a(X_i)} \cdot  \{ Y_i - \hat{g}_a(X_i)\} + (1 - S_i) \hat{g}_a(X_i)\right)\label{eq:definition_transport_score_revised},
\end{equation}
where $\hat{\pi}_g(G_i)$ is defined as an estimate of $\pi_{g}(G_i) := \P(S = 0 \mid G_i)$, computed using empirical averages.
\item We use 3-fold cross-fitting as described in \citet{Semenova2021-zf} to generate the signals for each sample, such that for the $i$-th datapoint, the score $\tilde{Y}_i(\hat{\eta}, \hat{\pi})$ uses plug-in estimates $\hat{\eta} = (\hat{g}_1, \hat{g}_0, \hat{e}_1, \hat{p})$ that are learned on the folds that do not include the $i$-th datapoint, and $\hat{\pi}$ is estimated using empirical averages. In practice, we use a multi-layer perceptron (MLP) regressor for estimating $\hat{g}_a$, and $\ell_2$-regularized logistic regression for estimating $\hat{e}_1, \hat{p}$, with hyperparameters described in Section~\ref{sec:exp_details}. For each model, we reserve 20\% of the current fold in the cross fitting procedure as a validation set to do hyperparameter selection. 
\item Finally, we estimate $\tauhki$ as the empirical average $\E[\tilde{Y}(\hat{\eta}, \hat{\pi}_g) \mid G = i]$, and we use the empirical conditional variance of this score to estimate the variance $\sigmahksi$.
\end{enumerate}

We construct the RCT estimate $\tauhzi$ (using the RCT sample alone) as the difference of the empirical conditional means $\E_{N_0}[Y \left(\frac{\1{A = 1}}{\hat{P}(A = 1)} - \frac{\1{A = 0}}{1 - \hat{P}(A = 1)}\right) \mid G = i]$, where $\hat{P}(A = 1)$ is an empirical average.  We compute $\sigmahzsi$ as the empirical conditional variance of this quantity.  We then conduct testing, as described in Algorithm~\ref{alg:high_level}.

\paragraph{Asymptotic normality of transported estimators}

We herein provide sufficient assumptions that guarantee the asymptotic normality of our transported GATE estimators, i.e. the empirical average $\E[\tilde{Y}(\hat{\eta}, \hat{\pi}_g) | G=i]$:

\begin{restatable}[Observational dataset covers the whole support of covariates]{assumption}{BoundedP}
    \label{asmp:bounded_p}
    \[\inf_{x \in \mathcal{X}} p_0(x) = \varepsilon_p > 0\]
\end{restatable}

Note that Assumption~\ref{asmp:bounded_p} is implied by Assumption~\ref{asmp:support}.

\begin{restatable}[Bounded within-subgroup variance of conditional treatment effects in the RCT]{assumption}{BoundedCateVariance}
    \label{asmp:bounded_cate_variance}
    \[\sup_{x \in \mathcal{X}} var[g_{10}(x)-g_{00}(x)|G=i, S=0] = \sigma_{\tau i}^2 < \infty\]
\end{restatable}

\begin{restatable}[Overlap between treatments in the observational dataset]{assumption}{BoundedE}
    \label{asmp:bounded_e}
    \[\inf_{x \in \mathcal{X}} \min(e_{00}(x), e_{10}(x)) = \varepsilon_e > 0\]
\end{restatable}

\begin{restatable}[Finite outcome conditional variance in the observational dataset]{assumption}{BoundedVariance}
    \label{asmp:bounded_variance}
    \[\max_{a \in \{0,1\}} \sup_{x \in \mathcal{X}} \E[(Y-g_{a0}(x))^2|X=x, S=1, A=a] = \bar{\sigma}^2 < \infty\]
\end{restatable}

\begin{restatable}[Properties of the nuisance function estimators]{assumption}{Nuisance}
    \label{asmp:nuisance} Let $\hat{\eta}_{(n)}$ be a sequence of estimators for $\eta$ indexed by the size of the cross-fitting training fold $n$. We assume that there exists
    \begin{itemize}
        \item $\epsilon_n = o_P(1)$, a sequence of positive numbers
        \item $\mathcal{T}_n$, a sequence of nuisance function vector sets in the neighborhood of $\eta_0 = (g_{10}, g_{00}, e_{10}, p_0)$ satisfying $\P(\hat{\eta}_{(n)}\in\mathcal{T}_n) \ge 1-\epsilon_n$
        \item $\mathbf{g}_n, \mathbf{e}_n, \mathbf{p}_n$, sequences of worst root mean square errors for the nuisance functions $g_1, g_0, e_1, p$, defined as follows:
        \begin{align*}
            \mathbf{g}_n &\coloneqq \max_{a \in \{0,1\}}\sup_{\eta \in \mathcal{T}_n} \sqrt{\E[g_a(X)-g_{a0}(X)]^2}\\
            \mathbf{e}_n &\coloneqq \sup_{\eta \in \mathcal{T}_n} \sqrt{\E[e_1(X)-e_{10}(X)]^2}\\
            \mathbf{p}_n &\coloneqq \sup_{\eta \in \mathcal{T}_n} \sqrt{\E[p(X)-p_0(X)]^2}
        \end{align*}
    \end{itemize}
so that the following assumptions hold:
    \begin{enumerate}[label={\textbf{Assumption}~\textbf{\Alph*:}}, ref={\theassumption.\Alph*}, align = left, leftmargin = 1em]
        \item \label{asmp:rate}(Rate of nuisance error)
        \[\mathbf{g}_n \vee \mathbf{e}_n \vee \mathbf{p}_n = o_P(1)\]
        \item \label{asmp:rate_product}(Rate of nuisance error product) 
        \[\sqrt{n} \mathbf{g}_n (\mathbf{e}_n \vee \mathbf{p}_n) = o_P(1)\]
        \item \label{asmp:bounded_nuisance}(Bounded nuisance estimates) \[\sup_{\eta \in \cup_{n=1}^{\infty} \mathcal{T}_n}\left( \max_{a\in\{0,1\}} \sup_{x\in\mathcal{X}} \left|g_a(x)\right| \vee \sup_{x\in\mathcal{X}} \left|\frac{1}{p(x)}\right| \vee \sup_{x\in\mathcal{X}} \left|\frac{1}{e_1(x)}\right| \vee \sup_{x\in\mathcal{X}} \left|\frac{1}{e_0(x)}\right|\right) = \bar{\mathcal{C}} < \infty\]
    \end{enumerate}
\end{restatable}

\begin{restatable}{theorem}{Normal}
    \label{prop:normal}
    Suppose Assumptions~\ref{asmp:bounded_p} to~\ref{asmp:nuisance} hold. Then, the empirical average, $\E[\tilde{Y}(\hat{\eta}, \hat{\pi}_g) | G=i]$, where $\tilde{Y}$ is defined in Equation~\ref{eq:tildeY} and $\hat{\eta}$ is estimated with cross-fitting, is asymptotically normal.
\end{restatable}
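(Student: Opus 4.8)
The plan is to follow the debiased / double machine learning template of \citet{Chernozhukov2018-nk} and \citet{Semenova2021-zf}, specialized to the transport-corrected doubly-robust score $\tilde Y$ of Equation~\eqref{eq:tildeY}. Write the estimator as $\hat\tau_i = \E_{n_i}[\tilde Y(\hat\eta,\hat\pi_g)]$, the empirical average of the cross-fitted signal over the $n_i = \Theta(N)$ pooled (RCT and observational) samples with $G=i$, and note that $\E[\tilde Y(\eta_0,\pi_{g0}) \mid G=i] = \tau_i$ by Proposition~\ref{prop:correction} (using $\tilde Y = \tilde Y^1 - \tilde Y^0$ and that the event $S=0$ coincides with the RCT, $D=0$). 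I would decompose $\sqrt{n_i}(\hat\tau_i - \tau_i)$ into: (I) an oracle term $\sqrt{n_i}\,\E_{n_i}[\tilde Y(\eta_0,\pi_{g0}) - \tau_i]$; (II) an empirical-process term comparing the empirical and population conditional-on-$\{G=i\}$ averages of $\tilde Y(\hat\eta,\hat\pi_g) - \tilde Y(\eta_0,\pi_{g0})$; and (III) a bias / remainder term $\sqrt{n_i}\big(\E[\tilde Y(\hat\eta,\hat\pi_g)\mid G=i] - \tau_i\big)$, with the nuisances frozen at their estimated values. Since $\hat\pi_g(i)$ is a non-split empirical average, I would peel off the multiplicative factor $1/\hat\pi_g(i)$ by a first-order expansion and absorb the resulting $O_P(n_i^{-1/2})$ Gaussian term into (I) (equivalently, treat $(\tau_i,\pi_{g0})$ as a joint parameter).

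For term (I), the classical CLT gives convergence to $\cN(0,V_i)$ with $V_i = \mathrm{Var}(\tilde Y(\eta_0,\pi_{g0}) \mid G=i)$. Finiteness of $V_i$ follows by bounding $\tilde Y$ using the lower bounds on $p_0$ and on $e_{a0}$ (Assumptions~\ref{asmp:bounded_p} and~\ref{asmp:bounded_e}), the finite conditional outcome variance (Assumption~\ref{asmp:bounded_variance}) and within-subgroup CATE variance (Assumption~\ref{asmp:bounded_cate_variance}), and boundedness of $g_{a0}$ (which lies in the neighborhoods $\mathcal{T}_n$ and hence is bounded by $\bar{\mathcal{C}}$ via Assumption~\ref{asmp:bounded_nuisance}). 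I would also record that $\pi_{g0}(i) = \P(S=0\mid G=i) \in (0,1)$ is bounded away from both endpoints --- since the RCT has support on group $i$ (as $i \in \cI_R$) and every observational dataset has support on all groups (Assumption~\ref{asmp:support}) --- so the $1/\hat\pi_g(i)$ expansion is legitimate.

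Term (II) is where cross-fitting is essential. Conditioning on the auxiliary fold on which $\hat\eta$ was trained, the centered summands are i.i.d., so a conditional second-moment (Chebyshev) bound shows the term is $O_P\big(\E[\|\tilde Y(\hat\eta,\hat\pi_g) - \tilde Y(\eta_0,\pi_{g0})\|^2 \mid \text{fold}]^{1/2}\big)$; the $L^2$ distance between scores is controlled by the explicit Lipschitz-type dependence of $\tilde Y$ on $(g_a,e_1,p)$, valid because the denominators are uniformly bounded below (Assumptions~\ref{asmp:bounded_p}, \ref{asmp:bounded_e}, \ref{asmp:bounded_nuisance}), and is therefore of order $\mathbf{g}_n \vee \mathbf{e}_n \vee \mathbf{p}_n = o_P(1)$ by Assumption~\ref{asmp:rate}; hence (II) $= o_P(1)$. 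For term (III), I would first verify Neyman orthogonality: the pathwise derivative of $\eta \mapsto \E[\tilde Y(\eta,\pi_{g0}) \mid G=i]$ in each of $g_a$, $e_1$, $p$ vanishes at $\eta_0$, a short conditional-expectation computation using $g_{a0}(X) = \E[Y \mid A=a,S=1,X]$; the remaining discrepancy $\E[\tilde Y(\hat\eta,\pi_{g0}) \mid G=i] - \tau_i$ then has the closed form $\frac{1}{\pi_{g0}(i)}\E[(\hat g_a - g_{a0})\, r(\hat e_1,\hat p) \mid G=i]$ with $r$ vanishing at $(e_{10},p_0)$, so Cauchy--Schwarz together with Assumption~\ref{asmp:bounded_nuisance} gives $O_P(\mathbf{g}_n(\mathbf{e}_n \vee \mathbf{p}_n))$; multiplying by $\sqrt{n_i}$ and invoking Assumption~\ref{asmp:rate_product} yields (III) $= o_P(1)$. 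Combining the three pieces, $\sqrt{n_i}(\hat\tau_i - \tau_i) \cid \cN(0,\bar V_i)$ for some finite $\bar V_i$, which is the claim.

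I expect the main obstacle to be term (III): establishing that the transport-corrected score is Neyman orthogonal and, crucially, that the surviving second-order remainder factorizes as $\mathbf{g}_n$ times $(\mathbf{e}_n \vee \mathbf{p}_n)$ rather than, say, $\mathbf{e}_n\mathbf{p}_n$ or $\mathbf{p}_n^2$, since Assumption~\ref{asmp:rate_product} is calibrated to exactly that product. A secondary technical point is handling the non-cross-fitted plug-in $\hat\pi_g(i)$ and the pooling of RCT and observational data without disturbing the orthogonality argument; I would keep $\hat\pi_g$ separate from $\hat\eta$ throughout and rely only on its $\sqrt{n_i}$-consistency.
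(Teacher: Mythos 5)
Your proposal is correct and follows essentially the same route as the paper: an oracle-plus-empirical-process-plus-bias decomposition, with cross-fitting and a conditional Chebyshev bound handling the empirical-process term under Assumption~\ref{asmp:rate}, and a Neyman-orthogonality computation reducing the bias to a $\mathbf{g}_n(\mathbf{e}_n \vee \mathbf{p}_n)$ product controlled by Assumption~\ref{asmp:rate_product} (the paper carries this out via an explicit $S_1 - S_2 + S_3$ decomposition with $\E[S_1\mid X]=\E[S_2\mid X]=0$). The only cosmetic difference is that the paper avoids your delta-method expansion of $1/\hat\pi_g(i)$ by rewriting the estimator so that $\hat\pi_g(i)$ cancels exactly into the denominator count $\sum_j \mathbf{1}(G_j=i, S_j=0)$, to which the law of large numbers and Slutsky's lemma are then applied.
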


\textit{Remark}: As we will prove later in section~\ref{subsec:normal}, Assumptions~\ref{asmp:bounded_p} to~\ref{asmp:bounded_variance} guarantee that when the nuisance function vector $(g_{10}, g_{00}, e_{10}, p_0)^\top$ is known (i.e. need not be estimated), the transported GATE estimator is asymptotically normal. In practice, $(g_{10}, g_{00}, e_{10}, p_0)^\top$ is not known and has to be estimated, so Assumption~\ref{asmp:nuisance} lays out sufficient properties the nuisance function vector estimator needs to satisfy. In particular, Assumptions~\ref{asmp:rate} and~\ref{asmp:rate_product} permit that the convergence rate of estimators can be slower than $o_P(n^{-1/2})$, which is useful when $X$ is high-dimensional and machine learning models are required to estimate the nuisance functions. To date, a variety of commonly-used machine learning models have been shown to enjoy a convergence rate of at least $o_P(n^{-1/4})$, e.g. \cite{buhlmann2011statistics, belloni2011square, belloni2011inference} for certain $\ell_1$ penalized models, \cite{wager2015adaptive} for a class of regression trees and random forests, and \cite{chen1999improved} for a class of neural nets. This implies when these models are applied to the estimation of $(g_{10}, g_{00}, e_{10}, p_0)^\top$, Assumptions~\ref{asmp:rate} and~\ref{asmp:rate_product} hold, so our transported GATE estimator is asymptotically normal and Assumption~\ref{asmp:all_asymptotic_normal} is satisfied.

\paragraph{Constructing Confidence Intervals for the Extrapolated Effects} In our experimental setup, the data generating distribution for all observational studies is identical, so no transportation of effects is required, which enables the application of existing results.  We use the doubly-robust score \citep{robins1994estimation, Robins1995-cn} as the signal for the conditional average treatment effect,
\begin{equation}
    Y(\eta) = \mu(1,X) - \mu(0,X) + \frac{A(Y-\mu(1,X))}{s(X)} - \frac{(1-A)(Y-\mu(0,X))}{1-s(X)},\label{eq:doubly_robust_score}
\end{equation}
where $\eta \coloneqq (\mu, s)$, and $\mu(A, X)  \coloneqq \E[Y|A,X]$, and $s(X) \coloneqq \P(A = 1 \mid X)$.  
We use a multi-layer perceptron (MLP) regressor as a plug-in estimate $\hat{\mu}$ of $\mu$, and $\ell_2$-regularized logistic regression as a plug-in estimate $\hat{s}$ of $s$, with hyperparameters described in Section~\ref{sec:exp_details}.  

Following example 2.2 from \citet{Semenova2021-zf}, we approximate the conditional treatment effect with a linear combination of subgroup dummy variables $G = (G_0, G_1, G_2, G_3)^\top$, so the combination weights correspond to the GATEs $\tauk = (\tauk_0, \tauk_1, \tauk_2, \tauk_3)$. This amounts to regressing the estimated signal $\hat{Y}_i(\hat{\eta})$ with $G$. As long as the propensity score is bounded above and below away from 0 and 1 (Assumption 4.10(a) of \citet{Semenova2021-zf}), and the convergence rates of the response surface and propensity score estimates are sufficiently fast (Assumption 4.11), Corollary 4.1 and a set of mild technical conditions justify Theorem 3.1 in \citet{Semenova2021-zf}, which gives a result on pointwise asymptotic normality for the regression coeffcients $\tauhk = (\tauhk_0, \tauhk_1, \tauhk_2, \tauhk_3) \in \R^{4}$, so that for any unit vector $\gamma \in \R^4$ where $\norm{\gamma} = 1$,
\[\lim_{N_k \rightarrow \infty}\sup_{t \in \R} \left| \P \left( \frac{\sqrt{N_k}\gamma^\top(\tauhk -\tauk)}{\sqrt{\gamma^\top \Omega \gamma}} < t\right)-\Phi(t) \right| = 0\]
where $\Omega$ can be consistently estimated with Equation 2.5 in \citet{Semenova2021-zf}
\[\hat{\Omega} = \left(\frac{1}{N_k}\sum_j G_jG_j^\top\right)^{-1} \left(\frac{1}{N_k}\sum_j G_jG_j^\top(\hat{Y}_j(\hat{\eta})-G_j^\top\tauhk)^2\right)\left(\frac{1}{N_k}\sum_j G_jG_j^\top\right)^{-1}\]
Setting $\gamma$ as $1$ in the $(i+1)$th element and $0$ elsewhere thus yields 
\[\lim_{N_k \rightarrow \infty}\sup_{t \in \R} \left| \P \left( \frac{\sqrt{N_k} (\tauhki-\tauki)}{\sqrt{\Omega_{ii}}} < t\right)-\Phi(t) \right| = 0\]
We therefore estimate $\sigmahksi$, the variance of $\tauhki$, with $\hat{\Omega}_{ii}$, and as this converges in probability to $\Omega_{ii}$, the asymptotic normality of the above follows via Slutsky's theorem.

\section{Proofs}%
\label{sec:proofs}
\subsection{Proofs for propositions and theorems}
\label{subsec:proofs}
\ValidityOfNormalTest*
\begin{proof}
  As $N \rightarrow \infty$, we have it that 
  \begin{align*}
    \sqrt{\rho N} (\tauhki - \tauki) &\cid \cN(0, \sigmaksi) \\
    \sqrt{(1 - \rho) N} (\tauhzi - \taui) &\cid \cN(0, \sigmazsi)
  \end{align*}
  where we have written $\rho N$ in place of $N_k$, and similarly for $N_0$.  By Slutsky's theorem, we can multiply by the constants $\rho^{-1/2}$ and ${(1 - \rho)}^{-1/2}$ to get both results in terms of $\sqrt{N}$.  We can then use independence of $\tauhk, \tauhz$ to write that
  \begin{equation*}
    \sqrt{N} \underbrace{\begin{pmatrix}
      \tauhki - \tauki \\
      \tauhzi - \taui
    \end{pmatrix}}_{Z - \theta} \cid \cN\left(\begin{pmatrix} 0 \\ 0 \end{pmatrix}, \begin{bmatrix}
      \sigmaksi / \rho & 0 \\
      0 & \sigmazsi / (1 - \rho)
    \end{bmatrix}\right).
  \end{equation*}
  We now apply the Delta method. Let $Z = (\tauhki, \tauhzi)$ denote the (column) vector of estimates, and similarly let $\theta = (\tauki, \tau_i)$. Letting $f(X) = X_1 - X_2$, we can argue that 
  \begin{equation*}
    \sqrt{N} (Z - \theta) \cid \cN(0, \Sigma) \implies \sqrt{N} (f(Z) - f(\theta)) \cid \cN\left(0, \nabla f{(\theta)}^\top \Sigma \ \nabla f(\theta) \right),
  \end{equation*}
  where the resulting variance is given by 
  \begin{equation*}
   \nabla f{(\theta)}^\top \Sigma \ \nabla f(\theta) = \frac{\sigmaksi}{\rho} + \frac{\sigmazsi}{1 - \rho},
  \end{equation*}
  and $f(Z) - f(\theta) = \tauki - \tau_i - \muki$.  
  \begin{align*}
    \sqrt{N} (\tauki - \tau_i - \muki) &\cid \cN\left(0, \frac{\sigmaksi}{\rho} + \frac{\sigmazsi}{1 - \rho}\right),
  \end{align*}
  and accordingly that 
  \begin{align*}
  \frac{\tauk_i - \tau_i - \muki}{\sqrt{\frac{\sigmaksi}{N_k} + \frac{\sigmazsi}{N_0}}} & \cid \cN(0, 1),
  \end{align*}
  where this also holds (by Slutsky's theorem) with $\sigmaks_i$ and $\sigmazsi$ replaced by their empirical estimates, which converge in probability.
\end{proof}

\Properties*
\begin{proof}
  \textbf{(1)} By asymptotic normality and consistency of each dimension of $\tauk$, the test statistic $\Thnki$ converges in distribution to $\cN(0, 1)$.  As a result, for each $i \in \cI_{R}$, the probability that $\abs{\Thnki} > z_{\alpha / (4\abs{\cI_{R}})}$ converges to $\alpha / (2\abs{\cI_{R}})$.  By an application of the union bound, the probability that this occurs for any $i \in \cI_{R}$ is bounded by $\alpha / 2$. Similarly, by the assumed properties of $\tauk$, the probability that the confidence interval $[\Lki(\alpha/2), \Uki(\alpha/2)]$ fails to capture the true value of $\tau_i$ converges to $\alpha/2$.  By another application of the union bound, for each $i \in \cI_{O}$, the probability that either $\tauk$ is not selected or $\tau_i$ is not contained in the interval is upper bounded by $\alpha$.  The result follows.

  \textbf{(2)} By asymptotic normality of each $\tauk$, the power calculation in Equation~\eqref{eq:definition_beta} holds, and as $N \rightarrow \infty$, the probability of rejecting the null hypothesis converges to zero as $\sigma^2_{k, 0}$ becomes arbitrarily large, which occurs as both $N_k, N_0 \rightarrow \infty$.
\end{proof}

\Correction*
\begin{proof}
First, we can observe by standard arguments that the conditional expectation of $Y^a(\eta_0, \pi_0)$ given $X$ is given by the following
\begin{equation*}
\E[Y^a(\eta_0, \pi_0) \mid X = x] = \E\left[\frac{1 - S}{\P(S = 0)} g_{a0}(X) \middle| X = x\right],
\end{equation*}
because the first term in Equation~\eqref{eq:definition_transport_score_proof} is mean-zero conditioned on $X = x$. This follows by the law of total expectation: for any event where $S = 1, A = a$ does not hold, the first term is zero due to the indicator, and for any other event $S = 1, A = a, X = x$, the first term is mean-zero, since the first term becomes a constant (determined by $S = 1, A = a, X = x$) times a mean-zero random variable $Y - \E[Y \mid A = a, S = 1, X = x]$.  

As a result, we can write that 
\begingroup
\allowdisplaybreaks
\begin{align*}
    &\E[Y^a(\eta_0, \pi_0) \mid G = i] \\
    &= \E\left[\frac{1 - S}{\P(S = 0)} \E[Y \mid A = a, S = 1, X] \middle| G = i\right] \\
    &= \frac{1}{\P(S = 0)} \int_{x} \sum_{s} \1{s = 0} \E[Y \mid A = a, S = 1, x] p(s, x \mid G = i) dx \\
    &= \frac{1}{\P(S = 0)} \int_{x} \sum_{s} \1{s = 0} \E[Y_a \mid S = 1, x] p(s, x \mid G = i) dx & \text{Assumption~\ref{asmp:internal_validity}}, Y_a \indep A \mid X, S = 1 \\
    &= \frac{1}{\P(S = 0)} \int_{x} \sum_{s} \1{s = 0} \E[Y_a \mid S = 0, x] p(s, x \mid G = i) dx& \text{Assumption~\ref{asmp:external_validity}}, Y_a \indep S \mid X \\
    &= \frac{1}{\P(S = 0)} \int_{x} \E[Y_a \mid x, S = 0] p(S = 0, x \mid G = i) dx \\
    &= \frac{1}{\P(S = 0)} \int_{x} \E[Y_a \mid x, S = 0] p(x \mid S = 0, G = i) \P(S = 0 \mid G = i) dx \\
    &= \frac{\P(S = 0 \mid G = i)}{\P(S = 0)} \int_{x} \E[Y_a \mid x, S = 0] p(x \mid S = 0, G = i) dx \\
    &= \frac{\P(S = 0 \mid G = i)}{\P(S = 0)} \int_{x} \E[Y_a \mid x, S = 0, G = i] p(x \mid S = 0, G = i) dx &X = x \Rightarrow G = i, \forall x: p(x \mid G = i) > 0 \\
    &= \frac{\P(S = 0 \mid G = i)}{\P(S = 0)} \E[Y_a \mid S = 0, G = i] 
\end{align*}
\endgroup
and the result follows from dividing both sides by the first term on the right-hand side, which we can observe is equivalent to multiplying both sides by 
\begin{equation}
  \frac{\P(S = 0)}{\P(S = 0 \mid G = i)} = \frac{\P(S = 0)\P(G = i)}{\P(S = 0, G = i)} = \frac{\P(G = i)}{\P(G = i \mid S = 0)}
\end{equation}
\end{proof}

\subsection{Asymptotic normality of cross-fitted transported GATE estimators}
\label{subsec:normal}
\Normal*

\textit{Proof sketch}: Our strategy for the proof consists of two stages. First, we show that if the nuisance function is known to be $\eta_0$ and plugged into the estimator as $\E[\tilde{Y}(\eta_0,\hat{\pi}_g)|G=i]$, the resulting estimator, which we later refer to as the oracle estimator, is asymptotically normal. Second, we show that even if the true nuisance function is not known, as long as we have an estimator, $\hat{\eta}$, of the nuisance function that follows certain properties, the resulting estimator $\E[\tilde{Y}(\hat{\eta},\hat{\pi}_g)|G=i]$ converges to the oracle estimator in probability. Then, by Slutsky's Theorem, the resulting estimator is also asymptotically normal.

Before diving into the first stage of the proof, we introduce additional notation to reflect the cross-fitting nature of our GATE estimator.  Let the combined sample size of the observational study and RCT be $N$ with sample indices $[N] \coloneqq \{1,2,...,N\}$. We denote $(I_m)_{m=1}^M$ as a $M$-fold random partition of $[N]$, so that each fold has size $N_M = N/M$. The plug-in nuisance function estimate for the $m^{\text{th}}$ fold, $\hat{\eta}_m$, is then estimated from the rest of the folds $I_m^c \coloneqq [N]\backslash I_m$. For brevity, we denote the size of the rest of the folds as $N_M^c = N - N/M$.

We now restate the definition of the treatment effect signal $\tilde{Y}_j(\eta, \pi_g) = \tilde{Y}_j((g_1, g_0, e_1, p)^\top, \pi_g)$:
\begin{align*}
    \tilde{Y}_j(\eta, \pi_g) &\coloneqq \tilde{Y}^1_j(\eta, \pi_g) - \tilde{Y}^0_j(\eta, \pi_g)\\
    \tilde{Y}^{a}_j(\eta, \pi_g) &\coloneqq \frac{1}{\pi_g(G_i)} \left(\1{S_j = 1, A_j = a} \cdot \frac{1 - p(X_j)}{p(X_j) e_a(X_j)} \cdot  \{ Y_j - g_a(X_j)\} + (1 - S_j) g_a(X_j)\right)
\end{align*}
In the remainder of the development, we will drop the subscript $j$, which represents one of the $N$ samples, for conciseness. 

\textbf{Stage 1} — \textit{Proving the asymptotic normality of the oracle estimator}

For brevity, we define the following unweighted signal:
\begin{definition}[Unweighted signal functional]
\begin{align*}
    \mathcal{Y}(\eta) &= \pi_g(G)\tilde{Y}(\eta, \pi_g)\\
    &= \pi_g(G)(\tilde{Y}^1(\eta, \pi_g)-\tilde{Y}^0(\eta, \pi_g))\\ 
    &= (1-S)(g_1(X)-g_0(X)) + S\frac{1-p(X)}{p(X)}\frac{(A-e_1(X))(Y-g_{A}(X))}{e_1(X)e_0(X)}
\end{align*}
\end{definition}

From the proof of Proposition ~\ref{prop:correction}, we have the following identities for the unweighted signals:

\begin{lemma}[Conditional mean of unweighted (oracle) signal]
    \label{lem:oracle_signal_mean}
\textit{The conditional mean of the unweighted (oracle) signal is equivalent to the following:}
    \begin{align*}
        &\E[\mathcal{Y}(\eta_0)|G = i] = \tau_i\pi_g(i)\\
        &\E[\mathcal{Y}(\eta_0)|G = i, S = 0] = \tau_i
\end{align*}.
\end{lemma}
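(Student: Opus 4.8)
The plan is to reduce both identities to a single computation, that of the conditional expectation $\E[\mathcal{Y}(\eta_0)\mid X, S]$, and then obtain the two claims by iterated expectations. First I would isolate the doubly-robust correction term $S\,\frac{1-p_0(X)}{p_0(X)}\cdot\frac{(A-e_{10}(X))(Y-g_{A0}(X))}{e_{10}(X)e_{00}(X)}$ appearing in the expanded form of $\mathcal{Y}(\eta_0)$ and argue that it has conditional mean zero given $(X,S)$: it is identically zero on $\{S=0\}$ because of the factor $S$, and on $\{S=1\}$ its conditional mean given $(X,S=1)$ vanishes since $\E[Y-g_{a0}(X)\mid A=a, S=1, X]=0$ by the very definition $g_{a0}(X)=\E[Y\mid A=a, S=1, X]$ (this is the same mean-zero observation used in the proof of Proposition~\ref{prop:correction}, now carried out conditionally on $S$). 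Consequently $\E[\mathcal{Y}(\eta_0)\mid X, S] = (1-S)\big(g_{10}(X)-g_{00}(X)\big)$.

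Next I would rewrite $g_{a0}$ in terms of potential-outcome means in the target distribution. Starting from $g_{a0}(X)=\E[Y\mid A=a, S=1, X]$, consistency and conditional exchangeability over $A$ (Assumption~\ref{asmp:internal_validity}) give $g_{a0}(X)=\E[Y_a\mid S=1, X]$, and conditional exchangeability over $S$ (Assumption~\ref{asmp:external_validity}) then gives $g_{a0}(X)=\E[Y_a\mid X]=\E[Y_a\mid S=0, X]$. Hence $\E[\mathcal{Y}(\eta_0)\mid X, S] = (1-S)\,\E[Y_1-Y_0\mid S=0, X]$.

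From here both identities follow by iterated expectations, using that $G$ is a deterministic function of $X$ (so the event $\{G=i\}$ may be freely inserted inside an expectation conditioned on $X$, as it equals $\{X\in\cG_i\}$). For the second identity, condition on $S=0$: the factor $(1-S)$ equals $1$, so $\E[\mathcal{Y}(\eta_0)\mid G=i, S=0] = \E[\E[Y_1-Y_0\mid S=0, X, G=i]\mid G=i, S=0] = \E[Y_1-Y_0\mid S=0, G=i] = \tau_i$, the last equality being the definition of the validation effect (recall $S=0$ corresponds to the RCT distribution $D=0$ and $i\in\cI_R$). For the first identity, take the outer expectation over $S$ as well: since the integrand $(1-S)\E[Y_1-Y_0\mid S=0, X]$ vanishes on $\{S=1\}$, we get $\E[\mathcal{Y}(\eta_0)\mid G=i] = \P(S=0\mid G=i)\,\E[Y_1-Y_0\mid S=0, G=i] = \pi_g(i)\,\tau_i$.

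The only real subtlety — and hence the "main obstacle" — is bookkeeping: justifying the mean-zero reduction of the correction term by the law of total expectation exactly as in Proposition~\ref{prop:correction}, and correctly tracking which of Assumptions~\ref{asmp:internal_validity} and~\ref{asmp:external_validity} is invoked to pass from $g_{a0}(X)=\E[Y\mid A=a,S=1,X]$ to $\E[Y_a\mid S=0,X]$. No new estimates, inequalities, or convergence rates are needed; this lemma is essentially a conditional-on-$S$ refinement of the identity already established for $Y^a(\eta_0,\pi_0)$ in Proposition~\ref{prop:correction}.
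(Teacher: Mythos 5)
Your proposal is correct and follows essentially the same route as the paper: the mean-zero reduction of the doubly-robust correction term, the identification chain $g_{a0}(X)=\E[Y\mid A=a,S=1,X]=\E[Y_a\mid S=0,X]$ via Assumptions~\ref{asmp:internal_validity} and~\ref{asmp:external_validity}, and the tower property using that $G$ is a function of $X$ are exactly the ingredients the paper uses (the paper obtains the first identity by factoring out $\pi_g(i)$ and citing Proposition~\ref{prop:correction}, whose proof contains the computation you inline). Your version is just a more self-contained reorganization, deriving both identities from the single conditional expectation $\E[\mathcal{Y}(\eta_0)\mid X,S]=(1-S)(g_{10}(X)-g_{00}(X))$, and it also spells out the identification step that the paper's one-line ``which is $\tau_i$ as desired'' leaves implicit.
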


\begin{proof}
    First, we have, 
    \begin{align*}
        \E[\mathcal{Y}(\eta_0) | G=i] &= \E[\pi_g (G) \tilde{Y}(\eta_0, \pi_g) | G = i] \\ 
        &= \E[\pi_g(i) \tilde{Y}(\eta_0, \pi_g) | G = i] \\
        &= \pi_g(i) \E[\tilde{Y}(\eta_0, \pi_g) | G = i] \\
        &= \tau_i \pi_g(i)  
    \end{align*} 
    
    Next, using Definition D.1 of the unweighted signal functional and the fact that we condition on $S=0$, we have, 
    \begin{align*}
        \E[\mathcal{Y}(\eta_0)|G=i,S=0] = \E\left[g_{10}(X)-g_{00}(X)|G=i, S=0\right],
    \end{align*}
    which is $\tau_i$ as desired.
\end{proof}

In addition, we can rewrite our estimator $\hat{\tau}_i \coloneqq \E[\tilde{Y}(\hat{\eta},\hat{\pi}_g)|G=i]$ with the unweighted signals:
\begin{align*}
    \hat{\tau}_i =\E[\tilde{Y}(\hat{\eta},\hat{\pi}_g)|G=i] &= \frac{\sum_m \sum_{j \in I_m} \mathbf{1}(G_j = i)\tilde{Y}(\hat{\eta}_m, \hat{\pi}_g)}{\sum_j\mathbf{1}(G_j = i)}\\
    &= \frac{\sum_m \sum_{j \in I_m} \mathbf{1}(G_j = i)\frac{1}{\hat{\pi}_g(G_j)}\mathcal{Y}(\hat{\eta}_m)}{\sum_j\mathbf{1}(G_j = i)}, \quad\text{from \textbf{Def. D.1}}\\
    &= \frac{\sum_m \sum_{j \in I_m} \mathbf{1}(G_j = i)\frac{1}{\hat{\pi}_g(i)}\mathcal{Y}(\hat{\eta}_m)}{\sum_j\mathbf{1}(G_j = i)}\\
    &= \frac{1}{\hat{\pi}_g(i)}\frac{\sum_m \sum_{j \in I_m} \mathbf{1}(G_j = i)\mathcal{Y}(\hat{\eta}_m)}{\sum_j\mathbf{1}(G_j = i)}\\
    &= \frac{\sum_m \sum_{j \in I_m} \mathbf{1}(G_j = i)\mathcal{Y}(\hat{\eta}_m)}{\frac{\sum_j\mathbf{1}(G_j = 1, S_j = 0)}{\sum_j\mathbf{1}(G_j = i)}\sum_j\mathbf{1}(G_j = i)}\\
    &= \frac{\sum_m \sum_{j \in I_m} \mathbf{1}(G_j = i)\mathcal{Y}(\hat{\eta}_m)}{\sum_j \mathbf{1}(G_j = 1, S_j = 0)}
\end{align*}

Now, using the above expression, we can define the oracle estimator, where we \textit{know} the true value of $\eta$, which is $\eta_0$:
\begin{definition}[Oracle GATE Estimator]
    \[\hat{\tau}_{i0} \coloneqq \frac{\sum_j \mathbf{1}(G_j = i)\mathcal{Y}_j(\eta_0)}{\sum_j\mathbf{1}(G_j = i, S_j = 0)}\]
\end{definition}

To show the asymptotic distribution of the oracle GATE estimator, we restate several assumptions:

\BoundedP*
\BoundedCateVariance*
\BoundedE*
\BoundedVariance*

These assumptions ensure that the oracle signals have finite conditional variance, which we prove in the following lemma.

\begin{restatable}[Finite conditional variance of unweighted oracle signal]{lemma}{OracleVariance}
    \label{lem:oracle_variance}
    Under Assumptions \ref{asmp:bounded_p} - \ref{asmp:bounded_variance}, we have that,
    \[var[\mathcal{Y}(\eta_0)|G = i] \coloneqq \sigma_i^2 < \infty, \forall i \in [d]\]
\end{restatable}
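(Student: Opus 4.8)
The plan is to prove the stronger statement that the second moment $\E[\mathcal{Y}(\eta_0)^2 \mid G=i]$ is finite, from which $\sigma_i^2 = \mathrm{var}[\mathcal{Y}(\eta_0)\mid G=i] < \infty$ is immediate. The structural fact that makes this clean is that, in the definition of $\mathcal{Y}(\eta_0)$ (Def.~D.1), the two summands carry the complementary $0/1$ indicators $(1-S)$ and $S$. First I would write $\mathcal{Y}(\eta_0) = (1-S)\,T_1 + S\,T_2$ with $T_1 \coloneqq g_{10}(X)-g_{00}(X)$ and $T_2 \coloneqq \frac{1-p_0(X)}{p_0(X)}\cdot\frac{(A-e_{10}(X))(Y-g_{A0}(X))}{e_{10}(X)\,e_{00}(X)}$; since $S \in \{0,1\}$ we have $(1-S)^2 = 1-S$, $S^2 = S$, and $(1-S)S = 0$, so the cross term vanishes and $\mathcal{Y}(\eta_0)^2 = (1-S)\,T_1^2 + S\,T_2^2$. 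Taking conditional expectations gives $\E[\mathcal{Y}(\eta_0)^2\mid G=i] = \P(S=0\mid G=i)\,\E[T_1^2\mid G=i,S=0] + \P(S=1\mid G=i)\,\E[T_2^2\mid G=i,S=1]$, so it remains to bound each of the two conditional second moments by a finite constant.

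For the $S=0$ term, I would use that $\E[T_1\mid G=i,S=0] = \tau_i$, which is precisely what is established in the proof of \Cref{prop:correction} (equivalently, \Cref{lem:oracle_signal_mean} specialized to $S=0$, where $\mathcal{Y}(\eta_0)$ reduces to $T_1$). Hence $\E[T_1^2\mid G=i,S=0] = \mathrm{var}[T_1\mid G=i,S=0] + \tau_i^2 = \sigma_{\tau i}^2 + \tau_i^2 < \infty$ by Assumption~\ref{asmp:bounded_cate_variance} (whose very statement presupposes a finite conditional mean $\tau_i$). For the $S=1$ term, the overlap assumptions give deterministic bounds on the weights: Assumption~\ref{asmp:bounded_p} gives $\frac{1-p_0(x)}{p_0(x)} \le \frac{1}{\varepsilon_p}$, Assumption~\ref{asmp:bounded_e} gives $\frac{1}{e_{10}(x)\,e_{00}(x)} \le \frac{1}{\varepsilon_e^2}$, and $|A-e_{10}(X)| \le 1$ since $A\in\{0,1\}$ and $e_{10}(X)\in(0,1)$; therefore $T_2^2 \le \frac{1}{\varepsilon_p^2\,\varepsilon_e^4}\,(Y-g_{A0}(X))^2$. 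Conditioning further on $(X,A)$ and applying Assumption~\ref{asmp:bounded_variance}, i.e. $\E[(Y-g_{a0}(x))^2\mid X=x,A=a,S=1]\le\bar\sigma^2$, the tower rule yields $\E[T_2^2\mid G=i,S=1] \le \frac{\bar\sigma^2}{\varepsilon_p^2\,\varepsilon_e^4} < \infty$. Combining, $\E[\mathcal{Y}(\eta_0)^2\mid G=i] \le \sigma_{\tau i}^2 + \tau_i^2 + \frac{\bar\sigma^2}{\varepsilon_p^2\,\varepsilon_e^4} < \infty$, which gives the lemma.

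The computation is routine once the decomposition is set up, so there is no genuinely hard step; the points requiring a little care are (i) noting that the cross term disappears purely because $S$ is a $0/1$ indicator, and (ii) bookkeeping of which conditioning event each assumption lives on — Assumption~\ref{asmp:bounded_variance} is stated given $S=1,A=a$ and so is usable only inside the $S=1$ summand after conditioning on $(X,A)$, while Assumption~\ref{asmp:bounded_cate_variance} concerns the $S=0$ (RCT) population and is used only inside the $S=0$ summand. One should also observe in passing that $\P(S=0\mid G=i)$ and $\P(S=1\mid G=i)$ are positive for the subgroups under consideration (both the RCT and the observational dataset have support there), so all of the conditional expectations above are well defined.
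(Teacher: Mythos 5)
Your proposal is correct and follows essentially the same route as the paper's proof: split $\mathcal{Y}(\eta_0)^2$ into the $S=0$ and $S=1$ pieces (the cross term vanishing because $S$ is a $0/1$ indicator), bound the former by $\sigma_{\tau i}^2+\tau_i^2$ via Assumption~\ref{asmp:bounded_cate_variance} and Lemma~\ref{lem:oracle_signal_mean}, and bound the latter by a constant times $\bar\sigma^2$ via the overlap Assumptions~\ref{asmp:bounded_p}, \ref{asmp:bounded_e} and \ref{asmp:bounded_variance}. The only cosmetic differences are that the paper subtracts the squared mean $\pi_g(i)^2\tau_i^2$ to compute the variance directly rather than just bounding the second moment, and uses the slightly sharper denominator bound $\varepsilon_e^2(1-\varepsilon_e)^2$ in place of your $\varepsilon_e^4$; neither affects correctness.
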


\begin{proof}
    \begin{align*}
        &var[\mathcal{Y}(\eta_0)|G = i]&\\ =&\E[\mathcal{Y}^2(\eta_0)|G = i] - [\E[\mathcal{Y}(\eta_0)|G = i]]^2&\\
        =&\E\left[\left((1-S)(g_{10}(X)-g_{00}(X))+\phantom{\frac{1}{1}}\right.\right.&\text{Lem. }\ref{lem:oracle_signal_mean}\\
        &\qquad \left.\left.S\frac{1-p_0(X)}{p_0(X)}\frac{(A-e_{10}(X))(Y-g_{A0}(X))}{e_{10}(X)e_{00}(X)}\right)^2\middle|G = i\right] - \pi_g(i)^2\tau_i^2&\\
        =&\left\{\E\left[(1-S)(g_{10}(X)-g_{00}(X))^2\middle|G=i\right]+\phantom{\frac{1}{1}} \right.& \text{since }S \in \{0, 1\}\\
        &\qquad \left.\E\left[S\left(\frac{1-p_0(X)}{p_0(X)}\frac{(A-e_{10}(X))(Y-g_{A0}(X))}{e_{10}(X)e_{00}(X)}\right)^2\middle|G = i\right]\right\} - \pi_g(i)^2\tau_i^2&\\
        =&\left\{\E\left[(g_{10}(X)-g_{00}(X))^2\middle|G=i, S=0\right]\pi_g(i)+\phantom{\frac{1}{1}} \right.&\\
        &\qquad \left.\E\left[\left(\frac{1-p_0(X)}{p_0(X)}\frac{(A-e_{10}(X))(Y-g_{A0}(X))}{e_{10}(X)e_{00}(X)}\right)^2\middle|G = i, S = 1\right](1-\pi_g(i))\right\} - \pi_g(i)^2\tau_i^2&\\
        =&\left\{\left[var\left[g_{10}(X)-g_{00}(X)\middle|G=i, S=0\right]+\tau_i^2\right]\pi_g(i)+\phantom{\frac{1}{1}} \right.&\\
        &\qquad \left.\E\left[\left(\frac{1-p_0(X)}{p_0(X)}\frac{(A-e_{10}(X))(Y-g_{A0}(X))}{e_{10}(X)e_{00}(X)}\right)^2\middle|G = i, S = 1\right](1-\pi_g(i))\right\} - \pi_g(i)^2\tau_i^2&\\
        <&\left\{\left[\sigma_{\tau i}^2+\tau_i^2\right]\pi_g(i)+\frac{\E[(Y-g_{A0}(X))^2|G=i, S=1]}{\varepsilon_\pi^2 \varepsilon_e^2 (1-\varepsilon_e)^2}(1-\pi_g(i))\right\}-\pi_g(i)^2\tau_i^2 &\text{Asmp. }\ref{asmp:bounded_p} - \ref{asmp:bounded_e}\\
        \le&\left\{\left[\sigma_{\tau i}^2+\tau_i^2\right]\pi_g(i)+\frac{\bar{\sigma}^2}{\varepsilon_\pi^2 \varepsilon_e^2 (1-\varepsilon_e)^2}(1-\pi_g(i))\right\}-\pi_g(i)^2\tau_i^2 < \infty &\text{Asmp. }\ref{asmp:bounded_variance}
    \end{align*}
\end{proof}

Now, using the above lemmas, we are ready to prove the main result of stage 1 of the proof, stated below.
\begin{restatable}[Asymptotic normality of oracle GATE estimator]{proposition}{OracleNormality} \label{prop:oracle_normality}
    Under Assumptions \ref{asmp:bounded_p} - \ref{asmp:bounded_variance},
    \[\sqrt{N}(\hat{\tau}_{i0} - \tau_i) \xrightarrow{d} \mathcal{N}\left(0, \frac{\sigma_{i}^2 - \pi_g(i)(1-\pi_g(i))\tau_i^2}{\pi_g(i)^2\P(G = i)}\right)\]
\end{restatable}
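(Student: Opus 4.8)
The plan is to express the oracle estimator $\hat{\tau}_{i0}$ as a smooth function of two i.i.d.\ sample means and then combine the multivariate CLT with the delta method. Write $\hat{\tau}_{i0} = \bar{A}_N/\bar{B}_N$ with $\bar{A}_N \coloneqq \frac{1}{N}\sum_j \mathbf{1}(G_j = i)\,\mathcal{Y}_j(\eta_0)$ and $\bar{B}_N \coloneqq \frac{1}{N}\sum_j \mathbf{1}(G_j = i, S_j = 0)$. Because $\eta_0$ is the true (deterministic) nuisance vector, the pairs $\big(\mathbf{1}(G_j = i)\mathcal{Y}_j(\eta_0),\ \mathbf{1}(G_j = i, S_j = 0)\big)$ are i.i.d.\ across $j$, so no cross-fitting bookkeeping is needed here. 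By Lemma~\ref{lem:oracle_signal_mean}, $\E[\bar{A}_N] = \P(G = i)\,\tau_i\,\pi_g(i)$ and $\E[\bar{B}_N] = \P(G = i)\,\pi_g(i)$, whence the ratio of means equals $\tau_i$ exactly.

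Next I would invoke the joint CLT for $(\bar{A}_N, \bar{B}_N)$. The coordinate $\mathbf{1}(G=i,S=0)$ is bounded and hence square-integrable, while $\E\big[(\mathbf{1}(G=i)\mathcal{Y}(\eta_0))^2\big] = \P(G=i)\,\E[\mathcal{Y}(\eta_0)^2 \mid G=i] < \infty$ by Lemma~\ref{lem:oracle_variance}, which is where Assumptions~\ref{asmp:bounded_p}--\ref{asmp:bounded_variance} enter. Thus $\sqrt{N}\big((\bar{A}_N, \bar{B}_N)^\top - (\mu_A, \mu_B)^\top\big) \cid \cN(0,\Sigma)$, where $\mu_A = \P(G=i)\tau_i\pi_g(i)$, $\mu_B = \P(G=i)\pi_g(i) > 0$, and $\Sigma$ is the covariance matrix of $\big(\mathbf{1}(G=i)\mathcal{Y}(\eta_0),\ \mathbf{1}(G=i,S=0)\big)$. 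Positivity of $\mu_B$, needed for the delta-method step below, holds because $\P(G=i)>0$ and $\pi_g(i)=\P(S=0\mid G=i)>0$ (the latter since $i$ indexes a validation group).

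I would then apply the delta method to $f(a,b)=a/b$, whose gradient at $(\mu_A,\mu_B)$ is $\nabla f = \tfrac{1}{\P(G=i)\pi_g(i)}(1,\,-\tau_i)^\top$ (using $\mu_A/\mu_B = \tau_i$). This gives $\sqrt{N}(\hat{\tau}_{i0}-\tau_i) \cid \cN\big(0,\ \nabla f^\top \Sigma\, \nabla f\big)$ with
\[
\nabla f^\top \Sigma\, \nabla f = \frac{\mathrm{Var}\big(\mathbf{1}(G=i)\mathcal{Y}(\eta_0) - \tau_i\,\mathbf{1}(G=i,S=0)\big)}{\big(\P(G=i)\pi_g(i)\big)^2}.
\]

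Finally, evaluating the numerator: since both terms vanish off the event $\{G=i\}$, it equals $\P(G=i)\,\E\big[(\mathcal{Y}(\eta_0)-\tau_i\mathbf{1}(S=0))^2 \mid G=i\big]$; expanding the square and substituting $\E[\mathcal{Y}(\eta_0)\mid G=i]=\tau_i\pi_g(i)$, $\E[\mathcal{Y}(\eta_0)\mathbf{1}(S=0)\mid G=i]=\tau_i\pi_g(i)$ (from the two identities of Lemma~\ref{lem:oracle_signal_mean}), $\E[\mathbf{1}(S=0)\mid G=i]=\pi_g(i)$, and $\E[\mathcal{Y}(\eta_0)^2\mid G=i]=\sigma_i^2+\tau_i^2\pi_g(i)^2$ yields $\P(G=i)\big(\sigma_i^2-\tau_i^2\pi_g(i)(1-\pi_g(i))\big)$. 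Substituting back recovers the claimed asymptotic variance $\big(\sigma_i^2-\pi_g(i)(1-\pi_g(i))\tau_i^2\big)/\big(\pi_g(i)^2\P(G=i)\big)$. The only real obstacle is this last computation, and specifically the need to keep in mind that the numerator and denominator of the ratio are correlated — so one must compute the variance of the single combined variable $\mathbf{1}(G=i)\mathcal{Y}(\eta_0)-\tau_i\mathbf{1}(G=i,S=0)$ rather than naively adding the variances of the two.
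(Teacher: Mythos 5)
Your proposal is correct and takes essentially the same route as the paper: the bivariate CLT plus delta method applied to the ratio $\bar{A}_N/\bar{B}_N$ is just a repackaging of the paper's argument, which centers the numerator as $\tfrac{1}{N}\sum_j \mathbf{1}(G_j=i)\bigl(\mathcal{Y}_j(\eta_0)-\tau_i\mathbf{1}(S_j=0)\bigr)$, applies the univariate CLT to it and the WLLN to the denominator, and concludes by Slutsky. The decisive computation — that $\mathrm{Var}\bigl(\mathbf{1}(G=i)\mathcal{Y}(\eta_0)-\tau_i\mathbf{1}(G=i,S=0)\bigr)=\bigl(\sigma_i^2-\tau_i^2\pi_g(i)(1-\pi_g(i))\bigr)\P(G=i)$ via Lemmas~\ref{lem:oracle_signal_mean} and~\ref{lem:oracle_variance} — is identical in both.
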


\begin{proof}
    We have that,
    \begin{align*}
        &\sqrt{N}(\hat{\tau}_{i0}-\tau_i)\\
        = &\sqrt{N}\left(\frac{\sum_j \mathbf{1}(G_j = i)\mathcal{Y}_j(\eta_0)}{\sum_j\mathbf{1}(G_j = i, S_j = 0)}-\tau_i\right) &\\
        = &\sqrt{N}\left(\frac{\sum_j \mathbf{1}(G_j = i)\mathcal{Y}_j(\eta_0)}{\sum_j\mathbf{1}(G_j = i, S_j = 0)}-\frac{\sum_j\mathbf{1}(G_j = i, S_j = 0)\tau_i}{\sum_j\mathbf{1}(G_j = i, S_j = 0)}\right) &\\
        = &\sqrt{N}\frac{\sum_j \mathbf{1}(G_j = i)(\mathcal{Y}_j(\eta_0)-\tau_i\mathbf{1}(S_j=0))}{\sum_j\mathbf{1}(G_j = i, S_j =0)}&\\
        = &\frac{\sqrt{N}\frac{1}{N}\sum_j \mathbf{1}(G_j = i)(\mathcal{Y}_j(\eta_0)-\tau_i\mathbf{1}(S_j=0))}{\frac{1}{N}\sum_j\mathbf{1}(G_j = i, S_j = 0)}\;\; \genfrac{}{}{0pt}{}{\xrightarrow{d} \mathcal{N}(0, (\sigma^2_i- \tau_i^2\pi_g(i)(1-\pi_g(i)))\P(G=i))\hfill}{\xrightarrow{p}\P(G=i,S=0) = \P(G=i)\pi_g(i)\hfill}&\genfrac{}{}{0pt}{}{\hfill \text{Proven below}}{\hfill\text{From WLLN}}\\
        \xrightarrow{d}&\mathcal{N}\left(0, \frac{\sigma^2_i- \tau_i^2\pi_g(i)(1-\pi_g(i))}{\pi_g(i)^2\P(G=i)}\right)&\text{Slutsky's lemma}
    \end{align*}
    In the above, we used the fact that,
    \begin{equation*}
        \sqrt{N}\left[\frac{1}{N}\sum_j \mathbf{1}(G=i)(\mathcal{Y}(\eta_0) - \tau_i\mathbf{1}(S=0)) \right] \xrightarrow{d} \mathcal{N}(0, (\sigma^2_i- \tau_i^2\pi_g(i)(1-\pi_g(i)))\P(G=i))
    \end{equation*}
    To show this, we observe that 
    \begin{align*}
        &\E[\mathbf{1}(G=i)(\mathcal{Y}(\eta_0) - \tau_i\mathbf{1}(S=0))]&\\ 
        =&\E[\mathcal{Y}(\eta_0) - \tau_i\mathbf{1}(S=0)|G=i]\P(G=i)&\\
        =&(\E[\mathcal{Y}(\eta_0)|G=i] - \tau_i\E[\mathbf{1}(S=0)|G=i])\P(G=i)&\\
        =&(\E[\mathcal{Y}(\eta_0)|G=i] - \tau_i\pi_g(i))\P(G=i) = 0&\text{Lem. }\ref{lem:oracle_signal_mean}\\\\
        &var[\mathbf{1}(G=i)(\mathcal{Y}(\eta_0) - \tau_i\mathbf{1}(S=0))]&\\
        =&\E[\mathbf{1}(G=i)(\mathcal{Y}(\eta_0) - \tau_i\mathbf{1}(S=0))]^2 - (\E[\mathbf{1}(G=i)(\mathcal{Y}(\eta_0) - \tau_i\mathbf{1}(S=0))])^2&\\
        =&\E[\mathbf{1}(G=i)(\mathcal{Y}(\eta_0) - \tau_i\mathbf{1}(S=0))]^2&\\
        =&\E[\mathbf{1}(G=i)(\mathcal{Y}(\eta_0) - \tau_i\mathbf{1}(S=0))^2] & \mathbf{1}(G=i) \in \{0,1\}\\
        =&\E[(\mathcal{Y}(\eta_0) - \tau_i\mathbf{1}(S=0))^2|G=i]\P(G=i)&\\
        =&\left\{\E[\mathcal{Y}^2(\eta_0)|G=i] + \tau_i^2\E[\mathbf{1}(S=0)|G=i] -2\tau_i\E[\mathcal{Y}(\eta_0)\mathbf{1}(S=0)|G=i]\right\}\P(G=i)&\mathbf{1}(S=0) \in \{0,1\}\\
        =&\left\{var[\mathcal{Y}(\eta_0)|G=i] + (\E[\mathcal{Y}(\eta_0)|G=i])^2 + \tau_i^2\pi_g(i)\right.&\\ 
        &\qquad\left. -2\tau_i\pi_g(i)\E[\mathcal{Y}(\eta_0)|G=i, S=0]\right\}\P(G=i)&\\
        =&\left\{\sigma_i^2 + \tau_i^2 \pi_g(i)^2 + \tau_i^2\pi_g(i) -2\tau_i^2\pi_g(i)\right\}\P(G=i)&\text{Asmp. }\ref{lem:oracle_variance}\text{, Lem. }\ref{lem:oracle_signal_mean}\\
        =&(\sigma_i^2 - \tau_i^2\pi_g(i)(1-\pi_g(i)))\P(G=i) < \infty&
    \end{align*}
    Therefore, from central limit theorem,
    \begin{equation*}
        \sqrt{N}\left[\frac{1}{N}\sum_j \mathbf{1}(G=i)(\mathcal{Y}(\eta_0) - \tau_i\mathbf{1}(S=0)) \right] \xrightarrow{d} \mathcal{N}(0, (\sigma^2_i- \tau_i^2\pi_g(i)(1-\pi_g(i)))\P(G=i))
    \end{equation*}
\end{proof}

\textbf{Stage 2} — \textit{Proving the asymptotic normality of the cross-fitted estimator, $\tilde{Y}(\hat{\eta},\hat{\pi}_g)$}

With asymptotic normality of the oracle estimator shown above in Stage 1, we can show the asymptotic normality of the cross-fitted estimator (i.e. our estimator) by decomposing its error into the error of the oracle estimator and the difference between our estimator and the oracle estimator:
\begin{align*}
    \sqrt{N}(\hat{\tau}_i - \tau_i) &=
    \sqrt{N}(\hat{\tau}_{i0} - \tau_i) + \sqrt{N}(\hat{\tau}_i - \hat{\tau}_{i0}) \\
    &= \sqrt{N}(\hat{\tau}_{i0} - \tau_i) + \sqrt{N} \frac{\sum_m \sum_{j \in I_m} \mathbf{1}(G_j = i)(\mathcal{Y}_j(\hat{\eta}_m)-\mathcal{Y}_j(\eta_0))}{\sum_j\mathbf{1}(G_j = i, S_j = 0)} \\
    &= \sqrt{N}(\hat{\tau}_{i0} - \tau_i) + \frac{\sum_m \frac{1}{\sqrt{N}}\sum_{j \in I_m} \mathbf{1}(G_j = i)(\mathcal{Y}_j(\hat{\eta}_m)-\mathcal{Y}_j(\eta_0))}{\frac{1}{N}\sum_j\mathbf{1}(G_j = i, S_j = 0)} \\
\end{align*}
The asymptotic distribution of the cross-fitted estimator therefore hinges on the asymptotic property of $\frac{1}{\sqrt{N}}\sum_{j \in I_m} \mathbf{1}(G_j = i)(\mathcal{Y}_j(\hat{\eta}_m)-\mathcal{Y}_j(\eta_0))$, which in turn depends on the convergence property of the nuisance function estimate $\hat{\eta}_m$ and its influence on the signal $\mathcal{Y}$. We therefore restate the last required assumption governing the convergence properties of $\hat{\eta}_m$:

\Nuisance*

Based on the assumptions above, we have the following bounds on the convergence rate of the signals when the nuisance function estimates are in the high-probability neighborhood, $\mathcal{T}_n$:

\begin{restatable}[Bounds on bias of signal]{lemma}{BiasBound}
    \label{lem:bias_bound}
    Under Assumptions \ref{asmp:rate_product} and \ref{asmp:bounded_nuisance}
    \[\sqrt{n}\sup_{\eta\in\mathcal{T}_n}\left|\E[\mathbf{1}(G = i)(\mathcal{Y}(\eta)-\mathcal{Y}(\eta_0))]\right| = o_P(1)\]
\end{restatable}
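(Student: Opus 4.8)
The quantity to be controlled is a ``bias'' term of precisely the type that arises in the analysis of doubly-robust estimators: the goal is to show that replacing $\eta_0$ by a nearby $\eta$ perturbs the population mean of $\mathbf{1}(G=i)\mathcal{Y}(\eta)$ only through a \emph{product} of nuisance errors, so that the $o_P(n^{-1/2})$ rate then follows from Assumption~\ref{asmp:rate_product}. The plan is to proceed in four steps.

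\emph{(i) Reduce to a conditional computation.} Since $G = G(X)$, the indicator $\mathbf{1}(G=i)$ is $\sigma(X)$-measurable, so by the tower rule
\[
\E[\mathbf{1}(G=i)(\mathcal{Y}(\eta)-\mathcal{Y}(\eta_0))] = \E\!\left[\mathbf{1}(G=i)\,\E[\mathcal{Y}(\eta)-\mathcal{Y}(\eta_0)\mid X]\right],
\]
and it is enough to bound $\big|\E[\mathcal{Y}(\eta)-\mathcal{Y}(\eta_0)\mid X]\big|$ pointwise in $X$. \emph{(ii) Evaluate the conditional mean of the signal.} Using the form of the unweighted signal $\mathcal{Y}$ with the convention $e_0 = 1-e_1$, condition first on $S$ and then, on $\{S=1\}$, on $A$, substituting $\P(S=1\mid X) = p_0(X)$, $\P(A=a\mid S=1,X) = e_{a0}(X)$, and $\E[Y\mid S=1,A=a,X] = g_{a0}(X)$. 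This yields, with all functions evaluated at $X$,
\[
\E[\mathcal{Y}(\eta)\mid X] = (1-p_0)(g_1-g_0) + p_0\,\frac{1-p}{p}\!\left[\frac{e_{10}(g_{10}-g_1)}{e_1} - \frac{e_{00}(g_{00}-g_0)}{e_0}\right],
\]
which reduces to $(1-p_0)(g_{10}-g_{00})$ at $\eta=\eta_0$, in agreement with the oracle conditional mean underlying Lemma~\ref{lem:oracle_signal_mean}.

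\emph{(iii) Factor the difference into products of errors.} Writing $\delta_a := g_a - g_{a0}$, a direct rearrangement gives
\[
\E[\mathcal{Y}(\eta)-\mathcal{Y}(\eta_0)\mid X] = \delta_1\Big[(1-p_0) - p_0\tfrac{1-p}{p}\tfrac{e_{10}}{e_1}\Big] - \delta_0\Big[(1-p_0) - p_0\tfrac{1-p}{p}\tfrac{e_{00}}{e_0}\Big].
\]
Each bracket $B_a(X)$ vanishes identically when $(p,e_1) = (p_0,e_{10})$ and is a smooth rational function of $(p,e_1)$ whose partial derivatives in $(p,e_1)$ are bounded, uniformly in $X$, on the region where $p,e_1,1-e_1$ stay above $1/\bar{\mathcal{C}}$ --- which holds for every $\eta\in\mathcal{T}_n$ by Assumption~\ref{asmp:bounded_nuisance} (the true coefficients $p_0,e_{a0}$ lie in $[0,1]$). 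Hence $|B_a(X)| \le L\big(|p(X)-p_0(X)| + |e_1(X)-e_{10}(X)|\big)$ for a constant $L$ depending only on $\bar{\mathcal{C}}$, and therefore
\[
\big|\E[\mathcal{Y}(\eta)-\mathcal{Y}(\eta_0)\mid X]\big| \le L\sum_{a\in\{0,1\}} |\delta_a(X)|\,\big(|p(X)-p_0(X)| + |e_1(X)-e_{10}(X)|\big).
\]
\emph{(iv) Cauchy--Schwarz and the rate assumption.} Taking expectations, applying the triangle inequality, and bounding each summand by Cauchy--Schwarz with the definitions of $\mathbf{g}_n,\mathbf{e}_n,\mathbf{p}_n$ as suprema over $\mathcal{T}_n$ gives
\[
\sup_{\eta\in\mathcal{T}_n}\big|\E[\mathbf{1}(G=i)(\mathcal{Y}(\eta)-\mathcal{Y}(\eta_0))]\big| \le 2L\,\mathbf{g}_n(\mathbf{e}_n\vee\mathbf{p}_n).
\]
Multiplying by $\sqrt{n}$ and invoking Assumption~\ref{asmp:rate_product} yields the claim.

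I expect steps (ii)--(iii) to be the main obstacle. One must carry out the conditional-mean calculation carefully and then verify the crucial cancellation: the first-order piece coming from the $S=0$ component, namely $(1-p_0)(g_1-g_0-g_{10}+g_{00})$, is by itself only $O(\mathbf{g}_n)$, and it must be exactly absorbed against the matching terms produced by the $S=1$ inverse-probability component, so that only the cross terms $\delta_a\times(\text{error in }p\text{ or }e_1)$ survive. This is the mixed-bias / double-robustness structure of the Dahabreh transport score, and the bookkeeping --- in particular keeping track of the constraint $e_0 = 1-e_1$, which is also why no separate convergence rate for $e_0$ is needed --- is the only delicate part; steps (i) and (iv) are routine.
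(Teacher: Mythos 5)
Your proposal is correct and follows essentially the same route as the paper: the paper splits $\mathcal{Y}^1(\eta)-\mathcal{Y}^1(\eta_0)$ into three terms and shows the first two have zero conditional mean given $X$, leaving exactly your bracket $B_a=\frac{e_{a0}\delta_p+(1-p_0)p_0\delta_{e_a}+(1-p_0)\delta_p\delta_{e_a}}{p\,e_a}$ multiplying $\delta_a$, followed by the same H\"older/Cauchy--Schwarz step and Assumption~\ref{asmp:rate_product}. One small caveat: justify the bound on $B_a$ via this explicit expansion (as the paper does) rather than a mean-value argument in $(p,e_1)$, since the latter would require the connecting segment --- and hence the \emph{true} nuisances $p_0,e_{a0}$ --- to be bounded away from zero, which is not among the lemma's stated hypotheses, whereas the explicit form needs only Assumption~\ref{asmp:bounded_nuisance} on the estimated $1/p$ and $1/e_a$.
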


\begin{restatable}[Bounds on MSE of signal]{lemma}{MseBound}
    \label{lem:mse_bound}
    Under Assumptions \ref{asmp:bounded_p}, \ref{asmp:bounded_e}, \ref{asmp:bounded_variance}, \ref{asmp:rate} and \ref{asmp:bounded_nuisance}
    \[\sup_{\eta\in\mathcal{T}_n}\E\left|\mathbf{1}(G = i)(\mathcal{Y}(\eta)-\mathcal{Y}(\eta_0))\right|^2 = o_P(1)\]
\end{restatable}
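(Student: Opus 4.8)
The plan is to prove a \emph{deterministic} bound of the form
\begin{equation*}
\sup_{\eta\in\mathcal{T}_n}\E\big[\mathbf{1}(G=i)(\mathcal{Y}(\eta)-\mathcal{Y}(\eta_0))^2\big] \le C\big(\mathbf{g}_n^2 + \mathbf{e}_n^2 + \mathbf{p}_n^2\big),
\end{equation*}
where $C$ depends only on the lower bounds $\varepsilon_p,\varepsilon_e$ (Assumptions~\ref{asmp:bounded_p},~\ref{asmp:bounded_e}), the uniform bound $\bar{\mathcal{C}}$ (Assumption~\ref{asmp:bounded_nuisance}), and the outcome-variance bound $\bar{\sigma}^2$ (Assumption~\ref{asmp:bounded_variance}); the claim then follows at once from Assumption~\ref{asmp:rate}, since $\mathbf{g}_n \vee \mathbf{e}_n \vee \mathbf{p}_n = o_P(1)$. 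In contrast with the bias bound of Lemma~\ref{lem:bias_bound}, this MSE bound exploits no cancellation of first-order error, so only the individual rates are needed, not the product rate of Assumption~\ref{asmp:rate_product}. Since $\mathbf{1}(G=i)\le 1$ and $S\in\{0,1\}$, the signal $\mathcal{Y}(\eta)$ decomposes into disjoint $S=0$ and $S=1$ pieces, and it suffices to bound each. The $S=0$ piece is $(1-S)\big[(g_1-g_{10})-(g_0-g_{00})\big]$; squaring, applying $(a-b)^2\le 2a^2+2b^2$, and dropping the indicators gives a bound of $2\E[(g_1-g_{10})^2]+2\E[(g_0-g_{00})^2]\le 4\mathbf{g}_n^2$ after taking the supremum over $\eta\in\mathcal{T}_n$.

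The main work is the $S=1$ piece, governed by the inverse-weighted residual $W(\eta)\coloneqq \frac{1-p(X)}{p(X)}\frac{(A-e_1(X))(Y-g_A(X))}{e_1(X)e_0(X)}$. I will write $W$ as a product of the selection weight $h_1(p)=\tfrac{1}{p}-1$, the treatment weight $h_2(e_1)=\tfrac{A-e_1}{e_1 e_0}=\tfrac{A}{e_1}-\tfrac{1-A}{1-e_1}$, and the residual $Y-g_A$, and telescope the difference:
\begin{equation*}
W(\eta)-W(\eta_0) = \big(h_1(p)-h_1(p_0)\big)h_2(e_1)(Y-g_A) + h_1(p_0)\big(h_2(e_1)-h_2(e_{10})\big)(Y-g_A) - h_1(p_0)h_2(e_{10})(g_A-g_{A0}).
\end{equation*}
Applying $(x+y+z)^2\le 3(x^2+y^2+z^2)$ reduces the task to the second moment of each summand. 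Each weight factor and each weight \emph{difference} is controlled by the boundedness assumptions: $\tfrac{1}{p_0}\le\varepsilon_p^{-1}$ and $\tfrac{1}{e_{10}},\tfrac{1}{e_{00}}\le\varepsilon_e^{-1}$ handle the true nuisances, while $\tfrac1p,\tfrac1{e_1},\tfrac1{e_0}\le\bar{\mathcal{C}}$ handle the estimated ones, giving $|h_2(e_1)|\le\bar{\mathcal{C}}$, the Lipschitz estimates $|h_1(p)-h_1(p_0)|\le\bar{\mathcal{C}}\varepsilon_p^{-1}|p-p_0|$ and $|h_2(e_1)-h_2(e_{10})|\le\bar{\mathcal{C}}\varepsilon_e^{-1}|e_1-e_{10}|$.

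To control the outcome residual I will split $Y-g_A=(Y-g_{A0})-(g_A-g_{A0})$. Since $g_{a0}(X)=\E[Y\mid A=a,S=1,X]$, the cross term vanishes in conditional expectation, so using $|g_a|,|g_{a0}|\le\bar{\mathcal{C}}$ (Assumption~\ref{asmp:bounded_nuisance}, applied to $\eta$ and to $\eta_0$, which lies in the neighborhood $\mathcal{T}_n$) and Assumption~\ref{asmp:bounded_variance} gives $\E[(Y-g_A)^2\mid X,A=a,S=1]\le \bar{\sigma}^2+4\bar{\mathcal{C}}^2=:M$. Conditioning on $(X,A,S)$ in each of the three telescoped terms factors out this constant $M$ (note $\mathbf{1}(G=i)$ is a function of $X$), leaving the first term bounded by $M\bar{\mathcal{C}}^4\varepsilon_p^{-2}\E[(p-p_0)^2]\le C_1\mathbf{p}_n^2$, the second by $M\bar{\mathcal{C}}^2\varepsilon_p^{-2}\varepsilon_e^{-2}\E[(e_1-e_{10})^2]\le C_2\mathbf{e}_n^2$, and the third by $\varepsilon_p^{-2}\varepsilon_e^{-2}\E[(g_A-g_{A0})^2]\le 2C_3\mathbf{g}_n^2$ (using $(g_A-g_{A0})^2\le(g_1-g_{10})^2+(g_0-g_{00})^2$), each uniformly over $\eta\in\mathcal{T}_n$. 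Summing the $S=0$ and $S=1$ contributions yields the claimed deterministic bound, and Assumption~\ref{asmp:rate} completes the proof.

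The main obstacle is precisely the $S=1$ term: the inverse-probability weights $1/p$ and $1/(e_1e_0)$ multiply a residual $Y-g_A$ built from the \emph{estimated} outcome model, which has no a priori bounded second moment. The decisive step is the split of $Y-g_A$ into the true residual (conditional variance bounded by $\bar{\sigma}^2$ via Assumption~\ref{asmp:bounded_variance}) plus an estimation error (bounded pointwise by $2\bar{\mathcal{C}}$ via Assumption~\ref{asmp:bounded_nuisance}); once the residual and all weight factors are bounded by constants, what remains in each term is exactly a squared nuisance error whose $L_2$ norm is $\mathbf{p}_n$, $\mathbf{e}_n$, or $\mathbf{g}_n$, all $o_P(1)$ by Assumption~\ref{asmp:rate}. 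The remaining verifications — the Lipschitz bounds on $h_1,h_2$ and the arithmetic of the constants — are routine given the lower bounds $\varepsilon_p,\varepsilon_e$ and the uniform bound $\bar{\mathcal{C}}$.
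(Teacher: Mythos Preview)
Your argument is correct and reaches the same endpoint as the paper—$\sup_{\eta\in\mathcal{T}_n}\E[\,\cdot\,]\le C(\mathbf{g}_n^2+\mathbf{e}_n^2+\mathbf{p}_n^2)=o_P(1)$—but via a different decomposition. The paper first splits $\mathcal{Y}=\mathcal{Y}^1-\mathcal{Y}^0$ and, for each $\mathcal{Y}^a$, expands the weight fraction $\tfrac{1-p}{p\,e_a}$ algebraically to obtain three pieces $S_1,S_2,S_3$: $S_1$ carries the outcome-model error $\delta_{g_a}$ multiplied by \emph{true} weights, $S_2$ isolates the \emph{true} residual $Y-g_{a0}$ multiplied by the weight-error combination, and $S_3$ is the mixed term. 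It then bounds $\E S_j^2$ separately and reassembles by Cauchy--Schwarz, finally combining $\mathcal{Y}^1$ and $\mathcal{Y}^0$ by another triangle/Cauchy--Schwarz step. Your route instead exploits $S(1-S)=0$ to split off the $(1-S)$ piece at once and treats both treatment arms simultaneously through $g_A$; for the $S=1$ part you view the weight as a product $h_1(p)h_2(e_1)$ and telescope, which makes the Lipschitz dependence on $\varepsilon_p,\varepsilon_e,\bar{\mathcal{C}}$ explicit and avoids the separate $\mathcal{Y}^1/\mathcal{Y}^0$ bookkeeping. The paper's decomposition buys one small convenience: because $S_2$ contains the true residual $Y-g_{a0}$ from the outset, Assumption~\ref{asmp:bounded_variance} applies directly and no bound on $|g_{a0}|$ is needed. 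In your version the bound $\E[(Y-g_A)^2\mid X,A,S=1]\le \bar\sigma^2+4\bar{\mathcal C}^2$ uses $|g_{a0}|\le\bar{\mathcal C}$, which Assumption~\ref{asmp:bounded_nuisance} states only for $\eta\in\cup_n\mathcal{T}_n$; this is harmless if $\eta_0$ lies in (the closure of) these neighborhoods, and in any case is easily sidestepped by keeping $(g_A-g_{A0})^2$ inside the expectation and using $|p-p_0|,|e_1-e_{10}|\le 1$ to reduce the resulting product terms back to $\mathbf{g}_n^2$.
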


which in turn implies that $\frac{1}{\sqrt{N}}\sum_{j \in I_m} \mathbf{1}(G_j = i)(\mathcal{Y}_j(\hat{\eta}_m)-\mathcal{Y}_j(\eta_0))$ converges to zero in probability:

\begin{restatable}[Numerator of difference is $o_P(1)$]{lemma}{NumeratorVanish} Under Assumptions \ref{asmp:bounded_p}, \ref{asmp:bounded_e}, \ref{asmp:bounded_variance} and \ref{asmp:nuisance},
    \label{lem:numerator_vanish}
    \[\frac{1}{\sqrt{N}}\sum_{j\in I_m} \mathbf{1}(G_j = i)(\mathcal{Y}_j(\hat{\eta}_m)-\mathcal{Y}_j(\eta_0)) = o_P(1), \;\;\;\; \forall m \in \{1,2,...,M\}\]
\end{restatable}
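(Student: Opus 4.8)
The plan is to run the standard cross-fitting argument of \citet{Chernozhukov2018-nk}, in the form used by \citet{Semenova2021-zf}, thereby reducing the claim to the two auxiliary bounds \Cref{lem:bias_bound} and \Cref{lem:mse_bound}. Write $\Delta_j(\eta) \coloneqq \mathbf{1}(G_j = i)\bigl(\mathcal{Y}_j(\eta) - \mathcal{Y}_j(\eta_0)\bigr)$. The key structural fact is that $\hat{\eta}_m$ depends only on the out-of-fold data indexed by $I_m^c$, which has size $N_M^c = N - N/M$ with $N_M^c \asymp N$; conditioning on this data makes $\hat{\eta}_m$ a fixed nuisance vector and renders $\{\Delta_j(\hat{\eta}_m)\}_{j \in I_m}$ conditionally i.i.d. I would therefore split the sum into a centered empirical-process piece $T_1$ and a bias piece $T_2$,
\begin{align*}
  \frac{1}{\sqrt{N}}\sum_{j \in I_m} \Delta_j(\hat{\eta}_m)
   &= \underbrace{\frac{1}{\sqrt{N}}\sum_{j \in I_m}\Bigl(\Delta_j(\hat{\eta}_m) - \E[\Delta_j(\hat{\eta}_m)\mid I_m^c]\Bigr)}_{T_1} \\
   &\quad + \underbrace{\frac{|I_m|}{\sqrt{N}}\,\E[\Delta_1(\hat{\eta}_m)\mid I_m^c]}_{T_2},
\end{align*}
where $\E[\cdot\mid I_m^c]$ integrates a fresh sample point with $\hat{\eta}_m$ held fixed, and show $T_1 = o_P(1)$ and $T_2 = o_P(1)$ separately.

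For $T_2$, note $|I_m| = N/M$, so $|T_2| = (\sqrt{N}/M)\,\bigl|\E[\Delta_1(\hat{\eta}_m)\mid I_m^c]\bigr|$. On the event $\mathcal{E}_N \coloneqq \{\hat{\eta}_m \in \mathcal{T}_{N_M^c}\}$, which by \Cref{asmp:nuisance} has probability at least $1 - \epsilon_{N_M^c} \to 1$, one can replace the random conditional expectation by its supremum over $\mathcal{T}_{N_M^c}$; since $\sqrt{N}/M \le C\sqrt{N_M^c}$ for a constant $C$ (using $N_M^c = (1-1/M)N$), \Cref{lem:bias_bound} gives $|T_2|\,\mathbf{1}_{\mathcal{E}_N} \le C\,\sqrt{N_M^c}\,\sup_{\eta \in \mathcal{T}_{N_M^c}}\bigl|\E[\mathbf{1}(G=i)(\mathcal{Y}(\eta)-\mathcal{Y}(\eta_0))]\bigr| = o_P(1)$. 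Hence $T_2 = o_P(1)$.

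For $T_1$, conditional on $I_m^c$ the summands are i.i.d.\ and centered, so its conditional second moment is
\begin{align*}
  \E[T_1^2 \mid I_m^c] = \frac{|I_m|}{N}\,\operatorname{Var}\bigl(\Delta_1(\hat{\eta}_m) \mid I_m^c\bigr) \le \frac{1}{M}\,\E\bigl[\Delta_1(\hat{\eta}_m)^2 \mid I_m^c\bigr],
\end{align*}
which on $\mathcal{E}_N$ is at most $\tfrac{1}{M}\sup_{\eta \in \mathcal{T}_{N_M^c}} \E\bigl|\mathbf{1}(G=i)(\mathcal{Y}(\eta)-\mathcal{Y}(\eta_0))\bigr|^2 = o_P(1)$ by \Cref{lem:mse_bound}. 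Combining $\P(\mathcal{E}_N^c) \le \epsilon_{N_M^c} \to 0$ with a conditional Markov inequality then yields $T_1 = o_P(1)$; concretely, this is the standard lemma that a statistic whose conditional second moment (on an event of probability tending to one) is $o_P(1)$ is itself $o_P(1)$ (Lemma 6.1 of \citet{Chernozhukov2018-nk}). Finally, since $M$ is a fixed finite constant, the same conclusion holds after summing over the $M$ folds, which is exactly the stated claim.

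I expect the main obstacle to be the careful bookkeeping around the random nuisance estimate $\hat{\eta}_m$: \Cref{lem:bias_bound} and \Cref{lem:mse_bound} provide deterministic control only \emph{uniformly over} $\mathcal{T}_n$, so one must work on the event $\{\hat{\eta}_m \in \mathcal{T}_{N_M^c}\}$, argue that it has probability $1 - \epsilon_{N_M^c} \to 1$, only there pass from the random conditional moments to their suprema over $\mathcal{T}_{N_M^c}$, and absorb the complementary event into $\epsilon_{N_M^c} = o_P(1)$. Once that scaffolding is in place, the remaining ingredients — the bias/variance decomposition, conditioning to recover the i.i.d.\ structure, and Markov's inequality — are routine.
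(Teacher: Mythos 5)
Your proposal is correct and follows essentially the same route as the paper's proof: the identical decomposition into a centered empirical-process term and a conditional-bias term, the same conditioning on the out-of-fold data and the high-probability event $\{\hat{\eta}_m \in \mathcal{T}_{N_M^c}\}$, Chebyshev/Markov plus \Cref{lem:mse_bound} for the centered piece, and the $\sqrt{N_M^c}$ rescaling plus \Cref{lem:bias_bound} for the bias piece. The only cosmetic difference is that the paper defines its good event as the intersection over all $M$ folds rather than per fold, which changes nothing.
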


The proofs for Lemmas \ref{lem:bias_bound} to \ref{lem:numerator_vanish} are more labor-intensive and we defer these proofs to later subsections. Based on these lemmas, we arrive at the main result of Stage 2.

\begin{theorem}[Asymptotic normality of the cross-fitted transported GATE estimator] 
    \label{thm:normal}
    Under Assumptions \ref{asmp:bounded_p} -  \ref{asmp:nuisance},
    \[\sqrt{N}(\hat{\tau}_i - \tau_i) \xrightarrow[]{d} \mathcal{N}\left(0, \frac{\sigma^2_i- \tau_i^2\pi_g(i)(1-\pi_g(i))}{\pi_g(i)^2\P(G=i)}\right)\]
\end{theorem}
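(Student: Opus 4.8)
The plan is to complete the two-stage argument sketched just before the statement. Introduce the infeasible \emph{oracle} estimator $\hat{\tau}_{i0}$, which plugs the true nuisance vector $\eta_0$ into the unweighted signal, and use the exact algebraic decomposition
\begin{equation*}
\sqrt{N}(\hat{\tau}_i - \tau_i) = \sqrt{N}(\hat{\tau}_{i0} - \tau_i) + \frac{\sum_{m=1}^{M} \frac{1}{\sqrt{N}}\sum_{j \in I_m} \mathbf{1}(G_j = i)\bigl(\mathcal{Y}_j(\hat{\eta}_m)-\mathcal{Y}_j(\eta_0)\bigr)}{\frac{1}{N}\sum_j\mathbf{1}(G_j = i, S_j = 0)}.
\end{equation*}
The first summand will carry the limiting Gaussian law, and the correction term is to be shown $o_P(1)$; Slutsky's theorem then transfers the law to $\sqrt{N}(\hat{\tau}_i-\tau_i)$.

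For Stage~1, I would directly invoke Proposition~\ref{prop:oracle_normality}: writing $\hat{\tau}_{i0}-\tau_i$ as a ratio whose numerator is an empirical mean of $\mathbf{1}(G=i)(\mathcal{Y}(\eta_0)-\tau_i\mathbf{1}(S=0))$, Lemma~\ref{lem:oracle_signal_mean} shows that summand is mean zero and Lemma~\ref{lem:oracle_variance} (under Assumptions~\ref{asmp:bounded_p}--\ref{asmp:bounded_variance}) shows it has finite variance $(\sigma_i^2-\tau_i^2\pi_g(i)(1-\pi_g(i)))\P(G=i)$; the CLT handles the numerator, the WLLN gives $\tfrac1N\sum_j\mathbf{1}(G_j=i,S_j=0)\xrightarrow{p}\P(G=i)\pi_g(i)$ in the denominator, and Slutsky yields $\sqrt{N}(\hat{\tau}_{i0}-\tau_i)\cid\cN\bigl(0,\tfrac{\sigma_i^2-\pi_g(i)(1-\pi_g(i))\tau_i^2}{\pi_g(i)^2\P(G=i)}\bigr)$, which is exactly the claimed variance.

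For Stage~2, the same WLLN controls the denominator of the correction term, which is bounded away from zero since $\P(G=i,S=0)>0$ for a validation group. It then remains to show each fold contribution $\tfrac{1}{\sqrt N}\sum_{j\in I_m}\mathbf{1}(G_j=i)(\mathcal{Y}_j(\hat\eta_m)-\mathcal{Y}_j(\eta_0))$ is $o_P(1)$, which is precisely Lemma~\ref{lem:numerator_vanish}. The mechanism is cross-fitting: conditioning on the out-of-fold sample $I_m^c$ makes $\hat\eta_m$ deterministic and the $I_m$-summands i.i.d., so this term has conditional mean proportional to $\sqrt{N}\,\E[\mathbf{1}(G=i)(\mathcal{Y}(\eta)-\mathcal{Y}(\eta_0))]$ at $\eta=\hat\eta_m$ — $o_P(1)$ on $\{\hat\eta_m\in\mathcal{T}_n\}$ by Lemma~\ref{lem:bias_bound} — and conditional variance bounded by $\E|\mathbf{1}(G=i)(\mathcal{Y}(\eta)-\mathcal{Y}(\eta_0))|^2$ at $\eta=\hat\eta_m$, which is $o_P(1)$ by Lemma~\ref{lem:mse_bound}; a conditional Chebyshev bound plus $\P(\hat\eta_m\in\mathcal{T}_n)\to1$ finishes it. Summing over the $M$ folds and applying Slutsky to the displayed identity gives Theorem~\ref{thm:normal}.

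The main obstacle is Lemma~\ref{lem:bias_bound}, i.e.\ showing $\sqrt{n}\sup_{\eta\in\mathcal{T}_n}|\E[\mathbf{1}(G=i)(\mathcal{Y}(\eta)-\mathcal{Y}(\eta_0))]|=o_P(1)$. A crude expansion of $\mathcal{Y}(\eta)-\mathcal{Y}(\eta_0)$ yields only an $O_P(\mathbf{g}_n\vee\mathbf{e}_n\vee\mathbf{p}_n)$ bound, which is too slow once multiplied by $\sqrt n$; one must exploit the Neyman-orthogonal (doubly robust) structure of $\mathcal{Y}$ so that the first-order error terms in $g$, $e$, and $p$ integrate to zero, leaving only the cross-products $\mathbf{g}_n\mathbf{e}_n$ and $\mathbf{g}_n\mathbf{p}_n$, which Assumption~\ref{asmp:rate_product} makes $o_P(n^{-1/2})$ — this is exactly what permits merely $o_P(n^{-1/4})$ nuisance rates. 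The MSE bound of Lemma~\ref{lem:mse_bound} is comparatively routine, following from the overlap and boundedness Assumptions~\ref{asmp:bounded_p}, \ref{asmp:bounded_e}, \ref{asmp:bounded_variance}, \ref{asmp:bounded_nuisance} together with the consistency rate Assumption~\ref{asmp:rate}. The remaining care is bookkeeping: all folds have equal size $N/M$, and the oracle term, the correction numerator, and the common denominator converge jointly — automatic here, since the latter two converge in probability to constants.
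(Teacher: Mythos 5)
Your proposal is correct and follows essentially the same route as the paper's proof: the identical oracle-plus-correction decomposition, Stage~1 via Proposition~\ref{prop:oracle_normality} (CLT on the mean-zero summand from Lemma~\ref{lem:oracle_signal_mean}, variance from Lemma~\ref{lem:oracle_variance}, WLLN on the denominator, Slutsky), and Stage~2 via the conditional Chebyshev argument of Lemma~\ref{lem:numerator_vanish} resting on Lemmas~\ref{lem:bias_bound} and~\ref{lem:mse_bound}. You also correctly anticipated the one genuinely delicate step, namely that Lemma~\ref{lem:bias_bound} must exploit the orthogonal structure of the signal so that the first-order nuisance errors have zero conditional mean and only the cross-product terms survive, which is exactly how the paper's $S_1, S_2, S_3$ decomposition proceeds.
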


\begin{proof}
    \begin{align*}
        &\sqrt{N}(\hat{\tau}_i - \tau_i)&\\
        = &\sqrt{N}(\hat{\tau}_{i0} - \tau_i) + \sqrt{N}(\hat{\tau}_i - \hat{\tau}_{i0})& \\
        = &\sqrt{N}(\hat{\tau}_{i0} - \tau_i) + \sqrt{N} \frac{\sum_m \sum_{j \in I_m} \mathbf{1}(G_j = i)(\mathcal{Y}_j(\hat{\eta}_m)-\mathcal{Y}_j(\eta_0))}{\sum_j\mathbf{1}(G_j = i, S_j = 0)}& \\
        = &\sqrt{N}(\hat{\tau}_{i0} - \tau_i) + \frac{\sum_m \frac{1}{\sqrt{N}}\sum_{j \in I_m} \mathbf{1}(G_j = i)(\mathcal{Y}_j(\hat{\eta}_m)-\mathcal{Y}_j(\eta_0))}{\frac{1}{N}\sum_j\mathbf{1}(G_j = i, S_j = 0)}\genfrac{}{}{0pt}{}{\xrightarrow{p} 0\hfill}{\xrightarrow{p}\P(G=i,S=0)\hfill}&\genfrac{}{}{0pt}{}{\hfill \text{Lem. }\ref{lem:numerator_vanish}}{\hfill \text{WLLN}}\\
        \xrightarrow[]{d} &\mathcal{N}\left(0, \frac{\sigma^2_i- \tau_i^2\pi_g(i)(1-\pi_g(i))}{\pi_g(i)^2\P(G=i)}\right)& \text{Prop. }\ref{prop:oracle_normality}\text{, Slutsky's lemma}
    \end{align*}
\end{proof}

Note that \Cref{thm:normal} is simply \Cref{prop:normal}, which is the primary result of this section, with the variance explicitly stated. Thus, \Cref{prop:normal} is proven. 

\subsection{Proof for Lemmas \ref{lem:bias_bound} and \ref{lem:mse_bound}}
First, we prove Lemmas \ref{lem:bias_bound} and \ref{lem:mse_bound}, which will be necessary for Lemma \ref{lem:numerator_vanish}. Recall that Lemma \ref{lem:numerator_vanish} was essential for the proof of the asymptotic normality result in Theorem \ref{thm:normal}.

\BiasBound*
\MseBound*

\begin{proof}
We first define partial unweighted signal functionals for the two counterfactual outcomes
\begin{definition}[Partial unweighted signal functionals]
    \begin{align*}
        \mathcal{Y}^1(\eta) &\coloneqq \left[(1-S)g_1(X) + S\frac{1-p(X)}{p(X)}\frac{A(Y-g_1(X))}{e_1(X)}\right]\\
        \mathcal{Y}^0(\eta) &\coloneqq \left[(1-S)g_0(X) +  S\frac{1-p(X)}{p(X)}\frac{(1-A)(Y-g_0(X))}{e_0(X)}\right]\\
        \Rightarrow \mathcal{Y}(\eta) &= \mathcal{Y}^1(\eta) - \mathcal{Y}^0(\eta)
    \end{align*}
\end{definition}

At a high level, we will prove the above lemmas by decomposing the errors of signal functionals into simpler terms that can be bounded by standard concentration inequalities. This idea will be repeated for both the bias and MSE of the signals. To simplify the analysis, we can split up the unweighted signal into ``partial signals'' (for the treatment and control groups). Therefore, we set out to show the following lemmas: 
\begin{lemma}[Bounds on bias of partial signal]
     \label{lem:bias_bound_y10}
    Under Assumptions \ref{asmp:rate_product} and \ref{asmp:bounded_nuisance}
    \begin{align*}
        \sqrt{n}\sup_{\eta\in\mathcal{T}_n}\left|\E[\mathbf{1}(G = i)(\mathcal{Y}^1(\eta)-\mathcal{Y}^1(\eta_0))]\right| &= o_P(1)\\
        \sqrt{n}\sup_{\eta\in\mathcal{T}_n}\left|\E[\mathbf{1}(G = i)(\mathcal{Y}^0(\eta)-\mathcal{Y}^0(\eta_0))]\right| &= o_P(1)
    \end{align*}
\end{lemma}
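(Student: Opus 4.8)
The plan is to exploit the ``rate double robustness'' of the transported score: the conditional bias of $\mathcal{Y}^1$ under a misspecified nuisance $\eta$ factors as the product of the outcome-model error $g_1 - g_{10}$ with a term that vanishes when $p = p_0$ and $e_1 = e_{10}$, hence is of the order $|p - p_0| + |e_1 - e_{10}|$; this product is then killed by the single factor $\sqrt{n}$ via Assumption~\ref{asmp:rate_product}. Concretely, since $G$ is a deterministic function of $X$, the indicator $\mathbf{1}(G = i)$ is $\sigma(X)$-measurable, so by the tower property it suffices to bound $\E[\mathcal{Y}^1(\eta) - \mathcal{Y}^1(\eta_0)\mid X = x]$ pointwise and then integrate. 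Computing this conditional mean by conditioning first on $S$, then on $A$ --- using $\P(S = 1\mid X) = p_0$, $\P(A = 1\mid S = 1, X) = e_{10}$, and $\E[Y\mid S = 1, A = 1, X] = g_{10}$ --- gives
\[
  \E[\mathcal{Y}^1(\eta)\mid X = x] = (1 - p_0(x))\,g_1(x) + \frac{1 - p(x)}{p(x)}\cdot\frac{p_0(x)\,e_{10}(x)}{e_1(x)}\bigl(g_{10}(x) - g_1(x)\bigr),
\]
which at $\eta = \eta_0$ collapses to $(1 - p_0(x))\,g_{10}(x)$ (matching the identity derived in the proof of Proposition~\ref{prop:correction}). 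Subtracting, the conditional bias equals $\bigl(g_1(x) - g_{10}(x)\bigr)\,h(x)$, where $h(x) \coloneqq (1 - p_0(x)) - \tfrac{(1 - p(x))\,p_0(x)\,e_{10}(x)}{p(x)\,e_1(x)}$ and $h \equiv 0$ at $\eta_0$.

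The crux is to re-express $h$ as a sum of terms each carrying exactly one nuisance-error factor. Multiplying by $p(x)\,e_1(x)$ and substituting $1 - p = (1 - p_0) - (p - p_0)$ and $p\,e_1 - p_0\,e_{10} = (p - p_0)\,e_1 + p_0(e_1 - e_{10})$ yields
\[
  p(x)\,e_1(x)\,h(x) = \bigl(p(x) - p_0(x)\bigr)\bigl[(1 - p_0(x))\,e_1(x) + p_0(x)\,e_{10}(x)\bigr] + (1 - p_0(x))\,p_0(x)\bigl(e_1(x) - e_{10}(x)\bigr).
\]
Since the nuisances in $\bigcup_n \mathcal{T}_n$ are valid (bounded) conditional probabilities and, by Assumption~\ref{asmp:bounded_nuisance}, $1/p$ and $1/e_1$ are uniformly bounded by $\bar{\mathcal{C}}$, this gives $|h(x)| \le C\bigl(|p(x) - p_0(x)| + |e_1(x) - e_{10}(x)|\bigr)$ for a constant $C$ depending only on $\bar{\mathcal{C}}$, uniformly over $\eta \in \mathcal{T}_n$. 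Plugging this back in, bounding $\mathbf{1}(G = i) \le 1$, and applying Cauchy--Schwarz gives, for every $\eta \in \mathcal{T}_n$,
\[
  \bigl|\E[\mathbf{1}(G = i)(\mathcal{Y}^1(\eta) - \mathcal{Y}^1(\eta_0))]\bigr| \le C\,\|g_1 - g_{10}\|_2\bigl(\|p - p_0\|_2 + \|e_1 - e_{10}\|_2\bigr) \le 2C\,\mathbf{g}_n\,(\mathbf{p}_n \vee \mathbf{e}_n),
\]
using $\|g_1 - g_{10}\|_2 \le \mathbf{g}_n$, $\|p - p_0\|_2 \le \mathbf{p}_n$, $\|e_1 - e_{10}\|_2 \le \mathbf{e}_n$ on $\mathcal{T}_n$. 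Taking $\sup_{\eta \in \mathcal{T}_n}$ and multiplying by $\sqrt{n}$, the right-hand side is $2C\sqrt{n}\,\mathbf{g}_n(\mathbf{e}_n \vee \mathbf{p}_n) = o_P(1)$ by Assumption~\ref{asmp:rate_product}, which is the first claim. The bound for $\mathcal{Y}^0$ follows identically after the substitutions $A \mapsto 1 - A$, $g_1 \mapsto g_0$, $e_{10} \mapsto e_{00} = 1 - e_{10}$, $e_1 \mapsto e_0$: note $|e_0 - e_{00}| = |e_1 - e_{10}|$, so $\mathbf{e}_n$ again controls the propensity error, and $1/e_0 \le \bar{\mathcal{C}}$ by Assumption~\ref{asmp:bounded_nuisance}.

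The main obstacle is purely bookkeeping: choosing the cross terms to add and subtract so that $p\,e_1\,h$ collapses into a sum of products, each with a single factor $(p - p_0)$ or $(e_1 - e_{10})$ --- this is what upgrades the naive bound $|h| = O(1)$ into the second-order bound $|h| = O(|p - p_0| + |e_1 - e_{10}|)$ required to absorb $\sqrt{n}$. Everything else --- the $\sigma(X)$-measurability of $\mathbf{1}(G = i)$, the conditional-mean calculation, Cauchy--Schwarz, and invoking the rate-product assumption --- is routine. One caveat to record explicitly is that the elements of $\mathcal{T}_n$ should be genuine regression/probability objects, so that the factors $p_0, e_1, e_{10}$ lie in $[0,1]$ (and $|g_a| \le \bar{\mathcal{C}}$); this is implicit in the construction of the estimators and in Assumption~\ref{asmp:bounded_nuisance}.
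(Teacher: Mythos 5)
Your proposal is correct and follows essentially the same route as the paper: both arguments reduce to computing $\E[\mathcal{Y}^1(\eta)-\mathcal{Y}^1(\eta_0)\mid X]$, observe that it factors as $\delta_{g_1}$ times a term of order $|\delta_p|+|\delta_e|$ (your $h(x)$ is algebraically identical to the paper's surviving term $\E[S_3\mid X]/\delta_{g_1} = \bigl(e_{10}\delta_p+(1-p_0)p_0\delta_e+(1-p_0)\delta_p\delta_e\bigr)/(p\,e_1)$), and then apply Cauchy--Schwarz and the rate-product assumption. The only cosmetic difference is that the paper first decomposes the random difference into $S_1-S_2+S_3$ and kills $S_1,S_2$ by conditioning, whereas you condition first and factor afterwards.
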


\begin{lemma}[Bounds on MSE of partial signal]
    \label{lem:mse_bound_y10}
    Under Assumptions \ref{asmp:bounded_p}, \ref{asmp:bounded_e}, \ref{asmp:bounded_variance}, \ref{asmp:rate} and \ref{asmp:bounded_nuisance}
    \begin{align*}
        \sup_{\eta\in\mathcal{T}_n}\E\left|\mathbf{1}(G = i)(\mathcal{Y}^1(\eta)-\mathcal{Y}^1(\eta_0))\right|^2 &= o_P(1)\\
        \sup_{\eta\in\mathcal{T}_n}\E\left|\mathbf{1}(G = i)(\mathcal{Y}^0(\eta)-\mathcal{Y}^0(\eta_0))\right|^2 &= o_P(1)
    \end{align*}
\end{lemma}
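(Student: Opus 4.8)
The plan is to prove both displays of Lemma~\ref{lem:mse_bound_y10} at once, since the $\mathcal{Y}^0$ statement follows from the $\mathcal{Y}^1$ statement by the symmetric substitutions $A \leftrightarrow 1-A$, $e_1 \leftrightarrow e_0$, $g_1 \leftrightarrow g_0$. First I would discard the indicator: because $\mathbf{1}(G=i)\in\{0,1\}$ we have $|\mathbf{1}(G=i)\,Z|^2 = \mathbf{1}(G=i)\,Z^2 \le Z^2$ pointwise, so it suffices to show $\sup_{\eta\in\mathcal{T}_n}\E\,|\mathcal{Y}^1(\eta)-\mathcal{Y}^1(\eta_0)|^2 = o_P(1)$, and then restoring $\mathbf{1}(G=i)$ only shrinks the left-hand side.

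The core is a telescoping decomposition that isolates each nuisance error while keeping the only unbounded quantity, the true residual $R := Y - g_{10}(X)$, intact. Writing $w(X) := \tfrac{1-p(X)}{p(X)}\tfrac1{e_1(X)}$ and $w_0(X)$ for the same object at $\eta_0$, the definition of the partial signal gives $\mathcal{Y}^1(\eta)-\mathcal{Y}^1(\eta_0) = (1-S)\bigl(g_1(X)-g_{10}(X)\bigr) + SA\bigl(w(X)(Y-g_1(X)) - w_0(X)(Y-g_{10}(X))\bigr)$. The key algebraic move is $Y - g_1(X) = R + (g_{10}(X)-g_1(X))$, which yields $w(X)(Y-g_1(X)) - w_0(X)(Y-g_{10}(X)) = (w(X)-w_0(X))R + w(X)(g_{10}(X)-g_1(X))$, so that $R$ appears only multiplied by $w-w_0$. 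I would then telescope $w - w_0 = \bigl(\tfrac1p - \tfrac1{p_0}\bigr)\tfrac1{e_1} + \tfrac{1-p_0}{p_0}\bigl(\tfrac1{e_1} - \tfrac1{e_{10}}\bigr)$ and use $\bigl|\tfrac1p - \tfrac1{p_0}\bigr| = \tfrac{|p-p_0|}{p\,p_0}$ and $\bigl|\tfrac1{e_1}-\tfrac1{e_{10}}\bigr| = \tfrac{|e_1-e_{10}|}{e_1 e_{10}}$, bounding $1/p,\,1/e_1 \le \bar{\mathcal{C}}$ uniformly over $\eta\in\cup_n\mathcal{T}_n$ (Assumption~\ref{asmp:bounded_nuisance}) and $1/p_0 \le 1/\varepsilon_p$, $1/e_{10}\le 1/\varepsilon_e$ (Assumptions~\ref{asmp:bounded_p} and~\ref{asmp:bounded_e}); this produces $|w(X)|\le \bar{\mathcal{C}}^2$ and $|w(X)-w_0(X)| \le C_1|p(X)-p_0(X)| + C_2|e_1(X)-e_{10}(X)|$ with $C_1,C_2$ depending only on $\bar{\mathcal{C}},\varepsilon_p,\varepsilon_e$.

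At this point $\mathcal{Y}^1(\eta)-\mathcal{Y}^1(\eta_0)$ is a sum of a fixed number of terms, each of the form (pointwise-bounded factor)$\,\times\,$(one nuisance error), plus the single term $SA(w(X)-w_0(X))R$. I would apply $(\sum_k a_k)^2 \le K\sum_k a_k^2$, take expectations, and in the term containing $R$ use the tower property with $\E[R^2 \mid X,S=1,A=1] \le \bar{\sigma}^2$ (Assumption~\ref{asmp:bounded_variance}) to replace $R^2$ by a constant; every remaining expectation is then one of $\E[(g_1(X)-g_{10}(X))^2]\le\mathbf{g}_n^2$, $\E[(e_1(X)-e_{10}(X))^2]\le\mathbf{e}_n^2$, $\E[(p(X)-p_0(X))^2]\le\mathbf{p}_n^2$, uniformly over $\eta\in\mathcal{T}_n$. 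The conclusion is $\sup_{\eta\in\mathcal{T}_n}\E\,|\mathcal{Y}^1(\eta)-\mathcal{Y}^1(\eta_0)|^2 \le C\,(\mathbf{g}_n^2 + \mathbf{e}_n^2 + \mathbf{p}_n^2)$ with $C=C(\bar{\mathcal{C}},\varepsilon_p,\varepsilon_e,\bar{\sigma})$ not depending on $n$; by Assumption~\ref{asmp:rate} the right-hand side is $o_P(1)$, and the $\mathcal{Y}^0$ display follows by the stated symmetry (with the $e_0$-error equal to the $e_1$-error since $e_0=1-e_1$).

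The hard part is the bookkeeping in the second step: one must resist expanding $Y - g_1(X)$ naively, because $Y$ itself has no pointwise bound and $Y - g_1(X)$ mixes the genuine residual with a nuisance error. The whole argument hinges on arranging the telescoping so that $Y$ enters exactly once, through $R = Y - g_{10}(X)$, whose conditional second moment is the only thing Assumption~\ref{asmp:bounded_variance} controls; every other factor has to be either a nuisance error with a known $L^2$ rate or a quantity bounded uniformly over $\cup_n\mathcal{T}_n$ via Assumption~\ref{asmp:bounded_nuisance}. (The companion Lemma~\ref{lem:bias_bound_y10} reuses exactly this decomposition but additionally exploits Neyman orthogonality of the transported score so that the single-error terms vanish in expectation, leaving a product of the form $\mathbf{g}_n(\mathbf{e}_n\vee\mathbf{p}_n)$; that is where Assumption~\ref{asmp:rate_product} and the extra $\sqrt{n}$ factor enter, whereas the MSE bound here needs only Assumption~\ref{asmp:rate}.)
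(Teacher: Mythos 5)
Your proposal is correct and follows essentially the same route as the paper's proof: the identical algebraic decomposition that forces $Y$ to enter only through the residual $R=Y-g_{10}(X)$ multiplied by the weight difference $w-w_0$, followed by uniform bounds on the weights from Assumptions~\ref{asmp:bounded_p}, \ref{asmp:bounded_e} and \ref{asmp:bounded_nuisance}, the tower property with Assumption~\ref{asmp:bounded_variance} to control $R^2$, and the worst-case RMSE rates with Assumption~\ref{asmp:rate} to conclude. The only differences are cosmetic (you telescope $w-w_0$ term by term and use $(\sum_k a_k)^2\le K\sum_k a_k^2$, whereas the paper puts $w-w_0$ over a common denominator and handles cross terms via Cauchy--Schwarz), and your closing remark correctly identifies why the companion bias lemma needs the product-rate assumption while this one does not.
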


In the following subsections, we prove the $\mathcal{Y}_1$ part of Lemmas \ref{lem:bias_bound_y10} and \ref{lem:mse_bound_y10}. The $\mathcal{Y}_0$ part will follow by symmetry. First, we further define $\eta(X) = \eta_0(X) + \delta_\eta(X)$, in detail:
\begin{align*}
    g_1(X) &= g_{10}(X) + \delta_{g_1}(X)\\
    p(X) &= p_0(X) + \delta_p(X)\\
    e(Z) &= e_0(X) + \delta_e(X)
\end{align*}
so that (omitting the parameter $X$ for brevity),
\begin{align*}
    &\mathcal{Y}^1(\eta)-\mathcal{Y}^1(\eta_0)\\
    =&\left[(1-S)(g_{10}+\delta_{g_1})+\frac{1-p_0-\delta_p}{p_0+\delta_p}\frac{SA(Y-g_{10}-\delta_{g_1})}{e_0 + \delta_e}\right]- \left[(1-S)g_{10}+\frac{1-p_0}{p_0}\frac{SA(Y-g_{10})}{e_0}\right]\\
    =&(1-S)\delta_{g_1}+\frac{1-p_0-\delta_p}{p_0+\delta_p}\frac{SA(Y-g_{10}-\delta_{g_1})}{e_0 + \delta_e}-\frac{1-p_0}{p_0}\frac{SA(Y-g_{10}-\delta_{g_1})}{e_0}-\frac{1-p_0}{p_0}\frac{SA}{e_0}\delta_{g_1}\\
    =&\left((1-S) -\frac{1-p_0}{p_0}\frac{SA}{e_0}\right)\delta_{g_1}+\left(\frac{1-p_0-\delta_p}{(p_0+\delta_p)(e_0+\delta_e)}-\frac{1-p_0}{p_0 e_0}\right)SA(Y-g_{10}-\delta_{g_1})\\
    =&\left((1-S) -\frac{1-p_0}{p_0}\frac{SA}{e_0}\right)\delta_{g_1}-\frac{e_0 \delta_p + (1-p_0)p_0\delta_e + (1-p_0)\delta_p \delta_e}{(p_0+\delta_p)(e_0+\delta_e)p_0 e_0}SA(Y-g_{10}-\delta_{g_1})\\
    =&\underbrace{\left((1-S) -\frac{1-p_0}{p_0}\frac{SA}{e_0}\right)\delta_{g_1}}_{S_1} - \underbrace{\frac{e_0 \delta_p + (1-p_0)p_0\delta_e + (1-p_0)\delta_p \delta_e}{(p_0+\delta_p)(e_0+\delta_e)p_0 e_0}SA(Y-g_{10})}_{S_2} \\
    & \qquad\qquad\qquad\qquad\qquad\qquad\qquad\qquad\qquad\qquad\qquad + \underbrace{\frac{e_0 \delta_p + (1-p_0)p_0\delta_e + (1-p_0)\delta_p \delta_e}{(p_0+\delta_p)(e_0+\delta_e)p_0 e_0}SA\delta_{g_1}}_{S_3}\\
    \coloneqq& S_1 - S_2 + S_3
\end{align*}

\subsubsection{Proof for Lemma \ref{lem:bias_bound_y10}}

For Lemma \ref{lem:bias_bound_y10} we want to bound 
\begin{align*}
    \left|\E \left[\mathbf{1}(G=i)(\mathcal{Y}^1(\eta)-\mathcal{Y}^1(\eta_0))\right]\right| &= \left|\E\left[\E\left[ \mathbf{1}(G=i)\left(\mathcal{Y}^1(\eta)-\mathcal{Y}^1(\eta_0)\right)\middle|X\right]\right]\right|&\\
    &= \left|\E\left[\mathbf{1}(G=i)\E\left[S_1 - S_2 + S_3\middle|X\right]\right]\right|& G\text{ is a function of }X\\
    &= \left|\E\left[\mathbf{1}(G=i)\left(\E\left[S_1|X\right] - \E\left[S_2|X\right] + \E\left[S_3\middle|X\right]\right)\right]\right|&
\end{align*}

For the term $\E[S_1|X]$,
\begin{align*}
  &\E[S_1|X]\\
  =&\E \left[\left((1-S) -\frac{1-p_0}{p_0}\frac{SA}{e_0}\right)\delta_{g_1}\middle|X\right]& \\
  =&\left((1-\E[S|X]) -\frac{1-p_0}{p_0}\frac{\E[SA|X]}{e_0}\right)\delta_{g_1}& \eta_0, \delta_{\eta} \text{ are functions of }X\\
  =&\left((1-p_0) -\frac{1-p_0}{p_0}\frac{p_0 e_0}{e_0}\right)\delta_{g_1} = 0 & \genfrac{}{}{0pt}{}{\hfill p_0(X) = \P[S=1|X]}{\hfill e_0(X) = \P[A=1|S=1,X]}\\
\end{align*}

For the term $\E[S_2|X]$
\begin{align*}
    &\E[S_2|X]&\\
    =&\E \left[ \frac{e_0 \delta_p + (1-p_0)p_0\delta_e + (1-p_0)\delta_p \delta_e}{(p_0+\delta_p)(e_0+\delta_e)p_0 e_0}SA(Y-g_{10}) \middle| X\right]&\\
    =&\frac{e_0 \delta_p + (1-p_0)p_0\delta_e + (1-p_0)\delta_p \delta_e}{(p_0+\delta_p)(e_0+\delta_e)p_0 e_0}\E[SA(Y-g_{10}) \mid X]& \eta_0, \delta_{\eta} \text{ are functions of }X\\
    =&\frac{e_0 \delta_p + (1-p_0)p_0\delta_e + (1-p_0)\delta_p \delta_e}{(p_0+\delta_p)(e_0+\delta_e)p_0 e_0} \cdot 0 = 0 & g_{10}(X) = \E[Y|S=1,A=1,X]
\end{align*}

For the term $\E[S_3|X]$,
\begin{align*}
    &\E[S_3|X]&\\
    =&\E \left[ \frac{e_0 \delta_p + (1-p_0)p_0\delta_e + (1-p_0)\delta_p \delta_e}{(p_0+\delta_p)(e_0+\delta_e)p_0 e_0}SA \delta_{g_1}\middle| X\right]&\\
    =&\frac{e_0 \delta_p + (1-p_0)p_0\delta_e + (1-p_0)\delta_p \delta_e}{(p_0+\delta_p)(e_0+\delta_e)p_0 e_0} \E\left[SA \middle| X\right] \delta_{g_1}& \eta_0, \delta_{\eta} \text{ are functions of }X\\
    =&\frac{e_0 \delta_p + (1-p_0)p_0\delta_e + (1-p_0)\delta_p \delta_e}{(p_0+\delta_p)(e_0+\delta_e)p_0 e_0} p_0 e_0 \delta_{g_1}& \genfrac{}{}{0pt}{}{\hfill p_0(X) = \P[S=1|X]}{\hfill e_0(X) = \P[A=1|S=1,X]}\\
    =&\frac{e_0 \delta_p + (1-p_0)p_0\delta_e + (1-p_0)\delta_p \delta_e}{p e}\delta_{g_1}&
\end{align*}

Therefore,
\begin{align*}
    &\left|\E \left[\mathbf{1}(G=i)[\mathcal{Y}^1(\eta)-\mathcal{Y}^1(\eta_0)]\right]\right|^2&\\
    = &\left|\E \left[\mathbf{1}(G=i)\left(\E[S_1|Z] - \E[S_2|Z] + \E[S_3|Z]\right) \right]\right|^2&\\
    = &\left|\E \left[\mathbf{1}(G=i)\E[S_3|Z]\right]\right|^2&\\
    \le & \left(\E\left|\mathbf{1}(G=i)\E[S_3|Z]\right|\right)^2 & |EA| \le E|A|\\
    \le & \left(\E\left|\E[S_3|Z]\right|\right)^2 &\\
    = &\left(\E\left|\frac{e_0 \delta_p + (1-p_0)p_0\delta_e + (1-p_0)\delta_p \delta_e}{p e}\delta_{g_1}\right|\right)^2 &\\
    \le &\left(\E\left|\frac{e_0 \delta_p}{p e}\delta_{g_1}\right|+\E\left|\frac{(1-p_0)p_0\delta_e}{p e}\delta_{g_1}\right|+ \E\left|\frac{(1-p_0)\delta_p \delta_e}{p e}\delta_{g_1}\right|\right)^2& \text{Triangular ineq.}\\ 
    \le & \bar{\mathcal{C}}^4 \left(\E\left|\delta_p\delta_{g_1}\right| + \E\left|\delta_e\delta_{g_1}\right| + \E\left|\delta_p \delta_e \delta_{g_1}\right|\right)^2 & \text{Assmp. }\ref{asmp:bounded_nuisance}\\
    = & \bar{\mathcal{C}}^4 \left(\E|\delta_p\delta_{g_1}| + \E|\delta_e\delta_{g_1}| + \frac{1}{2}\E|\delta_e||\delta_p \delta_{g_1}| + \frac{1}{2}\E|\delta_p||\delta_e \delta_{g_1}|\right)^2 &\\
    \le & \bar{\mathcal{C}}^4 \left(\E|\delta_p\delta_{g_1}| + \E|\delta_e\delta_{g_1}| + \frac{1}{2}\E|\delta_p \delta_{g_1}| + \frac{1}{2}\E|\delta_e \delta_{g_1}|\right)^2 & |\delta_p|, |\delta_e| \le 1\\
    = & \frac{9}{4} \bar{\mathcal{C}}^4 \left(\E|\delta_p\delta_{g_1}| + \E|\delta_e\delta_{g_1}|\right)^2 &\\
    \le & \frac{9}{4} \bar{\mathcal{C}}^4 \left(\sqrt{\E\delta_p^2}\sqrt{\E\delta_{g_1}^2} + \sqrt{\E\delta_e^2}\sqrt{\E\delta_{g_1}^2}\right)^2 & \text{H\"{o}lder's ineq.}
\end{align*}
So we have, 
\begin{align*}
    &\sqrt{n}\sup_{\eta\in\mathcal{T}_n}\left|\E \mathbf{1}(G=i)[Y^1(\eta)-Y^1(\eta_0)]\right|&\\
    \le &\sqrt{n}\sup_{\eta\in\mathcal{T}_n} \frac{3}{2} \bar{\mathcal{C}}^2 \left(\sqrt{\E\delta_p^2}\sqrt{\E\delta_{g_1}^2} + \sqrt{\E\delta_e^2}\sqrt{\E\delta_{g_1}^2}\right)&\\
    \le &\frac{3}{2} \bar{\mathcal{C}}^2 \sqrt{n} \mathbf{g}_N \left(\mathbf{p}_N + \mathbf{e}_N\right)&\text{Assump. }\ref{asmp:nuisance}&\\
    = &o_P(1) &\text{Assump. }\ref{asmp:rate_product}
\end{align*}

\subsubsection{Proof for Lemma \ref{lem:mse_bound_y10}}
\allowdisplaybreaks
Here we first place bounds on $\E S_1^2, \E S_2^2$ and $\E S_3^2$ for future use. For the term $\E S_1^2$, we have,
\begin{align*}
    &\E S_1^2&\\
    = &\E\left[\E\left[ \left(\left((1-S) -\frac{1-p_0}{p_0}\frac{SA}{e_0}\right)\delta_{g_1}\right)^2 \middle| X \right]\right]&\\
    = &\E \left[\left(1-p_0\right)^2 \E\left[\left(\frac{1-S}{1-p_0} -\frac{SA}{p_0 e_0}\right)^2\middle|X\right]\delta_{g_1}^2\right]&\eta_0, \delta_{\eta} \text{ are functions of }X\\
    = & \E \left[\left(1-p_0\right)^2  \E\left[\frac{(1-S)^2}{(1-p_0)^2} +\frac{(SA)^2}{(p_0 e_0)^2}\middle|X\right]\delta_{g_1}^2\right]&S\in\{0,1\}\\
    = & \E \left[\left(1-p_0\right)^2  \E\left[\frac{1-S}{(1-p_0)^2} +\frac{SA}{(p_0 e_0)^2}\middle|X\right]\delta_{g_1}^2\right]&1-S, SA \in \{0, 1\}\\    
    = & \E \left[\left(1-p_0\right)^2 \left(\frac{1}{1-p_0} +\frac{1}{p_0 e_0}\right)\delta_{g_1}^2\right]&\genfrac{}{}{0pt}{}{\hfill p_0(X) = \P[S=1|X]}{\hfill e_0(X) = \P[A=1|S=1,X]}\\
    \le & \frac{2}{\varepsilon_p \varepsilon_e} \E\delta_{g_1}^2 & \text{Assmp. }\ref{asmp:bounded_p}, \ref{asmp:bounded_e}
\end{align*}

We can similarly bound $\E S_2^2$, 
\begin{align*}
    &\E S_2^2&\\
    = &\E\left[\E\left[\left(\frac{e_0\delta_{p} + (1-p_0)p_0\delta_e + (1-p_0)\delta_{p}\delta_e}{(p_0+\delta_{p})(e_0 + \delta_e)p_0 e_0}SA(Y-g_{10})\right)^2 \middle| X \right]\right]&\\
    = &\E\left[\left(\frac{e_0\delta_{p} + (1-p_0)p_0\delta_e + (1-p_0)\delta_{p}\delta_e}{(p_0+\delta_{p})(e_0 + \delta_e)p_0 e_0}\right)^2 \E\left[ \left(SA(Y-g_{10})\right)^2 \middle| X \right]\right]& \eta_0, \delta_{\eta} \text{ are functions of }X\\
    = &\E\left[\left(\frac{e_0\delta_{p} + (1-p_0)p_0\delta_e + (1-p_0)\delta_{p}\delta_e}{(p_0+\delta_{p})(e_0 + \delta_e)p_0 e_0}\right)^2 \right.&S,A \in \{0,1\}\\
    & \qquad\qquad\qquad\qquad \left.\E\left[ (Y-g_{10})^2 \middle| X, S=1, A=1 \right]\P(S=1, A=1|X)\right]&\\
    \le &E\left[ \left(\frac{e_0\delta_{p} + (1-p_0)p_0\delta_e + (1-p_0)\delta_{p}\delta_e}{(p_0+\delta_{p})(e_0 + \delta_e)p_0 e_0}\right)^2\bar{\sigma}^2 \P(S=1, A=1|X)\right]& \text{Assmp. }\ref{asmp:bounded_variance}\\
    = &E\left[ \left(\frac{e_0\delta_{p} + (1-p_0)p_0\delta_e + (1-p_0)\delta_{p}\delta_e}{p e p_0 e_0}\right)^2\bar{\sigma}^2 p_0 e_0\right]& \genfrac{}{}{0pt}{}{\hfill p_0(X) = \P[S=1|X]}{\hfill e_0(X) = \P[A=1|S=1,X]}\\
    \le & \frac{\bar{\sigma}^2 \bar{C}^4}{\varepsilon_p \varepsilon_e} \E\left|e_0\delta_{p} + (1-p_0)p_0\delta_e + (1-p_0)\delta_{p}\delta_e\right|^2& \text{Assmp. }\ref{asmp:bounded_p}, \ref{asmp:bounded_e}, \ref{asmp:bounded_nuisance}\\
    \le & \frac{\bar{\sigma}^2 \bar{C}^4}{\varepsilon_p \varepsilon_e} \E\left[e_0|\delta_{p}| + (1-p_0)p_0|\delta_e| + (1-p_0)|\delta_{p}\delta_e|\right]^2& \text{Triangular ineq.}\\
    \le & \frac{\bar{\sigma}^2 \bar{C}^4}{\varepsilon_p \varepsilon_e} \E\left[|\delta_{p}| + |\delta_e| + |\delta_{p}\delta_e|\right]^2&\\
    = & \frac{\bar{\sigma}^2 \bar{C}^4}{\varepsilon_p \varepsilon_e} \E\left[|\delta_{p}| + |\delta_e| + \frac{1}{2}|\delta_{p}||\delta_e|+\frac{1}{2}|\delta_{p}||\delta_e|\right]^2&\\
    \le & \frac{\bar{\sigma}^2 \bar{C}^4}{\varepsilon_p \varepsilon_e} \E\left[|\delta_{p}| + |\delta_e| + \frac{1}{2}|\delta_{p}|+\frac{1}{2}|\delta_e|\right]^2&|\delta_p|,|\delta_e| \le 1\\
    = & \frac{9}{4}\frac{\bar{\sigma}^2 \bar{C}^4}{\varepsilon_p \varepsilon_e} \E\left[|\delta_{p}| + |\delta_e|\right]^2&\\
    = & \frac{9}{4}\frac{\bar{\sigma}^2 \bar{C}^4}{\varepsilon_p \varepsilon_e} \left[\E\delta_{p}^2 + \E\delta_e^2 + 2\E|\delta_{p}||\delta_e|\right]&\\
    \le & \frac{9}{4}\frac{\bar{\sigma}^2 \bar{C}^4}{\varepsilon_p \varepsilon_e} \left[\E\delta_{p}^2 + \E\delta_e^2 + 2\sqrt{\E\delta_{p}^2}\sqrt{\E\delta_e^2}\right]&\text{Cauchy-Schwartz}\\
    =&\frac{9}{4}\frac{\bar{\sigma}^2 \bar{C}^4}{\varepsilon_p \varepsilon_e} \left(\sqrt{\E\delta_{p}^2} + \sqrt{\E\delta_e^2}\right)^2&
\end{align*}

Finally, we bound $\E S_3^2$, 
\begin{align*}
    &\E S_3^2 &\\
    = &\E\left[\E\left[\left(\frac{e_0\delta_{p} + (1-p_0)p_0\delta_e + (1-p_0)\delta_{p}\delta_e}{(p_0+\delta_{p})(e_0 + \delta_e)p_0 e_0}SA\delta_{g_1}\right)^2 \middle| X \right]\right]&\\
    = &\E\left[ \left(\frac{e_0\delta_{p} + (1-p_0)p_0\delta_e + (1-p_0)\delta_{p}\delta_e}{(p_0+\delta_{p})(e_0 + \delta_e)p_0 e_0}\delta_{g_1}\right)^2 \E\left[ (SA)^2 \middle| X \right]\right]& \eta_0, \delta_{\eta} \text{ are functions of }X\\
    = &\E\left[\left(\frac{e_0\delta_{p} + (1-p_0)p_0\delta_e + (1-p_0)\delta_{p}\delta_e}{(p_0+\delta_{p})(e_0 + \delta_e)p_0 e_0}\delta_{g_1}\right)^2 \E\left[ SA \middle| X \right]\right]& SA \in \{0, 1\}\\
    = &\E\left[\left(\frac{e_0\delta_{p} + (1-p_0)p_0\delta_e + (1-p_0)\delta_{p}\delta_e}{p e p_0 e_0}\delta_{g_1}\right)^2 p_0 e_0\right]& \genfrac{}{}{0pt}{}{\hfill p_0(X) = \P[S=1|X]}{\hfill e_0(X) = \P[A=1|S=1,X]}\\
    \le & \frac{\bar{C}^4}{\varepsilon_p \varepsilon_e} \E\left|\left(e_0\delta_{p} + (1-p_0)p_0\delta_e + (1-p_0)\delta_{p}\delta_e\right)\delta_{g_1}\right|^2& \text{Assmp. }\ref{asmp:bounded_p}, \ref{asmp:bounded_e}, \ref{asmp:bounded_nuisance}\\
    = & \frac{\bar{C}^4}{\varepsilon_p \varepsilon_e} \E\left[|e_0\delta_{p} + (1-p_0) p_0 \delta_e + (1-p_0)\delta_{p}\delta_e|^2|\delta_{g_1}|^2\right]&\\
    \le & \frac{\bar{C}^4}{\varepsilon_p \varepsilon_e} \E\left[\left(|e_0\delta_{p}| + |(1-p_0) p_0 \delta_e| + |(1-p_0)\delta_{p}\delta_e|\right)^2|\delta_{g_1}|^2\right]& \text{Triangular ineq.}\\
    \le & \frac{9\bar{C}^4}{\varepsilon_p \varepsilon_e} \E\delta_{{g_1}}^2& 0 \le p_0, e_0, |\delta_p|, |\delta_e| \le 1
\end{align*}

From the above, we have
\begin{align*}
    &\E\left|\mathbf{1}(G = i)(\mathcal{Y}^1(\eta)-\mathcal{Y}^1(\eta_0))\right|^2&\\
    =&\E|\mathbf{1}(G = i)||S_1-S_2+S_3|^2&\\
    \le&\E|S_1-S_2+S_3|^2&\\
    \le & \E(|S_1|+|S_2|+|S_3|)^2&\text{Triangular ineq.}\\
    = &\left[\E S_1^2 + \E S_2^2 + \E S_3^2 + 2\E|S_1 S_2| + 2\E|S_1 S_3| + 2\E|S_2 S_3|\right]&\\
    \le &\left[\E S_1^2 + \E S_2^2 + \E S_3^2 +\right.&\\
    &\qquad\; \left.2\sqrt{\E |S_1|^2}\sqrt{\E |S_2|^2} + 2\sqrt{\E |S_1|^2}\sqrt{\E |S_3|^2} + 2\sqrt{\E |S_2|^2}\sqrt{\E |S_3|^2}\right]&\text{Cauchy-Schwartz}\\
    = &\left[\sqrt{\E S_1^2}+\sqrt{\E S_1^2}+\sqrt{\E S_1^2}\right]^2&\\
    \le &\left[\sqrt{\frac{2}{\varepsilon_p \varepsilon_e}}\sqrt{\E \delta_{g_1}^2} + \sqrt{\frac{9}{4}\frac{\bar{\sigma}^2 \bar{C}^4}{\varepsilon_p \varepsilon_e}} \left(\sqrt{\E\delta_{p}^2} + \sqrt{\E\delta_e^2}\right) + \sqrt{\frac{9\bar{C}^4}{\varepsilon_p \varepsilon_e}} \sqrt{\E\delta_{{g_1}}^2}\right]^2&\\
    \le &C\left[\sqrt{\E \delta_{{g_1}}^2} + \sqrt{\E\delta_{p}^2} + \sqrt{\E\delta_e^2}\right]^2&C\coloneqq \frac{\left(3\bar{C}^2+\sqrt{2}\right)^2}{\varepsilon_p \varepsilon_e} \vee \frac{9}{4}\frac{\bar{\sigma}^2\bar{C}^4}{\varepsilon_p \varepsilon_e}
\end{align*}
So we have,
\begin{align*}
    &\sup_{\eta\in\mathcal{T}_n}\E\left|\mathbf{1}(G = i)(\mathcal{Y}^1(\eta)-\mathcal{Y}^1(\eta_0))\right|^2&\\
    \le &C\sup_{\eta\in\mathcal{T}_n}\left[\sqrt{\E \delta_{{g_1}}^2} + \sqrt{\E\delta_{p}^2} + \sqrt{\E\delta_e^2}\right]^2&\\
    \le &C\left[\sup_{\eta\in\mathcal{T}_n}\sqrt{\E \delta_{{g_1}}^2} + \sup_{\eta\in\mathcal{T}_n}\sqrt{\E\delta_{p}^2} + \sup_{\eta\in\mathcal{T}_n}\sqrt{\E\delta_e^2}\right]^2&\\
    \le &C\left[\mathbf{g}_n + \mathbf{p}_n + \mathbf{e}_n\right]^2&\text{Assmp. }\ref{asmp:nuisance}\\
    \le &9C\left[\mathbf{g}_n \vee \mathbf{p}_n \vee \mathbf{e}_n\right]^2 = o_P(1)&\text{Assmp. }\ref{asmp:rate}
\end{align*}

\subsubsection{Assembling the proofs for Lemmas \ref{lem:bias_bound} and \ref{lem:mse_bound}}
For Lemma \ref{lem:bias_bound}:
\begin{align*}
    &\sqrt{n}\sup_{\eta\in\mathcal{T}_n}\left|\E[\mathbf{1}(G = i)(\mathcal{Y}(\eta)-\mathcal{Y}(\eta_0))]\right|&\\
    =&\sqrt{n}\sup_{\eta\in\mathcal{T}_n}\left|\E[\mathbf{1}(G = i)(\mathcal{Y}^1(\eta)-\mathcal{Y}^1(\eta_0))] - \E[\mathbf{1}(G = i)(\mathcal{Y}^0(\eta)-\mathcal{Y}^0(\eta_0))]\right|&\\
    \le&\sqrt{n}\sup_{\eta\in\mathcal{T}_n}\left\{\left|\E[\mathbf{1}(G = i)(\mathcal{Y}^1(\eta)-\mathcal{Y}^1(\eta_0))]\right| + \left|\E[\mathbf{1}(G = i)(\mathcal{Y}^0(\eta)-\mathcal{Y}^0(\eta_0))]\right|\right\}&\text{Triangular ineq.}\\
    \le&\sqrt{n}\sup_{\eta\in\mathcal{T}_n}\left|\E[\mathbf{1}(G = i)(\mathcal{Y}^1(\eta)-\mathcal{Y}^1(\eta_0))]\right| + \sqrt{n}\sup_{\eta\in\mathcal{T}_n}\left|\E[\mathbf{1}(G = i)(\mathcal{Y}^0(\eta)-\mathcal{Y}^0(\eta_0))]\right| = o_P(1)&\text{Lem. }\ref{lem:bias_bound_y10}
\end{align*}
For Lemma \ref{lem:mse_bound}:
\begin{align*}
    &\sup_{\eta\in\mathcal{T}_n}\E\left|\mathbf{1}(G = i)(\mathcal{Y}(\eta)-\mathcal{Y}(\eta_0))\right|^2&\\
    =&\sup_{\eta\in\mathcal{T}_n}\E\left|\mathbf{1}(G = i)(\mathcal{Y}^1(\eta)-\mathcal{Y}^1(\eta_0)) - \mathbf{1}(G = i)(\mathcal{Y}^0(\eta)-\mathcal{Y}^0(\eta_0))\right|^2&\\
    \le&\sup_{\eta\in\mathcal{T}_n}\E\left\{\left|\mathbf{1}(G = i)(\mathcal{Y}^1(\eta)-\mathcal{Y}^1(\eta_0))\right| + \left|\mathbf{1}(G = i)(\mathcal{Y}^0(\eta)-\mathcal{Y}^0(\eta_0))\right|\right\}^2&\text{Triangular ineq.}\\
    =&\sup_{\eta\in\mathcal{T}_n}\left\{\E\left|\mathbf{1}(G = i)(\mathcal{Y}^1(\eta)-\mathcal{Y}^1(\eta_0))\right|^2 + \E\left|\mathbf{1}(G = i)(\mathcal{Y}^0(\eta)-\mathcal{Y}^0(\eta_0))\right|^2 + \right.&\\
    &\qquad\qquad \left.2\E\left|\mathbf{1}(G = i)(\mathcal{Y}^1(\eta)-\mathcal{Y}^1(\eta_0))\right|\left|\mathbf{1}(G = i)(\mathcal{Y}^0(\eta)-\mathcal{Y}^0(\eta_0))\right|\right\}&\\
    \le&\sup_{\eta\in\mathcal{T}_n}\left\{\E\left|\mathbf{1}(G = i)(\mathcal{Y}^1(\eta)-\mathcal{Y}^1(\eta_0))\right|^2 + \E\left|\mathbf{1}(G = i)(\mathcal{Y}^0(\eta)-\mathcal{Y}^0(\eta_0))\right|^2 + \right.&\text{Cauchy-Schwartz}\\
    &\qquad\qquad \left.2\sqrt{\E\left|\mathbf{1}(G = i)(\mathcal{Y}^1(\eta)-\mathcal{Y}^1(\eta_0))\right|^2}\sqrt{\E\left|\mathbf{1}(G = i)(\mathcal{Y}^0(\eta)-\mathcal{Y}^0(\eta_0))\right|^2}\right\}&\\
    \le&\sup_{\eta\in\mathcal{T}_n}\E\left|\mathbf{1}(G = i)(\mathcal{Y}^1(\eta)-\mathcal{Y}^1(\eta_0))\right|^2 + \sup_{\eta\in\mathcal{T}_n}\E\left|\mathbf{1}(G = i)(\mathcal{Y}^0(\eta)-\mathcal{Y}^0(\eta_0))\right|^2 + &\\
    &\qquad\qquad 2\sqrt{\sup_{\eta\in\mathcal{T}_n}\E\left|\mathbf{1}(G = i)(\mathcal{Y}^1(\eta)-\mathcal{Y}^1(\eta_0))\right|^2}\sqrt{\sup_{\eta\in\mathcal{T}_n}\E\left|\mathbf{1}(G = i)(\mathcal{Y}^0(\eta)-\mathcal{Y}^0(\eta_0))\right|^2}&\\
    =&o_P(1)&\text{Lem. } \ref{lem:mse_bound_y10}
\end{align*}
\end{proof}

\subsection{Proof for Lemma \ref{lem:numerator_vanish}}
Now that we have shown Lemmas \ref{lem:bias_bound} and \ref{lem:mse_bound}, it remains to show Lemma \ref{lem:numerator_vanish}. 

\NumeratorVanish*
\allowdisplaybreaks
\begin{proof}
    First we observe
    \begin{align*}
        &\frac{1}{\sqrt{N}}\sum_{j\in I_m} \mathbf{1}(G_j = i)(\mathcal{Y}_j(\hat{\eta}_m)-\mathcal{Y}_j(\eta_0))&\\
        = &\frac{1}{\sqrt{N}}\left[\sum_{j\in I_m} \mathbf{1}(G_j = i)(\mathcal{Y}_j(\hat{\eta}_m)-\mathcal{Y}_j(\eta_0)) - \E[\mathbf{1}(G = i)(\mathcal{Y}(\hat{\eta}_m)-\mathcal{Y}(\eta_0))]\right] +&\\
        &\qquad\qquad\qquad\qquad\qquad\qquad\qquad\qquad\qquad \frac{1}{\sqrt{N}}\sum_{j\in I_m}\E[\mathbf{1}(G = i)(\mathcal{Y}(\hat{\eta}_m)-\mathcal{Y}(\eta_0))] & \\
        = &\underbrace{\frac{N_M}{\sqrt{N}}\left[\left(\frac{1}{N_M}\sum_{j\in I_m} \mathbf{1}(G_j = i)(\mathcal{Y}_j(\hat{\eta}_m)-\mathcal{Y}_j(\eta_0))\right) - \E[\mathbf{1}(G = i)(\mathcal{Y}(\hat{\eta}_m)-\mathcal{Y}(\eta_0))]\right]}_{R_1(m)} +&\\
        &\qquad\qquad\qquad\qquad\qquad\qquad\qquad\qquad\qquad\qquad\qquad \underbrace{\frac{N_M}{\sqrt{N}}\E[\mathbf{1}(G = i)(\mathcal{Y}(\hat{\eta}_m)-\mathcal{Y}(\eta_0))]}_{R_2(m)} & \\
        \coloneqq & R_1(m) + R_2(m)
    \end{align*}
    We define the event $\mathcal{E}_N$ as $\cap_k \left(\hat{\eta}_m \in \mathcal{T}_{N_M^c}\right)$, i.e. all $M$ nuisance function estimates falling into the high-probability neighborhood where Lemmas \ref{lem:bias_bound} and \ref{lem:mse_bound} apply. From union bound,
    \begin{equation*}
        1-\P(\mathcal{E}_N) \le \sum_k \P(\hat{\eta}_m \notin \mathcal{T}_{N_M^c}) \ \le K\epsilon_{N_M^c} = o_P(1) \qquad\qquad\qquad\qquad \because \epsilon_n = o_P(1)
    \end{equation*}
    
    Conditional on $\mathcal{E}_N$ and the data complementary to fold $m$, which we denote as $D_m$, we have for any $\epsilon > 0$,
    \begin{align*}
        &\P(|R_1(m)| \ge \epsilon|\mathcal{E}_N,D_m)&\\
        = &\P\left(\left|\left(\frac{1}{N_M}\sum_{j\in I_m} \mathbf{1}(G_j = i)(\mathcal{Y}_j(\hat{\eta}_m)-\mathcal{Y}_j(\eta_0))\right) - \right.\right.&\\
        & \qquad\qquad\qquad\qquad \left.\left.\phantom{\frac{1}{1}}\E[\mathbf{1}(G = i)(\mathcal{Y}(\hat{\eta}_m)-\mathcal{Y}(\eta_0))]\right| \ge \frac{\sqrt{N}}{N_M}\epsilon\middle| \mathcal{E}_N, D_m\right)&\\
        \le & \frac{N_M^2}{N\epsilon^2} var\left[\frac{1}{N_M}\sum_{j\in I_m} \mathbf{1}(G_j = i)(\mathcal{Y}_j(\hat{\eta}_m)-\mathcal{Y}_j(\eta_0))\middle|\mathcal{E}_N,D_m\right]&\text{Chebyshev ineq.}\\
        = & \frac{N_M}{N\epsilon^2} var\left[\mathbf{1}(G = i)(\mathcal{Y}(\hat{\eta}_m)-\mathcal{Y}(\eta_0))\middle|\mathcal{E}_N,D_m\right]&\\
        \le & \frac{1}{M\epsilon^2}\E\left[(\mathbf{1}(G = i)(\mathcal{Y}(\hat{\eta}_m)-\mathcal{Y}(\eta_0)))^2\middle|\mathcal{E}_N,D_m\right] &\\
        \le & \frac{1}{M\epsilon^2}\sup_{\eta\in\mathcal{T}_{N_M}}\E\left[\left(\mathbf{1}(G = i)(\mathcal{Y}(\eta)-\mathcal{Y}(\eta_0))\right)^2\middle|D_m\right] &\hat{\eta}_m \in \mathcal{T}_{N_M^c}\text{ under }\mathcal{E}_N\\
        \le & \frac{1}{M\epsilon^2}\sup_{\eta\in\mathcal{T}_{N_M}}\E\left(\mathbf{1}(G = i)(\mathcal{Y}(\eta)-\mathcal{Y}(\eta_0))\right)^2 &\text{Fold }m\text{ is independent of }D_m\\
        = &\frac{1}{M\epsilon^2}o_P(1) = o_P(1) &\text{Lem. }\ref{lem:mse_bound}
    \end{align*}
    
    Also, conditional on $\mathcal{E}_N$ and $D_m$
    \begin{align*}
        \left|R_2(m)\right| = &\left|\frac{N_M}{\sqrt{N}\sqrt{N_M^c}}\sqrt{N_M^c}\E[\mathbf{1}(G = i)(\mathcal{Y}(\hat{\eta}_m)-\mathcal{Y}(\eta_0))]\middle|\mathcal{E}_N, D_m\right|&\\
        = &\frac{1}{\sqrt{M-1}}\sqrt{N_M^c}\left|\E[\mathbf{1}(G = i)(\mathcal{Y}(\hat{\eta}_m)-\mathcal{Y}(\eta_0))]|\mathcal{E}_N, D_m\right|&\\
        \le & \frac{1}{\sqrt{M-1}}\sqrt{N_M^c}\sup_{\eta \in \mathcal{T}_{N_M^c}}\left|\E[\mathbf{1}(G = i)(\mathcal{Y}(\eta)-\mathcal{Y}(\eta_0))|D_m]\right|&\hat{\eta}_m \in \mathcal{T}_{N_M^c}\text{ under }\mathcal{E}_N\\
        = & \frac{1}{\sqrt{M-1}}\sqrt{N_M^c}\sup_{\eta \in \mathcal{T}_{N_M^c}}\left|\E[\mathbf{1}(G = i)(\mathcal{Y}(\eta)-\mathcal{Y}(\eta_0))]\right|&\text{Fold }m\text{ is independent of }D_m\\
        = &\frac{1}{\sqrt{M-1}}o_P(1) = o_P(1)& \text{Lem. } \ref{lem:bias_bound}
    \end{align*}
    
    Which implies that, conditional on $\mathcal{E}_N$ and $D_m$, $R_1(m)+R_2(m)=o_P(1)$. So for any $\epsilon > 0$:
    \begin{align*}
        &\P\left(\left|\frac{1}{\sqrt{N}}\sum_{j\in I_m} \mathbf{1}(G_j = i)(\mathcal{Y}_j(\hat{\eta}_m)-\mathcal{Y}_j(\eta_0))\right|>\epsilon\right)\\
        =&\P(|R_1(m)+R_2(m)|\ge\epsilon)\\
        =&\P(|R_1(m)+R_2(m)|\ge\epsilon|\mathcal{E}_N)\P(\mathcal{E}_N) + \P(|R_1(m)+R_2(m)|\ge\epsilon|\bar{\mathcal{E}}_N)(1-\P(\mathcal{E}_N))\\
        \le &\P(|R_1(m)+R_2(m)|\ge\epsilon|\mathcal{E}_N) + (1-\P(\mathcal{E}_N))\\
        = &\int\P(|R_1(m)+R_2(m)|\ge\epsilon|\mathcal{E}_N,D_m)d\P(D_m|\mathcal{E}_N) +(1-\P(\mathcal{E}_N))&\\
        = &o_P(1)+o_P(1) = o_P(1)
    \end{align*}
    and the lemma is proven.
\end{proof}

\section{Details on WHI Experiments}
\label{sec:whi_exp_details} 

We assess our algorithm on clinical trial data and observational data available from the Women’s Health Initiative (WHI). The RCTs were run by the WHI via 40 US clinical centers from 1993-2005 (1993-1998: enrollment + randomization; 2005: end of follow-up) on postmenopausal women aged 50-79 years, and the observational dataset was designed and run in parallel on a similar population. Note that this data is publicly available to researchers and requires only an application on BIOLINCC (https://biolincc.nhlbi.nih.gov/studies/whi\textunderscore ctos/).

\subsection{Data} 
\textbf{WHI RCT} – There are three clinical trials associated with the WHI. The RCT that we will be leveraging in this set of experiments is the Postmenopausal Hormone Therapy (PHT) trial, which was run on postmenopausal women aged 50-79 years who had an intact uterus. This trial included a total of $N_{HT} = 16608$ patients. The intervention of interest was a hormone combination therapy of estrogen and progesterone. Specifically, post-randomization, the treatment group was given 2.5 mg of medroxyprogesterone as well as 0.625 mg of estrogen a day. The control group was given a placebo. Finally, there are several outcomes that were tracked and studied in the principal analysis done on this trial \citep{rossouw2002risks}. These outcomes are of three broad categories: a) cardiovascular events, including coronary heart disease, which served as a primary endpoint b) cancer (e.g. endometrial, breast, colorectal, etc.), and c) fractures.

\textbf{WHI OS} – The observational study component of the WHI tracked the medical events and health habits of $N = 93676$ women. Recruitment for the study began in 1994 and participants were followed until 2005, i.e. a similar follow-up to the RCT. Follow-up was done in a similar fashion as in the RCT (i.e. patients would have annual visits, in addition to a “screening” visit, where they would be given survey forms to fill out to track any events/outcomes). Thus, the same outcomes, including cancers, fractures, and cardiovascular events, are tracked in the observational study.

\subsection{Outcome}
The outcome of interest in our analysis is a “global index”, which is a summary statistic of several outcomes, including coronary heart disease, stroke, pulmonary embolism, endometrial cancer, colorectal cancer, hip fracture, and death due to other causes. Events or outcomes are tracked for each patient, and are recorded as “day of event/outcome” in the data, where the initial time-point for follow-up is the same for both the RCT and OS. At a high level, the “global index” is essentially the minimum “event day” when considering all the previously mentioned events.

We binarize the “global index,” by choosing a time point, $t$, before the end of follow-up and letting $Y=1$ if the observed event day is before $t$ and $Y=0$ otherwise. Thus, we are looking at whether the patient will experience the event within some particular period of time or not. We set $t=7$ years. Note that we sidestep censorship of a patient before the threshold by defining the outcomes in the following way: $Y=1$ indicates that a patient is observed to have the event before the threshold, and $Y=0$ indicates that a patient is not observed to have the event before the threshold. We apply this binarization in the same way for both the RCT and OS. Extending our method to a survival analysis framing is beyond the scope of this paper, but an interesting direction for future work.

\subsection{Intervention}
Recall from above that the intervention studied in the RCT was 2.5mg of medroxyprogesterone + 0.625 mg of estrogen and the control was a placebo pill. The RCT was run as an “intention-to-treat” trial. To establish “treatment” and “control” groups in the OS, we leverage the annual survey data collected from patients and assign a patient to the treatment group if they confirm usage of both estrogen and progesterone in the first three years. A patient is assigned to the control group if they deny usage of both estrogen and progesterone in the first three years. We exclude a patient from the analysis if she confirms usage of one and not the other OR if the field in the survey is missing OR if they take some other hormone therapy. We end up with a total of $N_{obs} = 33511$ patients.

\subsection{Data Processing + Covariates}
We use only covariates that are measured both in the RCT and OS to simplify the analysis. Because this information is gathered via the same set of questionnaires, they each indicate the same type of covariate. In other words, there is consistency of meaning across the same covariates across the RCT and the OS. We end up with a total of 1576 covariates.

\subsection{Details of Experimental Setup}
We give a more detailed exposition of the steps in our experimental workflow, which were described in brief in the main paper. 

\begin{itemize}
    \item \textbf{Step 0}: \textit{Replicate the principal results from the PHT trial, given in Table 2 of \citep{rossouw2002risks}, using the WHI OS data.} In this step, we fit a doubly robust estimator of the style given in Appendix \ref{sec:experimentdetails}.
    
    \item \textbf{Step 1}: \textit{While treating the WHI OS dataset as the “unbiased” observational dataset (hence the need for Step 0), simulate additional “biased” observational datasets by inducing bias into the WHI OS.} We construct four additional “biased” datasets (for a total of five observational datasets, including the WHI OS dataset), where we use the following procedure to induce selection bias – of the people who were not exposed to the treatment and did not end up getting the event, we drop each person with some probability, $p$. We set $p = [0.1,0.3,0.5,0.7]$ to get the four additional observational datasets.

    This type of selection bias may reflect the following clinical scenario: consider a patient who is relatively healthy who does not end up taking any hormone therapy. This patient might enroll initially in the OS, but may drop out or stop responding to the surveys. If the committee running the study does not explicitly account for this drop-out rate, then the resultant study will suffer from selection bias. \citep{banack2019investigating} detail additional examples of selection bias that can occur in observational studies.
    Importantly, this part is the only part of our setup that involves any simulation. However, in order to properly evaluate our method, we need to know which datasets are biased and unbiased in our set. Thus, we opt to simulate the bias.

    \item \textbf{Step 2}:  \textit{Run our procedure over “multiple tasks,” generating confidence intervals on the treatment effect for different subgroups.} To do so, we compile a list of covariates, taking both from \citep{schnatz2017effects} as well as covariates with high feature importance in both the propensity score model and response surface model from the estimator in Step 0. We generate all pairs from this list and use each pair to generate four subgroups. We treat two of the subgroups as validation subgroups and two of them as extrapolated subgroups in that we “hide” the RCT data in those subgroups when fitting our doubly robust transported estimator. (This gives us the benefit of knowing the RCT result for the extrapolated subgroups, which is useful in evaluation). Pairs that do not have enough support (threshold of 400 observations) in each group are removed. The total number of “tasks” (or covariate pairs) that we have is 592 (and therefore 2368 subgroups).    
    
    \item \textbf{Step 3}: \textit{Evaluate ExPCS (our method), ExOCS, Simple, and Meta-Analysis for each of the covariate pairs.} Additionally, we evaluate an “oracle” method, which always selects only the original observational study (i.e. the base WHI OS to which we have not added any selection bias) and reports the interval estimate computed on this study. To evaluate these methods, we will treat the RCT point estimates as “correct.” For each, we compute the following metrics: \textbf{Length} – length of the confidence interval for the subgroup; \textbf{Coverage} – percentage of tasks for which the method’s interval covers the RCT point estimate; \textbf{Unbiased OS Percentage} – across all tasks, the percentage at which our approach retains the unbiased study after the falsification step.

\end{itemize}

Note that we utilize sample splitting when running the above procedure. Namely, we use $50\%$ of the data as a “training” set, where we experiment with different classes of covariates and different types of bias, and then reserve $50\%$ of the data as a “testing” set, on which we do the final run of the analysis and report results. All nuisance functions in the doubly robust estimator are fit with a Gradient Boosting Classifier with significant regularization. In practice, we found that any highly-regularized tree-based model works well. 

\subsection{Covariate List for Task Generation}
\label{sec:cov-task}

Below is the list of covariates used to generate the tasks in \textbf{Step 2} of our experiment: 
\begin{itemize}
    \item ALCNOW (current alcohol user)
    \item BMI $\le 30$
    \item BLACK 
    \item SMOKING (current smoker)
    \item DIAB (diabetes ever)
    \item HYPT (hypertension ever)
    \item BRSTFEED (breastfeeding) 
    \item MSMINWK $\le 106$ (minutes of moderate to strenuous activity per week)
    \item BRSTBIOP (breast biopsy done)
    \item RETIRED
    \item EMPLOYED
    \item OC (oral contraceptive use ever)
    \item LIVPRT (live with husband or partner)
    \item MOMALIVE (natural mother still alive)
    \item LATREGION-Northern $> 40$ degrees north
    \item BKBONE (broke bone ever)
    \item NUMFALLS
    \item GRAVID (gravidity)
    \item AGE $\le 64$
    \item ANYMENSA $\le 51$ (age at last bleeding)
    \item MENOPSEA $\le 50$ (age at last regular period)
    \item MENO $\le$51 (age at menopause)
    \item LSTPAPDY (days from randomization to last pap smear)
    \item BMI $\le 27.7$
    \item TMINWK $\le 191$ (minutes of recreational exercise per week) 
    \item HEIGHT $\le 161$
    \item WEIGHT $\le 72$
    \item WAIST $\le 86$
    \item HIP $\le 105$
    \item WHR $\le 0.81$ (waist to hip ratio)
    \item TOTHCAT (HRT duration by category)
    \item MEDICARE (on medicare)
    \item HEMOGLBN $\le 13$
    \item PLATELET $\le 244$
    \item WBC $\le 6$
    \item HEMATOCR $\le 40$
\end{itemize}

\section{Details on Semi-Synthetic Experiment (Data Generation and Model Hyperparameters)}
\label{sec:exp_details}
\subsection{Data Generation}
For each simulated dataset, we generate 1 RCT and $K$ observational studies. The RCT is assumed to have covariate values identical to the IHDP dataset but is restricted to infants with married mothers. For the observational studies, we resample the rows of the IHDP dataset to the desired sample size $n = rn_0$. The covariate distribution of the observational studies are made different from the RCT by weighted sampling, with the relative weights set as
\[w = 0.8^{\mathbf{1}(\text{male infant})+\mathbf{1}(\text{mother smoked})+\mathbf{1}(\text{mother worked during pregnancy})}\]
Then, to introduce confounding (in the observational data), we generate $m_c$ continuous confounders and $m_b$ binary confounders. Each continuous confounder is drawn from a mixture of $0.5\cN(0,1) + 0.5\cN(3,1)$ in the RCT and $(0.25+0.5A)\cN(3,1) + (0.75-0.5A)\cN(0,1)$ in the observational studies, where $A$ is the treatment indicator. Similarly, each binary confounder is drawn from $\text{Bern}(0.5)$ in the RCT and $\text{Bern}(0.25+0.5A)$ in the observational studies. In the following, we denote the covariate vector as $X\in \mathbb{R}^{m_x}$ where $m_x = 28$ is the number of covariates in the IHDP dataset, and the generated confounder vector as $Z\in \mathbb{R}^{(m_c+m_b)}$. For brevity, we also denote the vector $(A, X^\top)^\top$ as $\tilde{X}$.

For outcome simulation in the datasets, we modify \textit{response surface B} from \citet{hill2011bayesian} to account for additional confounding variables. We set the following counterfactual outcome distributions:
\begin{align*}
Y_0 &\sim \cN\left(\exp\left[\left(\tilde{X}+\frac{1}{2}\mathbf{1}\right)^\top\beta\right] + Z^\top\gamma, 1\right)\\
Y_1 &\sim \cN(\tilde{X}^\top\beta + Z^\top\gamma + \omega, 1),
\end{align*}
where $\mathbf{1} \in \mathbb{R}^{(m_x+1)}$ is vector of ones, $\beta\in \mathbb{R}^{(m_x+1)}$ is a vector where each element is randomly sampled from $(0, 0.1, 0.2, 0.3, 0.4)$ with probabilities $(0.6, 0.1, 0.1, 0.1, 0.1)$, $\gamma\in \mathbb{R}^{(m_c+m_b)}$ is a vector where each element is randomly sampled from $(0.1, 0.2, 0.5, 0.75, 1)$ with uniform probability, and $\omega = 23$ is a constant chosen to limit the size of the GATEs. The observed outcome is then $Y \coloneqq AY_1 + (1-A)Y_0$. We then conceal a number of confounders, chosen in order from the highest to lowest weighted, from each observational study to mimic the scenario of unobserved confounding. 
The number of concealed confounders in each observational study is denoted as $\mathbf{c_z} = (c_{z1}, c_{z2},..., c_{zK})$.

\subsection{Hyperparameters}
\subsubsection{Logistic regression}
\begin{tabular}{c|c}
    Hyperparametes & Value set \\
    \hline
    Penalty type & $\ell_2$ \\
    Penalty coefficient & $\{1, 0.1, 0.01, 0.001\}$
\end{tabular}

\subsubsection{Multilayer perceptron regression}
\begin{tabular}{c|c}
    Hyperparametes & Value set \\
    \hline
    \makecell{$\#$ of hidden layers \\ and $\#$ of perceptrons} & $[1,(100)], [2,(50,50)], [2,(25,25)]$ \\
    Activation function & ReLU, tanh\\
    Solver & Adam \\
    Alpha & $(1, 0.1, 0.01, 0.001, 0.0001)$\\
    Learning rate & 0.001\\
    $\#$ of epochs & $(250,500)$
\end{tabular}

\section{Additional Semi-Synthetic Experimental Results}
\label{sec:add_experiments}
\begin{figure}[h!]
     \begin{subfigure}[t]{0.55\textwidth}
        \adjustbox{valign = c}{
            \includegraphics[width=\linewidth]{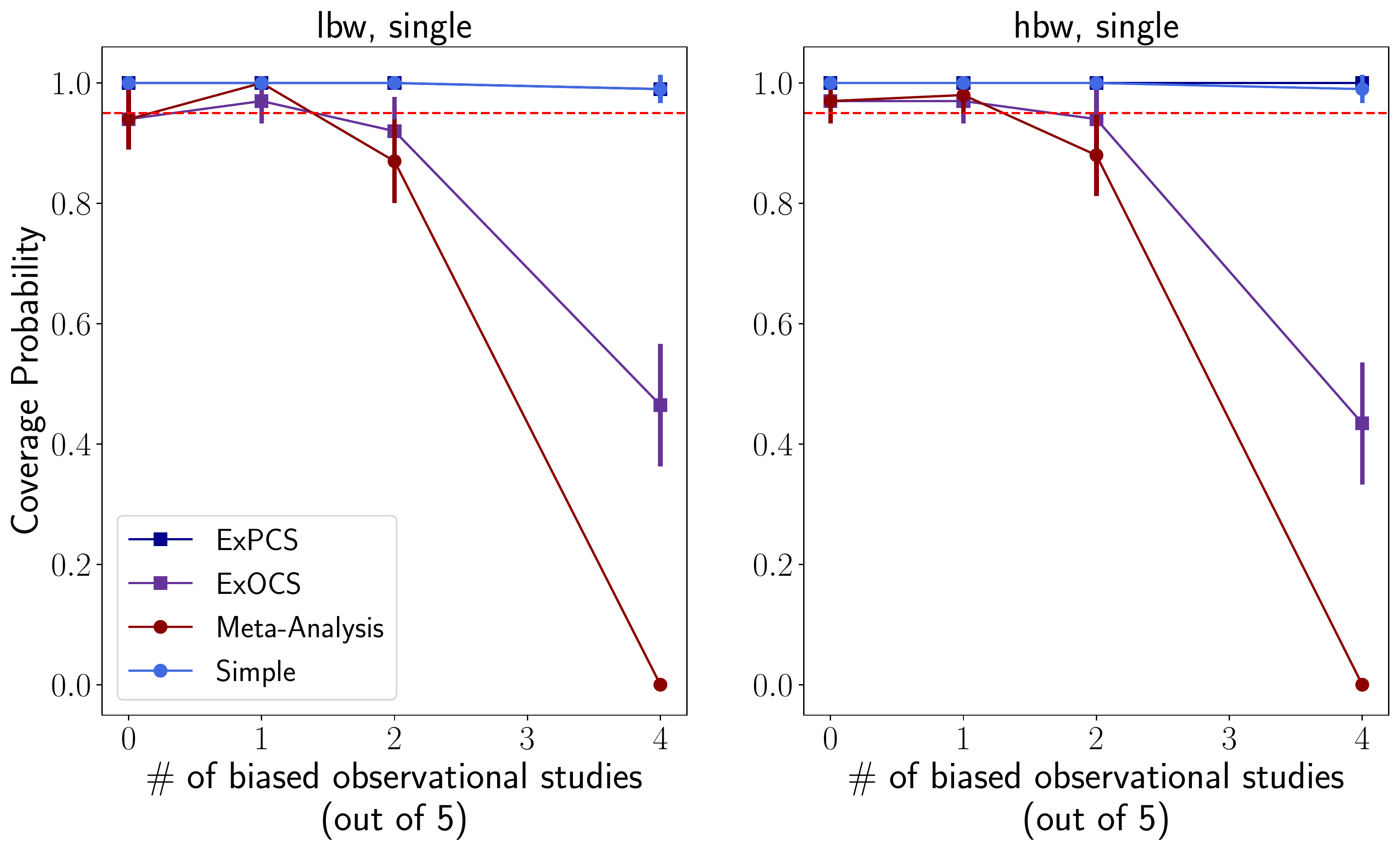}
        }
    \end{subfigure}
    \begin{subfigure}[t]{0.4\textwidth}
        \adjustbox{valign = c}{
            \centering
            \renewcommand\tabularxcolumn[1]{m{#1}}%
            \renewcommand\arraystretch{1.3}
            \setlength\tabcolsep{2pt}%
            \small 
            \begin{tabular}{cccccc}
                \toprule
                \multicolumn{6}{c}{Mean width of 95\% confidence intervals}\\
                \multicolumn{6}{c}{(4 biased studies)}\\
                \midrule
                && ExPCS & ExOCS & Meta & Simple \\
                \cmidrule{1-1} \cmidrule{3-6}
                    \textbf{LS} &&  5.29 &    3.55 &  2.34 & 5.51\\
                    \textbf{HS} &&  5.84 & 3.76 &  2.48 & 6.15 \\
                \bottomrule
            \end{tabular}
        }
    \end{subfigure}
    
    \caption{Coverage probabilities of confidence intervals shown as a function of the number of biased observational datasets (out of five). In the 4/5 biased studies case, the average interval widths for each approach is shown for two subgroups. We observe that \textit{ExPCS} achieves the best balance of interval width and coverage.}
    \label{fig:Biased}
\end{figure}

\begin{table}[h]
    \centering
    \begin{tabular}{ccccc}
        \toprule
        Upsampling ratio & 1.0 & 3.0 & 5.0 & 10.0\\
        \midrule 
            \thead{$P$(selecting \\ biased study)}    &    0.98          &  0.80 & 0.68    & 0.60       \\
        \bottomrule
    \end{tabular}
    \vspace{2mm}
    \caption{$P$(selecting biased study) as a function of upsampling ratio}
    \label{tab:select_bias}
\end{table}

\textbf{An analysis of including biased observational studies}: In \cref{fig:Biased}, we study coverage probability and width of confidence intervals in the presence of biased studies. \textit{Meta-Analysis} intervals approach zero coverage probability as the number of biased studies increases. Indeed, a fundamental assumption of this approach is that differences between estimates are only due to random variation, leading to poorer coverage probability when there are more biased studies. \textit{ExOCS} allows for elimination of biased studies in principle through falsification, resulting in improved coverage. However, it does not maintain the desired threshold of coverage ($95\%$), since biased estimators may still be included after falsification either due to chance or by being underpowered. Finally, \textit{ExPCS} and \textit{Simple Union} intervals have good coverage across the board, but as before, \textit{ExPCS} results in narrower intervals. 

Overall, we find that our method is robust to biased studies, yielding a good balance between coverage and width. In the case where one has adequate power, \textit{ExOCS} could be a reasonable alternative to get narrower intervals for a sacrifice in coverage (even in the presence of biased studies). However, this implicitly assumes that an estimator consistent for the validation effects will be consistent for the extrapolated effects. If this assumption does not hold, then \textit{ExOCS} will have poor coverage.

\textbf{Biased estimator selection}: In \cref{tab:select_bias}, we see that the probability of selecting the biased estimator goes down with increasing sample size of the observational studies, reflected by the increasing sample size ratio, $r$. This result validates our intuition that our method is more useful and results in more precise estimates of bias as we obtain more observational samples.

\section{Additional related work}\label{sec:app_related_work}
\textbf{Combining observational and experimental data} 
Prior work has sought to combine RCTs and observational studies for the purpose of more precise estimation of treatment effects~\citep{rosenman2020combining, rosenman2021propensity}, or for the purpose of generalizing or transporting estimates from RCTs to observational populations when overlap holds between the two (see \citet{Degtiar2021-bb} for a recent review). In contrast, our work is motivated by settings where there are populations in the observational studies who are not at all represented in trials, e.g., due to a lack of eligibility.  \citet{Kallus2018-ic} also seek to combine observational and experimental data to extrapolate beyond tfhe support of an RCT\@. They propose to learn a CATE function on observational data, and then learn a parametric additive correction term on the sample that overlaps between the RCT and observational data.  In contrast to this approach, we do not assume that confounding can be corrected for, and instead seek to choose an observational estimate (if one exists) that is already consistent for each sub-population.

\textbf{Calibrating observational confidence intervals} 
An alternative method for calibration of confidence intervals for observational studies makes use of negative controls \citep{Lipsitch2010-kg}, such as drug-outcome pairs known to have no causal relationship.  Methods range using  uses these negative controls to form an empirical null distribution for callibrated $p$-values\citep{Schuemie2014-je}, and \citet{Schuemie2018-gi} extend this approach to calibration of confidence intervals. These techniques have been used in several large-scale observational studies such as the LEGEND-HTN study for comparing antihypertensive drugs \citep{Suchard2019-cf}.  By contrast, our method does not assume the existence of negative controls, but instead uses a form of positive control (i.e., validation effects), one that is based on the same underlying treatment as the extrapolated effect.
\end{document}